\documentclass{article}

\usepackage[final]{neurips_2023}




\usepackage[utf8]{inputenc} 
\usepackage[T1]{fontenc}    
\usepackage{hyperref}       
\usepackage{url}            
\usepackage{booktabs}       
\usepackage{amsfonts}       
\usepackage{nicefrac}       
\usepackage{microtype}      
\usepackage{xcolor}         
\usepackage{graphicx}
\usepackage{subfigure}

\usepackage{amsmath}
\usepackage{amsthm}
\usepackage{wrapfig}
\usepackage{color}
\usepackage{caption}
\usepackage{diagbox}
\usepackage{tabularx}
\usepackage{natbib}

\let\hat\widehat
\let\tilde\widetilde




\usepackage{amssymb}
\usepackage{mathtools}
\usepackage{amsthm}

\usepackage[capitalize,noabbrev]{cleveref}

\theoremstyle{plain}
\newtheorem{theorem}{Theorem}[section]
\newtheorem{proposition}[theorem]{Proposition}

\theoremstyle{definition}
\newtheorem{definition}[theorem]{Definition}

\theoremstyle{remark}

\usepackage{smile}

\def\##1\#{\begin{align}#1\end{align}}
\def\$#1\${\begin{align*}#1\end{align*}}

\newcommand{\la}{\langle}
\newcommand{\ra}{\rangle}
\usepackage[textsize=tiny]{todonotes}
\usepackage{bbm}

%

\title{Unsupervised Behavior Extraction via Random Intent Priors}

\author{%
  \hspace{-0.14in}
  Hao Hu$^1$\footnotemark[1], Yiqin Yang$^2$\footnotemark[1], Jianing Ye$^1$, Ziqing Mai$^1$, Chongjie Zhang$^3$ \\
  \hspace{-0.14in}
  $^1$Institute for Interdisciplinary Information Sciences, Tsinghua University \\
  \hspace{-0.14in}
  $^2$Department of Automation, Tsinghua University \\
\hspace{-0.14in}
  $^3$Department of Computer Science \& Engineering, Washington University in St. Louis \\
  \hspace{-0.14in}
  \texttt{\{huh22, yangyiqi19, yejn21, maizq21\}@mails.tsinghua.edu.cn}\\
  \hspace{-0.14in}
  \texttt{chongjie@wustl.edu} \\
}

\begin{document}

\maketitle

\renewcommand{\thefootnote}{\fnsymbol{footnote}}
\footnotetext[1]{Equal contribution.}

\begin{abstract}
   Reward-free data is abundant and contains rich prior knowledge of human behaviors, but it is not well exploited by offline reinforcement learning (RL) algorithms.
  In this paper, we propose \textbf{UBER}, an unsupervised approach to extract useful behaviors from offline reward-free datasets via diversified rewards. 
  UBER assigns different pseudo-rewards sampled from a given prior distribution to different agents to extract a diverse set of behaviors, and reuse them as candidate policies to facilitate the learning of new tasks. Perhaps surprisingly, we show that rewards generated from random neural networks are sufficient to extract diverse and useful behaviors, some even close to expert ones. 
  We provide both empirical and theoretical evidence to justify the use of random priors for the reward function. Experiments on multiple benchmarks showcase UBER's ability to learn effective and diverse behavior sets that enhance sample efficiency for online RL, outperforming existing baselines.
  By reducing reliance on human supervision, UBER broadens the applicability of RL to real-world scenarios with abundant reward-free data.

\end{abstract}

\section{Introduction}

Self-supervised learning has made substantial advances in various areas like computer vision and natural language processing \citep{openai2023gpt4, caron2021emerging, wang2020self} since it leverages large-scale data without the need of human supervision. Offline reinforcement learning has emerged as a promising framework for learning sequential policies from pre-collected datasets~\citep{kumar2020conservative, fujimoto2021minimalist, ma2021offline, yang2021believe, hu2022role}, but it is not able to directly take advantage of unsupervised data, as such data lacks reward signal for learning and often comes from different tasks and different scenarios. Nonetheless, reward-free data like experience from human players and corpora of human conversations is abundant while containing rich behavioral information, and incorporating them into reinforcement learning has a huge potential to help improve data efficiency and achieve better generalization.

How can we effectively utilize the behavioral information in unsupervised offline data for rapid online learning? In online settings, \citet{eysenbach2018diversity,sharma2019dynamics}  investigate the extraction of diverse skills from reward-free online environments by maximizing diversity objectives. Meanwhile, in offline settings, a rich body of literature exists that focuses on leveraging the dynamic information from reward-free datasets~\citep{yu2022leverage,hu2023provable}. However, the former approach is not directly applicable to offline datasets, while the latter approach overlooks the behavioral information in the dataset. ~\citet{ajay2020opal,singh2020parrot} employ generative models to extract behaviors in the offline dataset. However, using a behavior-cloning objective is conservative in nature and has a limited ability to go beyond the dataset to enrich diversity.

To bridge this gap, we propose \textbf{U}nsupervised \textbf{B}ehavior \textbf{E}xtraction via \textbf{R}andom Intent Priors (UBER), a simple and novel approach that utilizes random intent priors to extract useful behaviors from offline reward-free datasets and reuse them for new tasks, as illustrated in Figure~\ref{fig: illustration}. Specifically, our method samples the parameters of the reward function from a prior distribution to generate different intents for different agents. These pseudo-rewards encourage the agent to go beyond the dataset and exhibit diverse behaviors. It also encourages the behavior to be useful since each behavior is the optimal policy to accomplish some given intent. The acquired behaviors are then applied in conjunction with policy expansion~\citep{zhang2023policy} or policy reuse~\citep{zhang2022cup} techniques to accelerate online task learning. Surprisingly, we observe that random intent prior is sufficient to produce a wide range of useful behaviors, some of which even approach expert performance levels.
We provide theoretical justifications and empirical evidence for using random intent priors in Section~\ref{sec:theory} and~\ref{sec:exp}, respectively. 

Our experiments across multiple benchmarks demonstrate UBER's proficiency in learning behavior libraries that enhance sample efficiency for existing DRL algorithms, outperforming current baselines. By diminishing the reliance on human-designed rewards, UBER expands the potential applicability of reinforcement learning to real-world scenarios.

In summary, this paper makes the following contributions: (1) We propose UBER, a novel unsupervised RL approach that samples diverse reward functions from intent priors to extract diverse behaviors from reward-free datasets; (2) We demonstrate both theoretically and empirically that random rewards are sufficient to generate diverse and useful behaviors within offline datasets; (3) Our experiments on various tasks show that UBER outperforms existing unsupervised methods and enhances sample efficiency for existing RL algorithms.

\begin{figure}[t]
    \centering
    \includegraphics[scale=0.42]{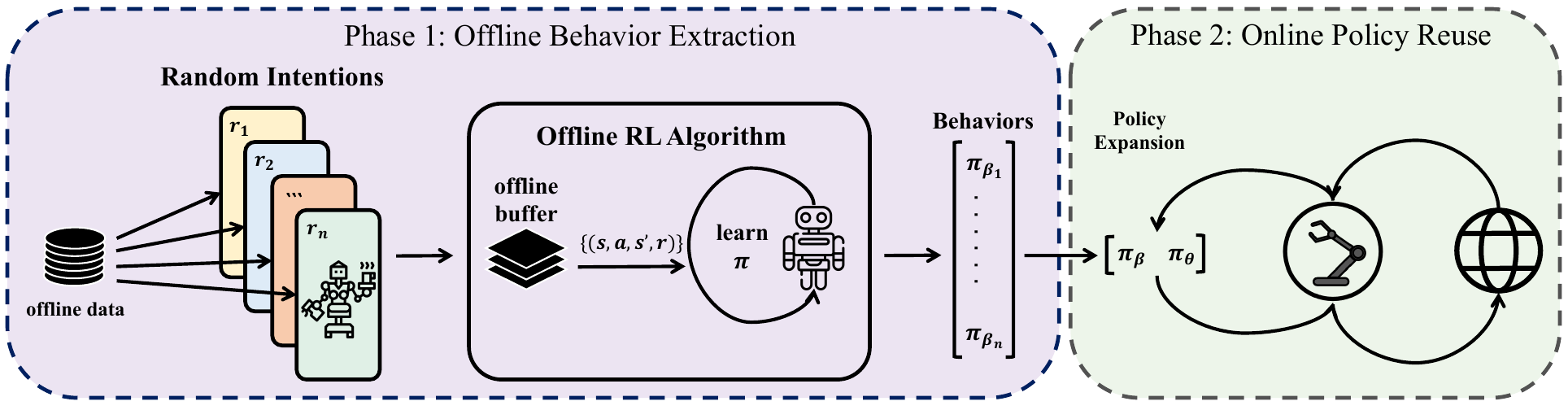}

    \caption{The framework of UBER. The procedure consists of two phases. In the first offline phase, we assign different reward functions to different agent to extract diverse and useful behaviors from the offline dataset. In the second phase, we reuse previous behavior by adding them to the candidate policy set to accelerate online learning for the new task.
    }
    \label{fig: illustration}
\end{figure}

\subsection{Related Works}


\paragraph{Unsupervised Offline RL.} \citet{yu2021conservative} considers reusing data from other tasks but assumes an oracle reward function for the new tasks. \citet{yu2022leverage} and \citet{hu2023provable} utilize reward-free data for improving offline learning but assume a labeled offline dataset is available for learning a reward function. \citet{yebecome, ghosh2023reinforcement} consider the setting where reward and action are absent. We focus on the reward-free setting and leverage offline data to accelerate online learning.

\paragraph{Unsupervised Behavior Extraction.} Many recent algorithms have been proposed for intrinsic behavioral learning without a reward. Popular methods includes prediction
methods~\citep{burda2018large,pathak2017curiosity,pathak2019self}, maximal entropy-based methods~\citep{campos2021beyond,liu2021behavior,liu2021aps,mutti2020policy,seo2021state,yarats2021reinforcement}, and maximal mutual
information-based methods~\citep{eysenbach2018diversity,hansen2019fast,liu2021aps,sharma2019dynamics}. However, they require an online environment to achieve the diversity objective. ~\citet{ajay2020opal,singh2020parrot} employ generative models to extract behaviors in the offline dataset but have a limited ability to enrich diversity.

\paragraph{Offline-to-online learning and policy reuse.} Offline-to-online RL has been popular recently since it provides a promising paradigm to improve offline further learned policies. \citet{nair2020awac} is among the first to propose a direct solution to offline-to-online RL. \citep{lee2022offline} proposes to use a balanced replay buffer and ensembled networks to smooth the offline-to-online transition; \citet{zhang2023policy} proposes to use the expanded policy for offline-to-online RL. \citet{ball2023efficient} proposes to reuse the off-policy algorithm with modifications like ensembles and layer normalization. ~\citet{zhang2022cup} proposes a critic-guided approach to reuse previously learned policies. \cite{zhang2023policy} proposes a similar and simpler way to reuse previous policies.

\paragraph{Imitation Learning} Our method is also related to imitation learning~\citep{hussein2017imitation}.  Popular imitation learning methods include behavior cloning methods \citet{pomerleau1988alvinn} and inverse RL  methods~\cite {ho2016generative,xiao2019wasserstein,dadashi2020primal}. However, they assume the demonstration in the dataset to be near optimal, while our method works well with datasets of mediocre quality.


\section{Preliminaries}
\subsection{Episodic Reinforcement Learning}
We consider finite-horizon episodic Markov Decision Processes (MDPs), defined by the tuple $(\cS,\cA,H,\cP,r),$ where $\cS$ is a state space, $\cA$ is an action space, $H$ is the horizon and $\cP=\{\cP_h\}_{h=1}^H, r=\{r_h\}^H_{h=1}$ are the transition function and reward function, respectively.\par

A policy $\pi =\{\pi_h\}_{h=1}^H$ specifies a decision-making strategy in which the agent chooses its actions based on the current state, i.e., $a_h \sim \pi_h(\cdot \given s_h)$. The value function $V^\pi_{h}: \cS \rightarrow \RR$ is defined as the sum of future rewards starting at state $s$ and step $h\in [H]$, and similarly, the Q-value function, i.e.
\begin{equation}
V^\pi_{h}(s) = \EE_{\pi}\Big[ \sum_{t=h}^{H} r_t(s_t, a_t)\Biggiven s_h=s  \Big],\quad Q^\pi_{h}(s,a) = \EE_\pi\Big[\sum_{t=h}^{H} r_h(s_t, a_t)\Biggiven s_h=s, a_h=a  \Big].
\label{eq:def_value_fct}
\end{equation}
where the expectation is with respect to the trajectory $\tau$ induced by policy $\pi$. 


We define the Bellman operator as
\begin{align}
(\BB_h f)(s,a) &= \EE\bigl[r_h(s, a) + f(s')\bigr],
\label{eq:def_bellman_op}
\end{align}
for any $f:\mathcal{S}\rightarrow \mathbb{R}$ and $h\in [H]$.
The optimal Q-function~$Q^*$, optimal value function~$V^*$ and optimal policy $\pi^*$ are related by the Bellman optimality equation
\begin{align}
 V^*_{h}(s) = \max_{a\in \cA}Q^*_{h}(s,a),\quad Q^*_{h}(s,a) = (\BB_h V^*_{h}) (s,a),\quad \pi^*_{h} (\cdot \given s)=\argmax_{\pi}\EE_{a\sim \pi}{Q^*_{h}(s, a)}.
\label{eq:dp_optimal_values}
\end{align}

We define the suboptimality, as the performance difference of the optimal policy $\pi^*$ and the current policy $\pi_k$ given the initial state $s_1=s$. That is  
\begin{equation}
    \text{SubOpt}(\pi;s) = V^{\pi^*}_{1}(s) - V^{\pi}_{1}(s). \notag
    \label{eq:def_regret_2}
\end{equation}

\subsection{Linear Function Approximation}
To derive a concrete bound, we consider the \textit{linear MDP}~\citep{jin2020provably,jin2021pessimism} as follows, where the transition kernel and expected reward function are linear with respect to a feature map.

\begin{definition}[Linear MDP] \label{assumption:linear}
    $\rm{MDP}(\cS, \cA, H, \PP, r)$ is a \emph{linear MDP} with a feature map $\phi: \cS \times \cA \rightarrow \RR^d$, if for any $h\in [H]$, there exist $d$ \emph{unknown} (signed) measures $\mu_h = (\mu_h^{(1)}, \ldots, \mu_h^{(d)})$ over $\cS$ and an \emph{unknown} vector $z_h \in \RR^d$, such that 
     for any $(s, a) \in \cS \times \cA$, we have 
    \begin{align}\label{eq:linear_transition}  
    \PP_h(\cdot\given s, a) = \la\phi(s, a), \mu_h(\cdot)\ra, \qquad 
    r_h(s, a) = \la\phi(s, a), z_h\ra.  
    \end{align}
    Without loss of generality, we assume $\norm{\phi(s, a)} \le 1$ for all $(s,a ) \in \cS \times \cA$, and $\max\{\norm{\mu_h(\cS)}, \norm{z_h}\} \le \sqrt{d}$ for all $h \in [H]$.
\end{definition}

When emphasizing the dependence over the reward parameter $z$, we also use $r_h(s,a,z), V^{\pi}_h(s,z)$ and $V^*_h(s,z)$ to denote the reward function, the policy value and the optimal value under parameter $z$, respectively.

\section{Method}

\emph{How can we effectively leverage abundant offline reward-free data for improved performance in online tasks?} Despite the absence of explicit supervision signals in such datasets, they may contain valuable behaviors with multiple modes. These behaviors can originate from human expertise or the successful execution of previous tasks. To extract these behaviors, we assume that each behavior corresponds to specific intentions, which are instantiated by corresponding reward functions. Then, by employing a standard offline RL algorithm, we can extract desired behavior modes from the offline dataset using their associated reward functions. Such behavior can be reused for online fine-tuning with methods such as policy expansion~\citep{zhang2023policy} or policy reuse~\citep{zhang2022cup}. 

Then, the question becomes how to specify possible intentions for the dataset. Surprisingly, we found that employing a random prior over the intentions can extract diverse and useful behaviors effectively in practice, which form the offline part of our method. The overall pipeline is illustrated in Figure~\ref{fig: illustration}. Initially, we utilize a random intention prior to generating diverse and random rewards, facilitating the learning of various modes of behavior. Subsequently, standard offline algorithms are applied to extract behaviors with generated reward functions. Once a diverse set of behaviors is collected, we integrate them with policy reuse techniques to perform online fine-tuning.

Formally, we define $\cZ$ as the space of intentions, where each intention $z\in \cZ$ induces a reward function $r_z: \cS\times\cA \rightarrow \RR$ that defines the objective for an agent. Let the prior over intentions be $\beta\in \Delta(\cZ)$. In our case, we let $\cZ$ be the weight space $\cW$ of the neural network and $\beta$ be a distribution over the weight $w\in \cW$ of the neural network $f_w$ and sample $N$ reward functions from $\beta$ independently. Specifically, we have 

\$
r_i(s,a) = f_{w_i}(s,a),~ w_i \sim \beta, ~\forall i = 1,\ldots, N.
\$

We choose $\beta$ to be a standard initialization distribution for neural networks~\citep{glorot2010understanding,he2015delving}. 
We choose TD3+BC~\citep{fujimoto2021minimalist} as our backbone offline algorithm, but our framework is general and compatible with any existing offline RL algorithms. Then, the loss function for the offline phase can be written as
\begin{align}
    \cL^{\rm offline}_{\rm critic}(\theta_i) &= \mathbb{E}_{(s,a,r_i,s')\sim\mathcal{D}_{i}^{\rm off}}\left[\left(r_i+\gamma Q_{\theta'_{i}}(s', \tilde{a})-Q_{\theta_{i}}(s, a)\right)^2 \right], \label{eqn:offline_critic}\\
    \cL^{\rm offline}_{\rm actor}(\phi_i) &= -\mathbb{E}_{(s,a)\sim\mathcal{D}_{i}^{\rm off}}\left[\lambda Q_{\theta_{i}}(s,\pi_{\phi_i}(s)) - (\pi_{\phi_i}(s)-a)^2 \right],\label{eqn:offline_actor}
\end{align}
where $\tilde{a}=\pi_{\phi'_i}(s')+\epsilon$ is the smoothed target policy and $\lambda$ is the weight for behavior regularization. We adopt a discount factor $\gamma$ even for the finite horizon problem as commonly used in practice~\citep{fujimoto2018addressing}.

As for the online phase, we adopt the policy expansion framework~\citep{zhang2023policy} to reuse policies. We first construct an expanded policy set $\tilde{\pi}=[\pi_{\phi_1},\ldots, \pi_{\phi_N}, \pi_w]$, where $\{{\phi_i}\}_{i=1}^N$ are the weights of extracted behaviors from offline phase and $\pi_w$ is a new randomly initialized policy. We then use the critic as the soft policy selector, which generates a distribution over candidate policies as follows: 

\#
\label{soft_policy_expansion}
P_{\tilde{\pi}}[i] = \frac{\exp {\left(\alpha \cdot Q(s,\tilde{\pi}_i(s))\right)}}{\sum_{j} \exp {\left(\alpha \cdot Q(s,\tilde{\pi}_j(s)\right))}}, \forall i \in\{1, \cdots, N+1\},
\#

where $\alpha$ is the temperature. We sample a policy $i \in\{1, \cdots, N+1\}$ according to $P_{\tilde{\pi}}$ at each time step and follows $\pi_{i}$ to collect online data. The policy $\pi_w$ and $Q_\theta$-value network undergo continuous training using the online RL loss, such as TD3~\citep{fujimoto2018addressing}):

\begin{align}
    \cL^{\rm online}_{\rm critic}(\theta) &= \mathbb{E}_{(s,a,r,s')\sim\mathcal{D}^{\rm on}}\left[\left(r+\gamma Q_{\theta'}(s', \tilde{a})-Q_{\theta}(s, a)\right)\right]^2, \label{eqn:online_critic}\\
    \cL^{\rm online}_{\rm actor}(w) &= -\mathbb{E}_{(s,a)\sim\mathcal{D}^{\rm on}}\left[ Q_{\theta}(s,\pi_{w}(s))\right].\label{eqn:online_actor}
\end{align}

The overall algorithm is summarized in Algorithm~\ref{alg: offline rl} and \ref{alg: online rl}.
We highlight elements important to our approach in \textcolor{purple}{purple}.

\begin{algorithm}[ht]
    \caption{Phase 1: Offline Behavior Extraction}
    \label{alg: offline rl}
    \begin{algorithmic}[1]
        \STATE {\bf Require}: Behavior size $N$, offline reward-free dataset $\cD^{\rm off}$, prior intention distribution $\beta$
        \STATE Initialize parameters of $N$ independent offline agents $\{Q_{\theta_i}, \pi_{\phi_i}\}_{i=1}^{N}$
        \FOR{$i=1, \cdots, N$}
        \STATE \textcolor{purple}{Sample a reward function from priors $z_i \sim \beta$}
        \STATE \textcolor{purple}{Reannotate $\cD^{\rm off}$ as $\cD_{i}^{\rm off}$ with reward $r_{z_i}$}
        \FOR{each training iteration}
        \STATE Sample a random minibatch $\{\tau_j\}_{j=1}^{B}\sim \cD_{i}^{\rm off}$
        \STATE Calculate $\cL_{\rm critic}^{\rm offline}(\theta_i)$ as in Equation~\eqref{eqn:offline_critic} and update $\theta_i$
        \STATE Calculate $\cL_{\rm actor}^{\rm offline}(\phi_i)$ as in Equation~\eqref{eqn:offline_actor} and update $\phi_i$
        \ENDFOR
        \ENDFOR\\
        \STATE \textbf{Return} $ \{\pi_{\phi_{i}}\}_{i=1}^{N}$
    \end{algorithmic}
    \end{algorithm}
    
    \begin{algorithm}[ht]
        \caption{Phase 2: Online Policy Reuse}
        \label{alg: online rl}
        \begin{algorithmic}[1]
            \STATE {\bf Require}: $ \{\pi_{\phi_{i}}\}_{i=1}^{N}$, offline dataset $\cD^{\rm off}$, update-to-data ratio $G$
            \STATE Initialize online agents $Q_{\theta}, \pi_{w}$ and replay buffer $\cD^{\rm on}$
            \STATE Construct expanded policy set $\tilde{\pi} = [\pi_{\phi_1},\ldots, \pi_{\phi_N}, \pi_w]$
            \FOR{each episode}
            \STATE Obtain initial state $s_1$ from the environment 
            \FOR{step $t=1, \cdots, T$}
            \STATE \textcolor{purple}{Construct $P_{\tilde{\pi}}$ According to Equation~\eqref{soft_policy_expansion}}
            \STATE \textcolor{purple}{Pick an policy to act $\pi_t\sim P_{\tilde{\pi}} $, $a_t \sim \pi_t(\cdot|s_t)$}
            \STATE Store transition $(s_t, a_t, r_t, s_{t+1})$ in $\cD^{\rm on}$
            \FOR{$g=1,\cdots,G$}
            \STATE Calculate $\cL_{\rm critic}^{\rm online}(\theta)$ as in Equation~\eqref{eqn:online_critic} and update $\theta$
            \ENDFOR
            \STATE Calculate $\cL_{\rm actor}^{\rm online}(w)$ as in Equation~\eqref{eqn:online_actor} and update $w$
            \ENDFOR
            \ENDFOR
        \end{algorithmic}
    \end{algorithm}

\section{Theoretical Analysis}
\label{sec:theory}
\emph{How sound is the random intent approach for behavior extraction?}
This section aims to establish a theoretical characterization of our proposed method. We first consider the completeness of our method. We show that any behavior can be formulated as the optimal behavior under some given intent, and any behavior set that is covered by the offline dataset can be learned effectively given the corresponding intent set. Then we consider the coverage of random rewards. We show that with a reasonable number of random reward functions, the true reward function can be approximately represented by the linear combination of the random reward functions with a high probability. Hence our method is robust to the randomness in the intent sampling process.

\subsection{Completeness of the Intent Method for Behavior Extraction}
We first investigate the completeness of intent-based methods for behavior extraction. Formally, we have the following proposition.
\begin{proposition}
    \label{intention_completeness}
    For any behavior $\pi$, there exists an intent $z$ with reward function $r(\cdot,\cdot,z)$ such that $\pi$ is the optimal policy under $r(\cdot,\cdot,z)$. That is, for any $\pi=\{\pi_h\}_{h=1}^H, \pi_h: \cS\rightarrow \Delta(\cA)$, there exists $z\in \cZ$ such that
    \$
    V^{\pi}_h(s,z) = V^*_h(s,z),~ \forall (s,a,h) \in \cS\times\cA\times [H]. 
    \$
\end{proposition}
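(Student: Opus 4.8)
The plan is to prove the statement constructively: given an arbitrary behavior $\pi=\{\pi_h\}_{h=1}^H$, I will exhibit an explicit intent $z$ (equivalently, a reward function $r(\cdot,\cdot,z)$) under which $\pi$ attains the optimal value at every state and step. Since the claim only requires $V^\pi_h(s,z)=V^*_h(s,z)$, I merely need $\pi$ to be \emph{an} optimal policy rather than the unique one, so I can afford a reward that is flat across all the actions $\pi$ actually plays. I treat $\cZ$ here as the full space of reward functions; the separate question of whether such a reward lies in the span of the random-prior class is precisely what the coverage analysis of the next subsection handles.

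The construction I would use is, for each $h$ and $s$,
\[
r_h(s,a,z)=\mathbbm{1}\!\left[a\in\supp{\pi_h(\cdot\mid s)}\right]-1,
\]
i.e. reward $0$ for every action in the support of $\pi_h(\cdot\mid s)$ and $-1$ for every action outside it. I would then close the argument with a two-sided sandwich. On the one hand, because $\pi_h(\cdot\mid s)$ is supported only on actions with reward $0$, any trajectory generated by $\pi$ collects reward $0$ at every step, so $V^\pi_h(s,z)=0$ for all $(s,h)$. On the other hand, since $r_h(\cdot,\cdot,z)\le 0$ everywhere, every policy has nonpositive value, whence $V^*_h(s,z)\le 0$. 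Combining this with the trivial bound $V^*_h(s,z)\ge V^\pi_h(s,z)=0$ forces $V^*_h(s,z)=0=V^\pi_h(s,z)$, which is the desired identity.

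The step I expect to be the only genuine subtlety is the stochastic nature of $\pi$: a stochastic policy can be value-optimal only if its \emph{entire} support consists of optimal actions, uniformly over $h$ and $s$, and it is exactly the sandwich above that enforces this without any backward-induction bookkeeping or tie-breaking arguments. An alternative and more canonical route would be a backward construction that picks a target $Q_h$ greedy with respect to $\pi_h$ and chooses $r_h$ so that the Bellman consistency $Q_h=\BB_h V_{h+1}$ holds; I would fall back on that only if one wanted $\pi$ to be the unique optimizer, but for the stated value-equality the direct indicator reward is cleaner and fully general.
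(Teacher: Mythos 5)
Your proposal is correct and takes essentially the same approach as the paper: the paper likewise constructs an indicator reward supported on the state-action pairs that $\pi$ itself plays (reward $1$ on $\pi$'s visitation support, $0$ elsewhere) and concludes that $\pi$ attains the maximal achievable value $V^{\pi}_h = H-h = V^*_h$, so your $0$/$-1$ reward is the same construction shifted by a constant, with your nonpositivity sandwich replacing the paper's ``$\pi$ collects the maximal per-step reward'' argument. The only substantive difference is that the paper's appendix version additionally proves uniqueness of the optimizer for deterministic $\pi$, which the statement you were given does not require.
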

\begin{proof}
    Please refer to Appendix~\ref{appenix: proof1} for detailed proof.
\end{proof}

Proposition~\ref{intention_completeness} indicates that any behavior can be explained as achieving some intent. This is intuitive since we can relabel the trajectory generated by any policy as successfully reaching the final state in hindsight~\citep{andrychowicz2017hindsight}. 
 With Proposition~\ref{intention_completeness}, we can assign an intent $z$ to any policy $\pi$ as $\pi_z$. However, it is unclear whether we can learn effectively from offline datasets given an intent set $\cZ$. This is guaranteed by the following theorem.

\begin{theorem}
    \label{theorem:1}
    Consider linear MDP as defined in Definition~\ref{assumption:linear}. With an offline dataset $\cD$ with size $N$, and the PEVI algorithm~\citep{jin2021pessimism}, the suboptimality of learning from an intent $z\in \cZ$ satisfies
    \#
        \label{eq:main_result}
        \text{\rm SubOpt}(\widehat{\pi};s,z) = \cO\left(\sqrt{\frac{C^\dagger_z d^2H^3\iota}{N}}\right),
    \#
    for sufficiently large $N$, where $\iota= \log(d N|\cZ|/\delta)$, $c$ is an absolute constant and 
    \$
    C^{\dagger}_{z} = \max_{h\in[H]} \sup_{\|x\|=1}\frac{x^{\top}\Sigma_{\pi_z,h} x}{x^{\top}\Sigma_{\rho_h} x}, 
    \$
    with 
    $
    \Sigma_{\pi_z,h} = \EE_{(s,a)\sim d_{\pi_z,h}(s,a)} [\phi(s,a)\phi(s,a)^{\top}],~ \Sigma_{\rho_h} = \EE_{\rho_h} [\phi(s,a)\phi(s,a)^{\top}].
    $
\end{theorem}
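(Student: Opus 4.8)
The plan is to specialize the pessimistic value iteration (PEVI) analysis of \citet{jin2021pessimism} to a \emph{fixed} intent $z$, and then make the guarantee simultaneous over all of $\cZ$ by a union bound. For fixed $z$ the reward $r_h(\cdot,\cdot,z)=\la\phi(\cdot,\cdot),z_h\ra$ is linear in the feature map, so the linear-MDP machinery applies directly: PEVI runs least-squares value iteration, forming at each step $h$ a ridge estimate of $\BB_h\hat V_{h+1}$ from $\cD$ through the empirical Gram matrix $\Lambda_h=\lambda I+\sum_{\tau=1}^{N}\phi(s_h^\tau,a_h^\tau)\phi(s_h^\tau,a_h^\tau)^\top$, subtracting the lower-confidence-bound penalty $\Gamma_h(s,a)=\beta\,(\phi(s,a)^\top\Lambda_h^{-1}\phi(s,a))^{1/2}$, clipping to $[0,H-h+1]$, and taking $\hat\pi_h$ greedy with respect to the resulting $\hat Q_h$.

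Two ingredients then drive the bound. First, on the event that $\Gamma_h$ is a valid uncertainty quantifier, i.e.\ $0\le(\BB_h\hat V_{h+1})(s,a)-\hat Q_h(s,a)\le 2\Gamma_h(s,a)$ for all $(s,a,h)$, a backward induction using pessimism gives both $\hat V_1\le V^*_1$ and $\hat V_1\le V^{\hat\pi}_1$, so that
\$
\mathrm{SubOpt}(\hat\pi;s,z)=V^*_1(s,z)-V^{\hat\pi}_1(s,z)\le V^*_1(s,z)-\hat V_1(s)\le 2\sum_{h=1}^{H}\EE_{\pi_z}\!\left[\Gamma_h(s_h,a_h)\mid s_1=s\right],
\$
where the expectation is taken along the optimal policy $\pi_z$ for reward $z$; this is the step that brings in $\Sigma_{\pi_z,h}$. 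Second, a self-normalized concentration inequality for the regression error, combined with an $\epsilon$-net over the data-dependent value iterates $\hat V_{h+1}$, certifies the good event with probability $1-\delta$ for a choice $\beta=\tilde\cO(dH\sqrt{\iota})$. Taking a union bound over all intents in $\cZ$, needed because the rewards are sampled at random, is precisely what contributes the $\log|\cZ|$ term in $\iota=\log(dN|\cZ|/\delta)$.

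It remains to convert $\sum_h\EE_{\pi_z}[\Gamma_h]$ into the stated coverage-dependent rate. Jensen's inequality and the trace identity give $\EE_{\pi_z}[(\phi^\top\Lambda_h^{-1}\phi)^{1/2}]\le(\mathrm{tr}(\Lambda_h^{-1}\Sigma_{\pi_z,h}))^{1/2}$, and a matrix-concentration argument shows that for sufficiently large $N$ the empirical Gram matrix concentrates, $\Lambda_h\succeq\tfrac{N}{2}\Sigma_{\rho_h}$ with high probability, whence $\Lambda_h^{-1}\preceq\tfrac{2}{N}\Sigma_{\rho_h}^{-1}$ and $\mathrm{tr}(\Lambda_h^{-1}\Sigma_{\pi_z,h})\le\tfrac{2}{N}\mathrm{tr}(\Sigma_{\rho_h}^{-1}\Sigma_{\pi_z,h})\le\tfrac{2d}{N}C^\dagger_z$, the last inequality using $\Sigma_{\pi_z,h}\preceq C^\dagger_z\Sigma_{\rho_h}$ by definition of $C^\dagger_z$. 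Substituting $\beta$ and summing over $h\in[H]$ then collects the polynomial factors and yields $\mathrm{SubOpt}(\hat\pi;s,z)=\cO(\sqrt{C^\dagger_z d^2H^3\iota/N})$, with the exact powers of $d$ and $H$ pinned down by the normalization in Definition~\ref{assumption:linear} and the scale of $\beta$. I expect the main obstacle to be establishing the validity of the uncertainty quantifier \emph{uniformly} in $z$, since the self-normalized/covering bound must hold simultaneously for every intent, together with the covariance concentration $\Lambda_h\succeq\tfrac{N}{2}\Sigma_{\rho_h}$ (the origin of the ``sufficiently large $N$'' hypothesis) that is needed to replace the empirical penalty by the population coefficient $C^\dagger_z$; by comparison the pessimism decomposition and the Jensen/trace manipulation are routine.
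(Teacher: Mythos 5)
Your proposal follows essentially the same route as the paper's proof: for a fixed intent $z$ the reward $r(\cdot,\cdot,z)$ is linear in $\phi$, so the MDP remains a linear MDP and the PEVI analysis of \citet{jin2021pessimism} applies verbatim; uniformity over $\cZ$ is then obtained by a union bound with $\delta' = \delta/|\cZ|$, which is exactly the source of the $\log|\cZ|$ factor in $\iota$. The paper's proof consists of precisely these two remarks (plus a citation for the improved dimension dependence), so your unpacking of the inner PEVI machinery---the pessimistic value iteration with lower-confidence-bound penalty, the decomposition $\mathrm{SubOpt}(\widehat\pi;s,z)\le 2\sum_{h}\EE_{\pi_z}[\Gamma_h(s_h,a_h)]$, the self-normalized concentration with covering, and the Jensen/trace conversion to $C^\dagger_z$---is a faithful expansion of what the paper delegates to the reference, and those steps are sound, including the identification of where ``sufficiently large $N$'' enters.

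The genuine gap is in your final accounting of the polynomial factors. With the Hoeffding-type bonus $\beta = c\,dH\sqrt{\iota}$ that your covering argument certifies, your own chain of inequalities gives
\[
\mathrm{SubOpt}(\widehat\pi;s,z)\;\le\; 2\beta H\sqrt{2dC^\dagger_z/N}\;=\;\cO\left(\sqrt{C^\dagger_z d^3H^4\iota/N}\right),
\]
which is a factor of $\sqrt{dH}$ larger than the claimed $\cO\left(\sqrt{C^\dagger_z d^2H^3\iota/N}\right)$. This is the original $\tilde{\cO}(d^{3/2}H^2)$ rate of \citet{jin2021pessimism}; no choice of absolute constants or appeal to the normalization in Definition~\ref{assumption:linear} can make these steps produce $\tilde{\cO}(dH^{3/2})$, so the sentence ``substituting $\beta$ and summing over $h$ \ldots yields the stated rate'' does not follow from your argument. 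The paper closes exactly this gap by citing the finer analysis of \citet{xiong2022nearly} (reference-advantage decomposition with Bernstein-type, variance-weighted self-normalized bounds), which improves the $d^3$ inside the square root to $d^2$; note that this refinement must also be checked to tolerate the union bound over $\cZ$, which it does since it only rescales $\delta$. To repair your proof, either import that sharper analysis explicitly in place of the Hoeffding-type bonus, or weaken the conclusion you actually establish to $\cO\left(\sqrt{C^\dagger_z d^3H^4\iota/N}\right)$.
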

\begin{proof}
    Please refer to Appendix~\ref{appenix: proof2} for detailed proof.
\end{proof}

$C^{\dagger}_z$ represents the maximum ratio between the density of empirical state-action distribution $\rho$ and the density $d_{\pi_z}$ induced
from the policy $\pi_z$. Theorem~\ref{theorem:1} states that, for any behavior, $\pi_z$ that is well-covered by the offline dataset $\cD$, we can use standard offline RL algorithm to extract it effectively, in the sense that learned behavior $\widehat{\pi}$ has a small suboptimality under reward function $r(\cdot,\cdot,z)$. 
Compared to standard offline result~\citep{jin2021pessimism}, Theorem~\ref{theorem:1} is only worse off by a factor of $\log |\cZ|$, which enables us to learn multiple behaviors from a single dataset effectively, as long as the desired behavior mode is well contained in the dataset. 

\subsection{Coverage of Random Rewards}
This section investigates the coverage property of random rewards. Intuitively, using random rewards may suffer from high variance, and it is possible that all sampled intents lead to undesired behaviors. Theorem~\ref{theorem:2} shows that such cases rarely happen since with a reasonable number of random reward functions, the linear combination of them can well cover the true reward function with a high probability. 

\begin{theorem}
\label{theorem:2}
    Assume the reward function $r(s,a)$ admits a RKHS represention $\psi(s,a)$ with $\|\psi(s,a)\|_\infty \leq \kappa$ almost surely. Then with $N=c_0 \sqrt{M}\log(18\sqrt{M}\kappa^2/\delta)$ random reward functions $\{r_{i}\}_{i=1}^N$, the linear combination of the set of random reward functions $\widehat{r}(s,a)$ can approximate the true reward function with error 

$$
\E_{(s,a)\sim \rho} [\widehat{r}(s,a) - r(s,a)]^2 \leq c_1 \log^2(18/\delta)/\sqrt{M},
$$

with probability $1-\delta$, where $M$ is the size of the offline dataset $\cD$, $c_0$ and $c_1$ are universal constants and $\rho$ is the distribution that generates the offline dataset $\cD$.
\end{theorem}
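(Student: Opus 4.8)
The plan is to recognize the random rewards $\{f_{w_i}\}$ as \emph{random features} in the sense of Rahimi and Recht, and to exhibit the approximating combination $\widehat r$ as an unbiased Monte Carlo estimate of an integral representation of $r$. Writing $x=(s,a)$, the prior $\beta$ induces the kernel $k(x,x') = \E_{w\sim\beta}[f_w(x)f_w(x')]$, which I will match to the assumed RKHS inner product $\langle\psi(x),\psi(x')\rangle$. Since $r$ admits the RKHS representation with feature map $\psi$, the first step is to show that $r$ can be written in the integral form $r(x) = \int g(w)f_w(x)\,d\beta(w)$ for a weight function $g$ with $\int g(w)^2\,d\beta(w)$ finite and controlled by $\|r\|_{\mathcal{H}}^2$. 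This is the standard identification of an RKHS with the image of the feature expansion in $L^2(\beta)$, and it is precisely the step that ties the assumed RKHS structure to the concrete neural-network initialization $\beta$.

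Second, I would define the witness $\widehat r(x) = \tfrac1N\sum_{i=1}^N g(w_i)f_{w_i}(x)$, which is exactly a linear combination of the random reward functions with coefficients $\alpha_i = g(w_i)/N$ and is unbiased, $\E_{w_i}[\widehat r(x)] = r(x)$ for every $x$. Because the summands are i.i.d.\ and $|f_w(x)|\le\kappa$ follows from $\|\psi\|_\infty\le\kappa$, the per-point variance is at most $\kappa^2\E_\beta[g(w)^2]/N$, comparable to $\kappa^2\|r\|_{\mathcal{H}}^2/N$. Integrating over $x\sim\rho$ and applying Fubini gives the in-expectation bound $\E_{w}\,\E_{(s,a)\sim\rho}[(\widehat r - r)^2]\le \kappa^2\|r\|_{\mathcal{H}}^2/N$, which already exhibits the characteristic $O(1/N)$ squared error of random features.

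Third, I would upgrade the in-expectation statement to a high-probability bound. Viewing $\|\widehat r - r\|_{L^2(\rho)}$ as a function of the independent draws $w_1,\dots,w_N$, each coordinate enters with bounded influence of order $\kappa/N$, so McDiarmid's inequality concentrates the $L^2(\rho)$ error around its mean at scale $\sqrt{\log(1/\delta)/N}$; squaring and combining with the mean bound produces the logarithmic factors. The factor $\log(\sqrt M\,\kappa^2/\delta)$ then arises from a union bound controlling the approximation uniformly over the quantities associated with the size-$M$ dataset, and substituting the prescribed $N = c_0\sqrt M\log(18\sqrt M\kappa^2/\delta)$ into the $O(\log(1/\delta)/N)$ error collapses it to $c_1\log^2(18/\delta)/\sqrt M$.

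The main obstacle I anticipate is the first step: rigorously producing the representation $r(x)=\int g\,f_w\,d\beta$ with a controlled $L^2(\beta)$ norm, while simultaneously justifying the boundedness and variance conditions on $g(w)f_w(x)$ from the single hypothesis $\|\psi\|_\infty\le\kappa$. If $g$ is only square-integrable rather than bounded, the naive McDiarmid step fails, and I would instead need a truncation argument on $g$ together with a tail estimate, or a Bernstein-type inequality for the vector-valued sum in $L^2(\rho)$, in order to recover the stated logarithmic dependence cleanly.
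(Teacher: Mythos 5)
Your route is genuinely different from the paper's. The paper's proof is a two-line reduction to Theorem 1 of \citet{rudi2017generalization}: treat the $N$ random rewards as random features, run a (hypothetical) ridge regression of the noiseless true reward on those features using the $M$ offline samples, invoke the cited generalization bound --- which, with regularization $\lambda = 1/\sqrt{M}$ and $N \gtrsim \sqrt{M}\log(\sqrt{M}\kappa^2/\delta)$ features, gives excess risk $c_1\log^2(18/\delta)/\sqrt{M}$ since the regression target has zero Bayes error --- and finish by noting that the best linear combination is no worse than the ridge solution. You instead follow the Rahimi--Recht ``random kitchen sinks'' route: an integral representation $r(x)=\int g(w) f_w(x)\,d\beta(w)$, the unbiased Monte Carlo witness $\hat r = \frac1N\sum_i g(w_i) f_{w_i}$, and concentration. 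Where your argument works, it buys something the paper's does not: a purely approximation-theoretic $O(1/N)$ bound with an explicit witness, in which the dataset plays no role and $\sqrt{M}$ enters only by substitution. The paper's route buys weaker hypotheses --- the operator-theoretic analysis of \citet{rudi2017generalization} needs only $r\in\mathcal{H}$ and $\sup_w\|f_w\|_\infty\le\kappa$, never a bounded representer $g$ --- and it locates the $\log(\sqrt M \kappa^2/\delta)$ factor correctly: it comes from balancing the feature count against $\lambda=1/\sqrt M$ in the operator concentration, not from ``a union bound over the size-$M$ dataset'' as you suggest; in your framework that factor is pure slack.

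The obstacle you flag in your final paragraph is a genuine gap, and it is exactly where your proof is incomplete. From the theorem's hypotheses one only gets $g\in L^2(\beta)$ for the minimal-norm representation (the RKHS of the kernel $k(x,x')=\mathbb{E}_{w\sim\beta}[f_w(x)f_w(x')]$ is precisely the image of $L^2(\beta)$ under the feature expansion), whereas McDiarmid requires $\|g\|_\infty<\infty$; plain truncation gives no rate without extra tail assumptions, and vector Bernstein also needs boundedness, so neither of your suggested repairs closes the gap as stated. Two honest fixes: (i) strengthen the hypothesis to the Rahimi--Recht class, $|g(w)|\le C$ $\beta$-a.s., and conclude $\|\hat r - r\|^2_{L^2(\rho)} \le C^2\kappa^2\bigl(1+\sqrt{2\log(1/\delta)}\bigr)^2/N$ by bounded differences; or (ii) keep $g\in L^2(\beta)$, use Markov on $\mathbb{E}\|\hat r-r\|^2_{L^2(\rho)}\le \kappa^2\|g\|_{L^2(\beta)}^2/N$ to get failure probability $1/2$ per block, split the $N$ features into $O(\log(1/\delta))$ disjoint blocks, and note that with probability $1-\delta$ at least one block's witness --- still a linear combination of the full feature set, which is all the theorem requires --- has squared error $O\bigl(\kappa^2\|g\|_{L^2(\beta)}^2\log(1/\delta)/N\bigr)$. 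Either repair recovers the stated bound after substituting the prescribed $N$; as written, though, your concentration step does not follow from the theorem's assumptions, and this hypothesis bookkeeping is precisely what the paper outsources to \citet{rudi2017generalization}.
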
 
\begin{proof}
    Please refer to Appendix~\ref{appenix: proof3} for detailed proof.
\end{proof}

Theorem~\ref{theorem:2} indicates that RL algorithms are insensitive to the randomness in the reward function used for learning, as long as we are using a reasonable number of such functions. This phenomenon is more significant in offline algorithms since they are conservative in nature to avoid over-generalization. This partly explains why using random rewards is sufficient for behavior extraction, and can even generate policies comparable to oracle rewards. Such a phenomenon is also observed in prior works~\citep{shin2023benchmarks,li2023survival}.

\section{Experiments}
\label{sec:exp}
\begin{figure}[t]
    \centering
    \subfigure{\includegraphics[scale=0.21]{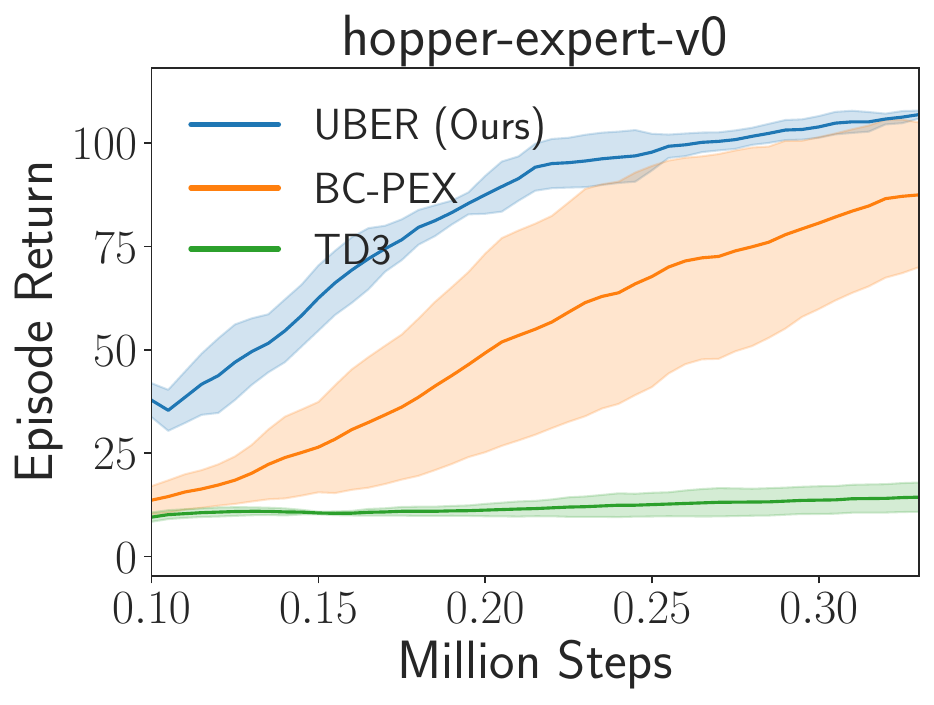}}\subfigure{\includegraphics[scale=0.21]{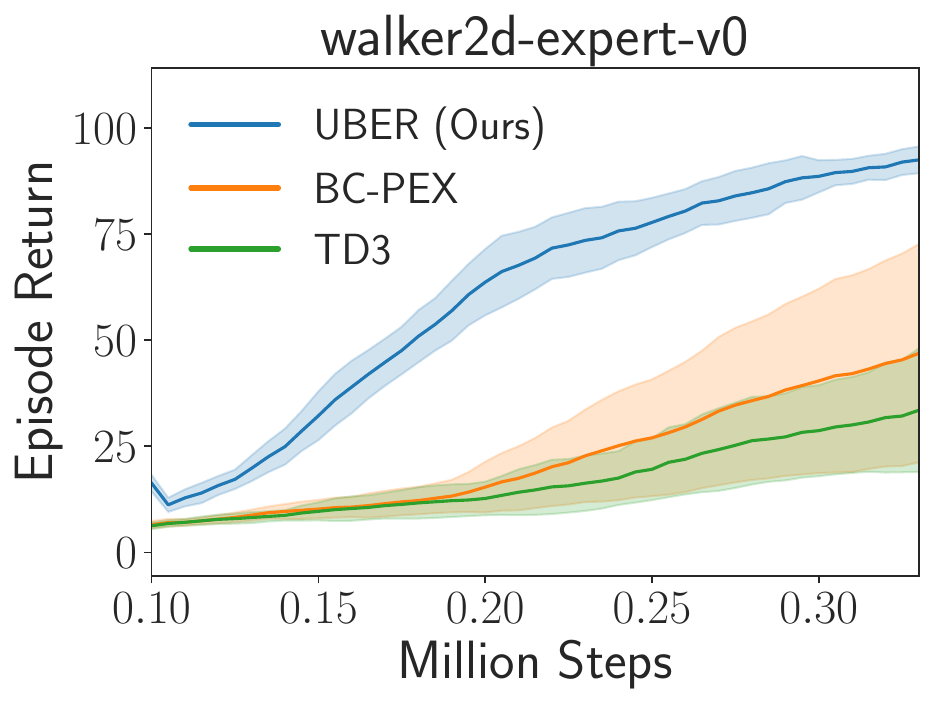}}
    \subfigure{\includegraphics[scale=0.21]{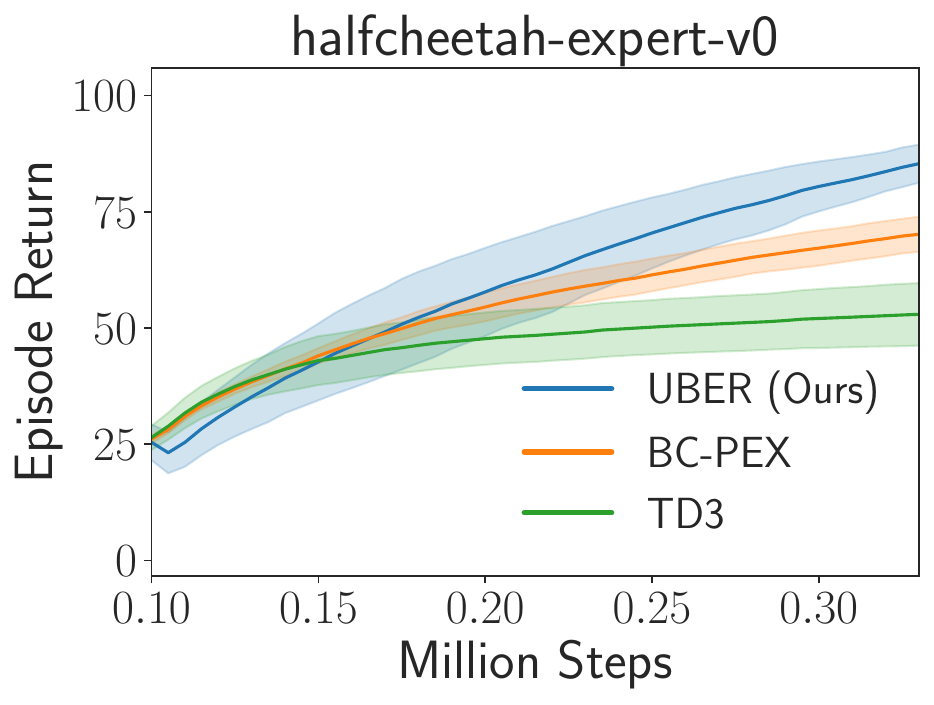}}    
    \subfigure{\includegraphics[scale=0.21]{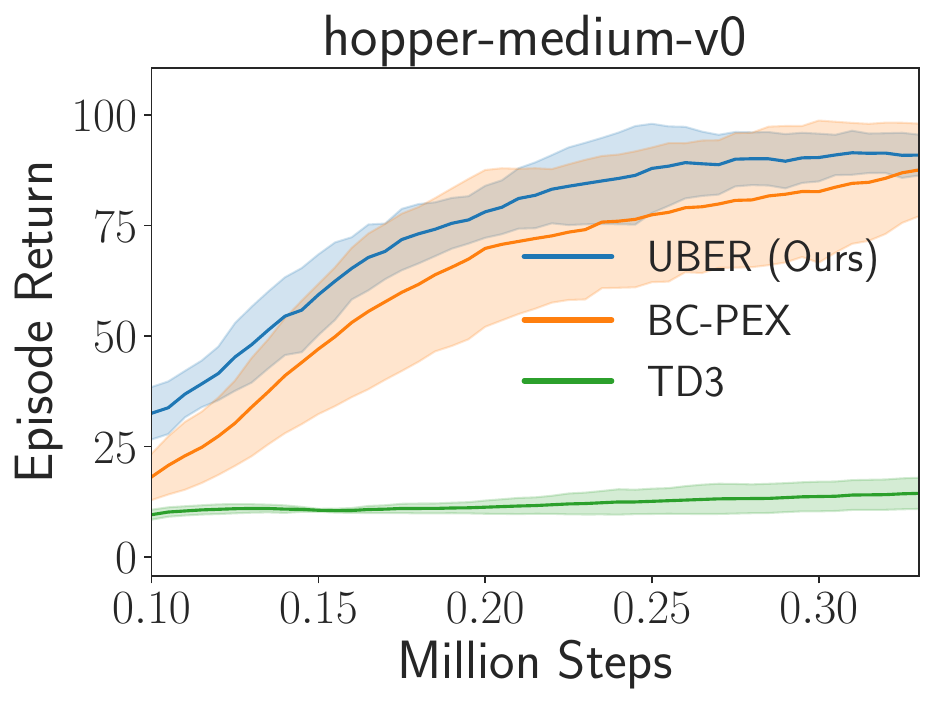}}
    
    \subfigure{\includegraphics[scale=0.21]{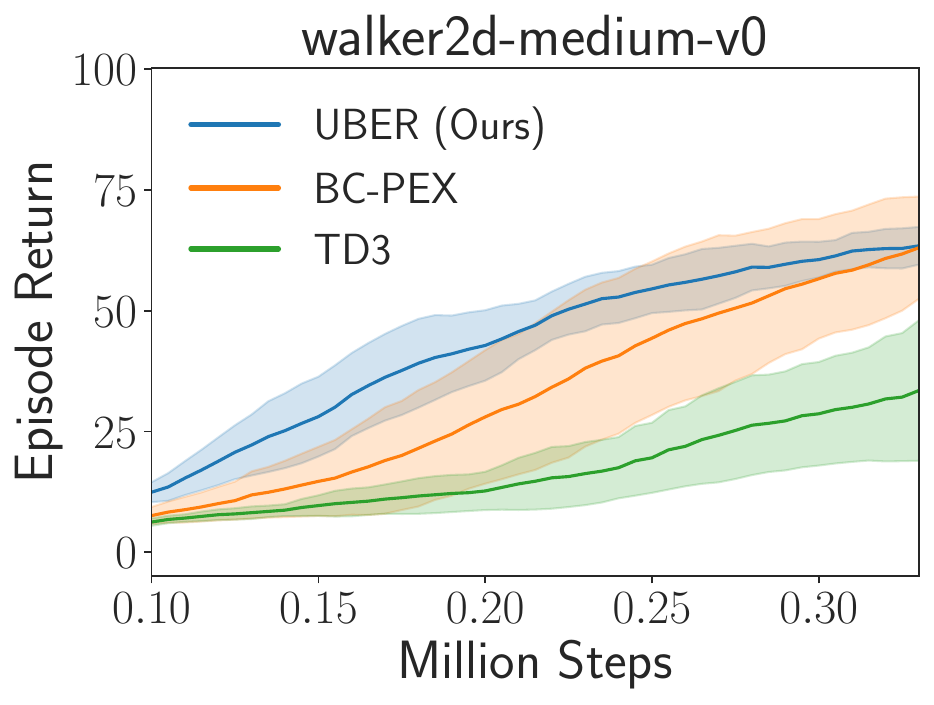}}
    \subfigure{\includegraphics[scale=0.21]{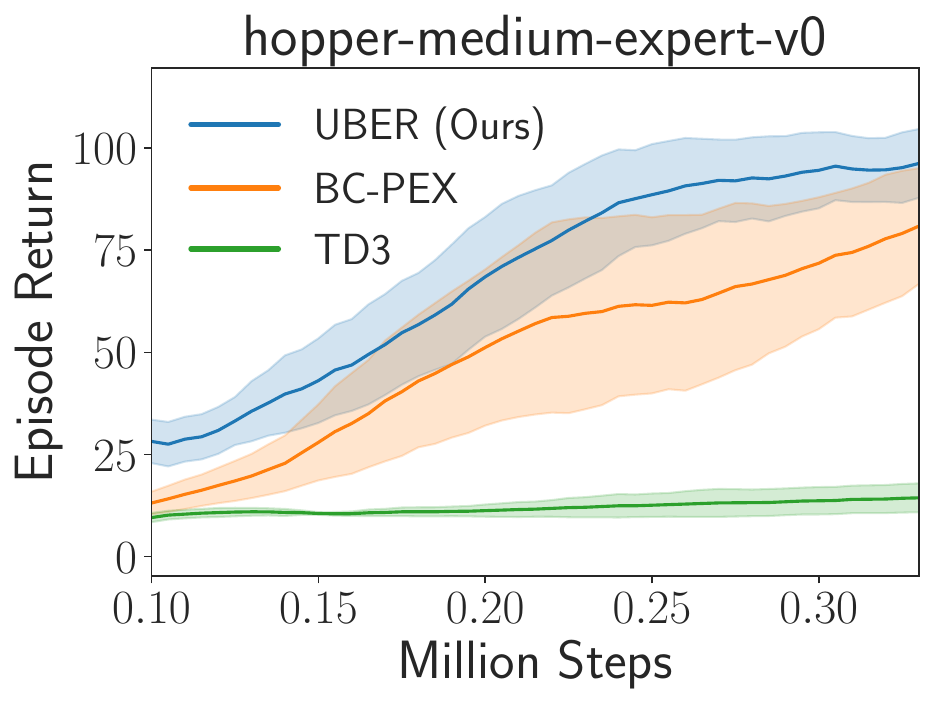}}
    \subfigure{\includegraphics[scale=0.21]{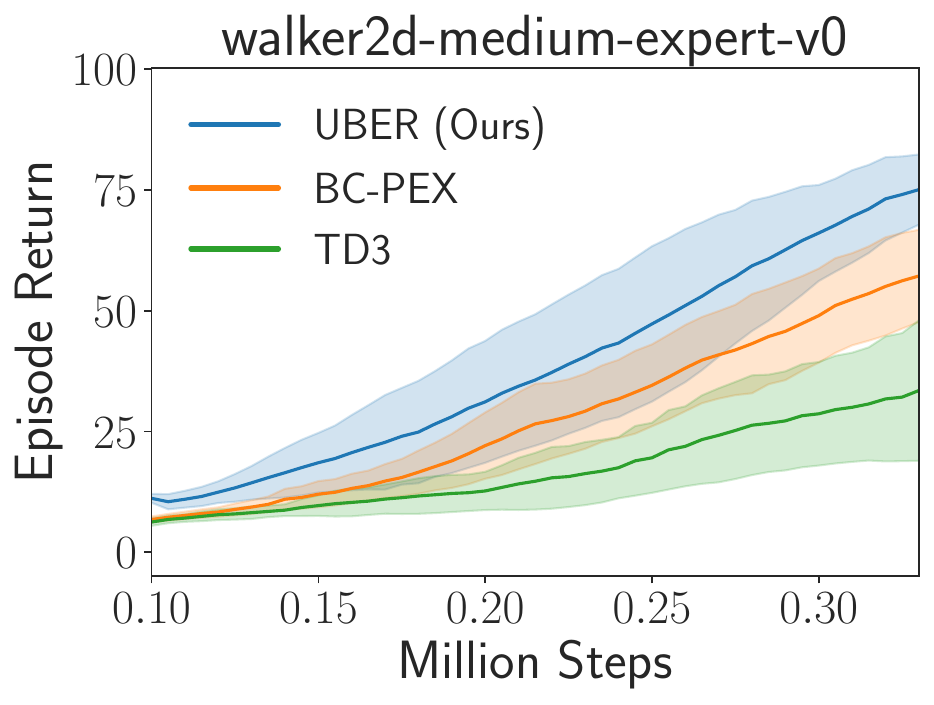}}
    \subfigure{\includegraphics[scale=0.21]{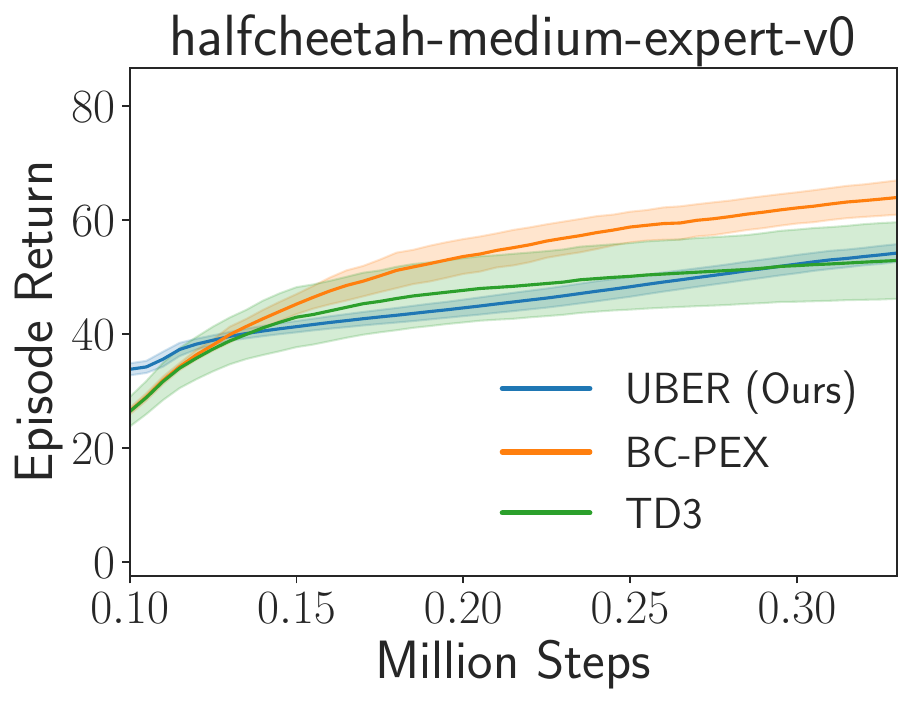}}
    \caption{Comparison between UBER and baselines in the online phase in the Mujoco domain.
    We adopt datasets of various quality for offline training.
    We adopt a normalized score metric averaged with five random seeds.}
    \label{fig: mujoco result}
\end{figure}

\begin{figure}[t]
    \centering
    \subfigure{\includegraphics[scale=0.27]{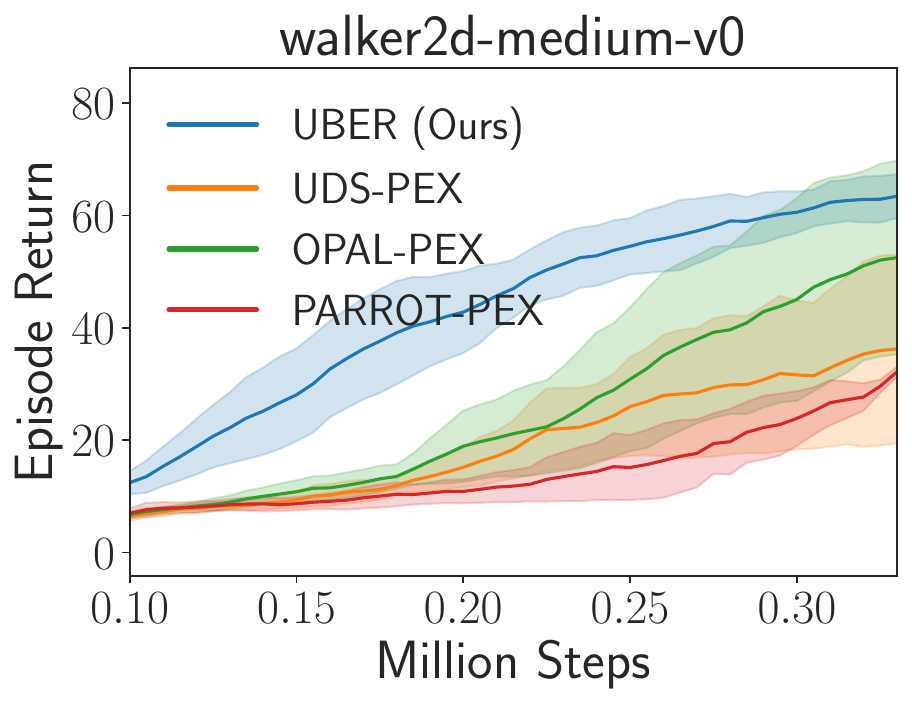}}
    \subfigure{\includegraphics[scale=0.27]{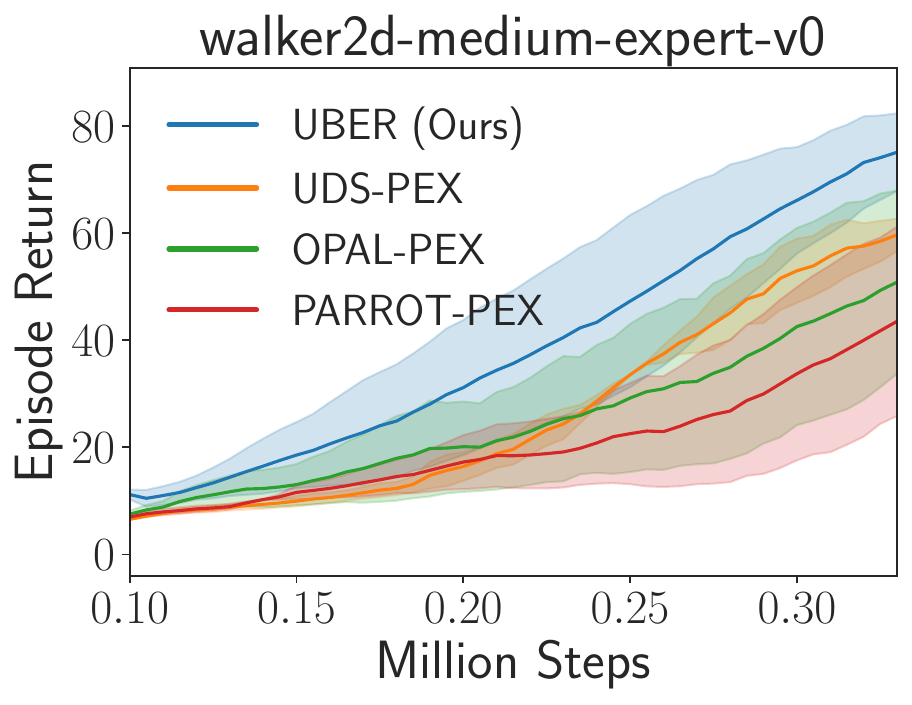}}
    \subfigure{\includegraphics[scale=0.27]{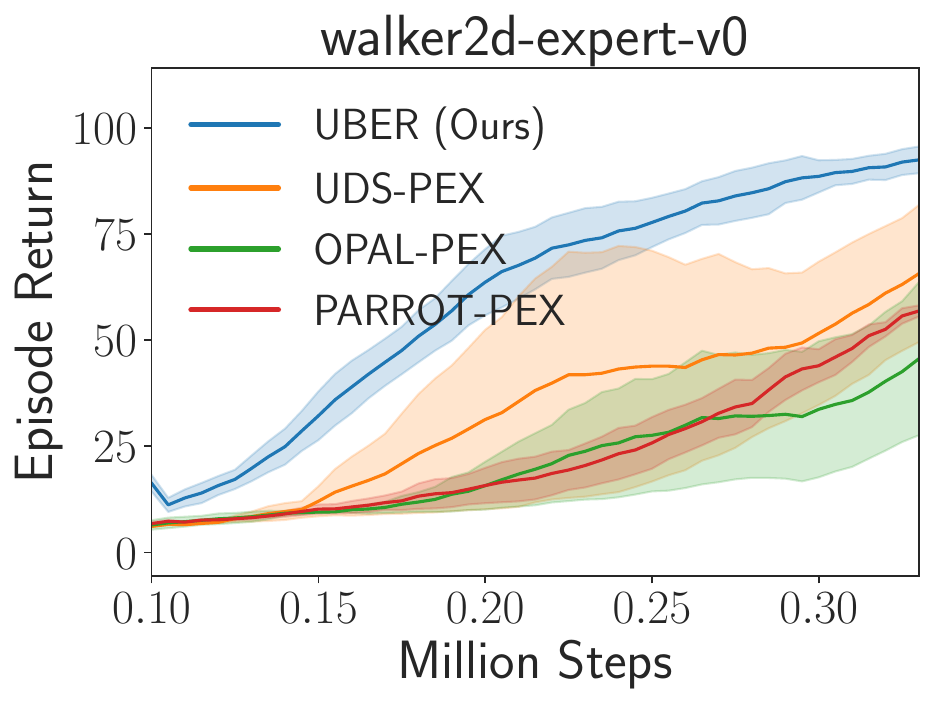}}
    \caption{Comparison between UBER and baselines including unsupervised behavior extraction (PARROR OPAL) and data-sharing methods (UDS). The result is averaged with five random seeds. Our method outperforms these baselines by leveraging the diverse behavior set.}
    \label{fig: new baseline result}
\end{figure}

\begin{figure}[t]
    \centering
    \subfigure{\includegraphics[scale=0.22]{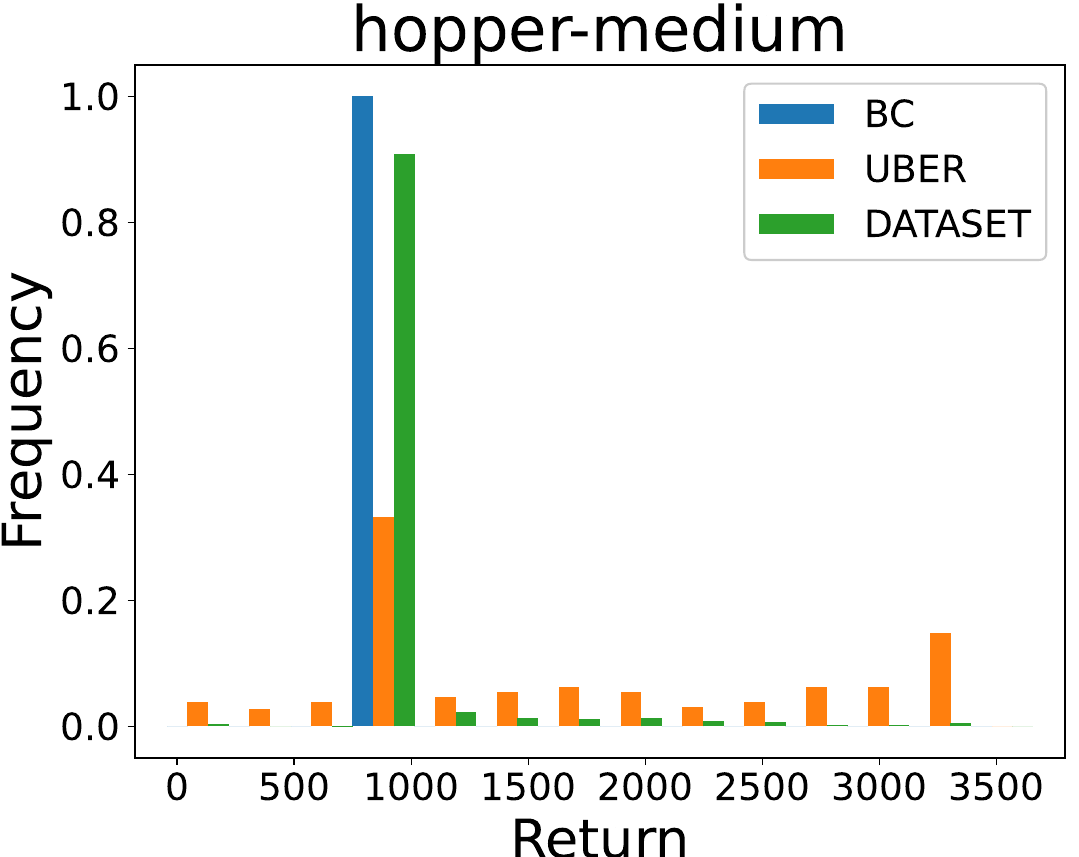}}
    \subfigure{\includegraphics[scale=0.22]{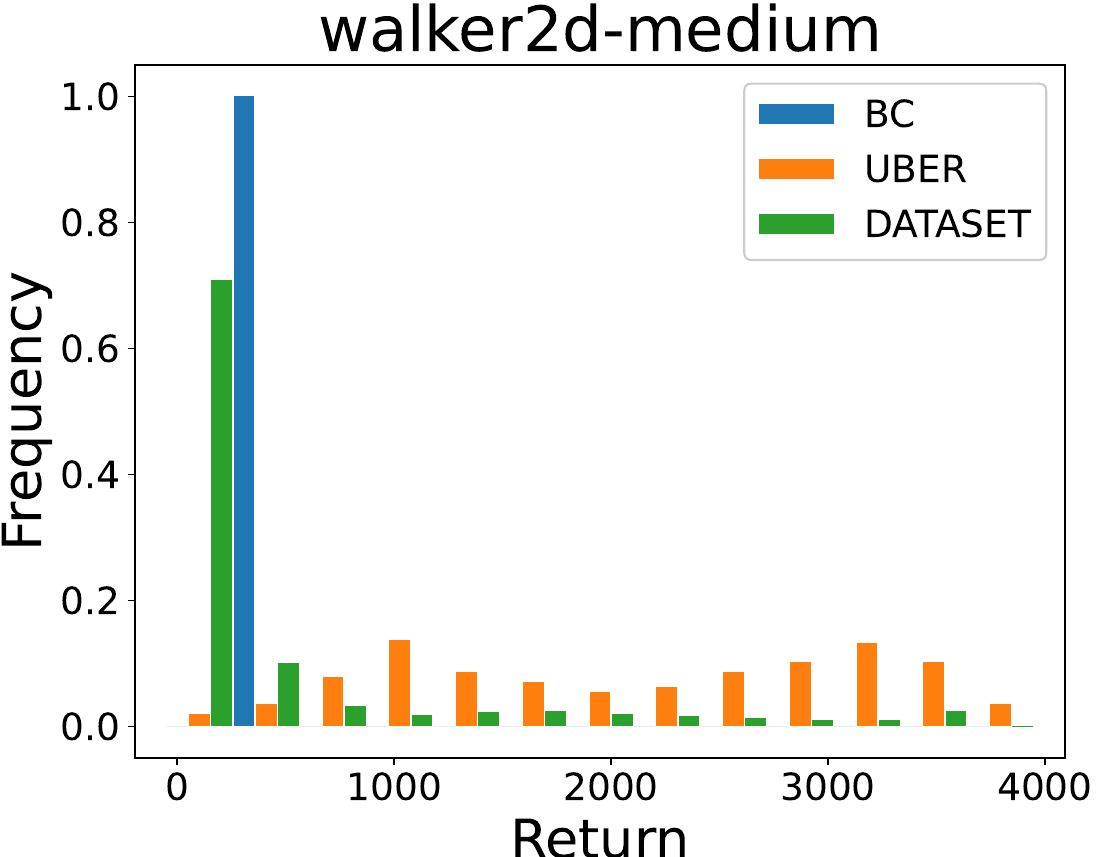}}
    \subfigure{\includegraphics[scale=0.22]{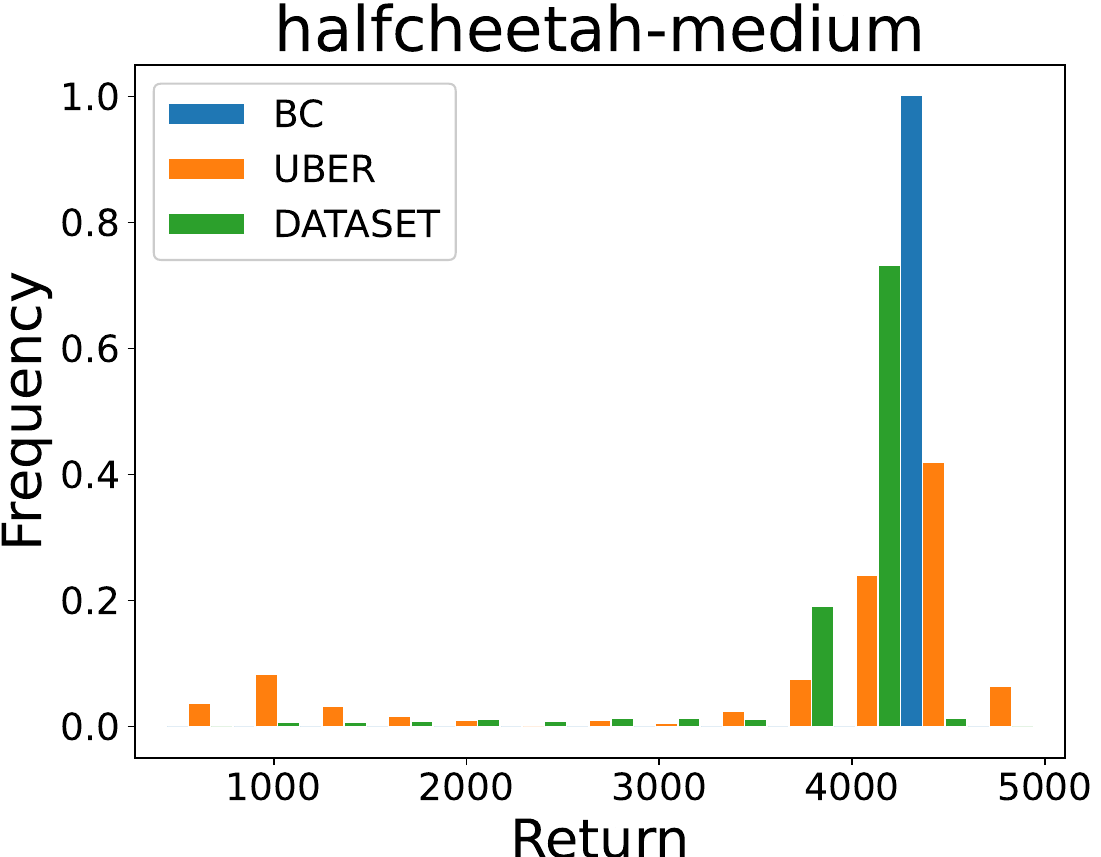}}
    \subfigure{\includegraphics[scale=0.22]{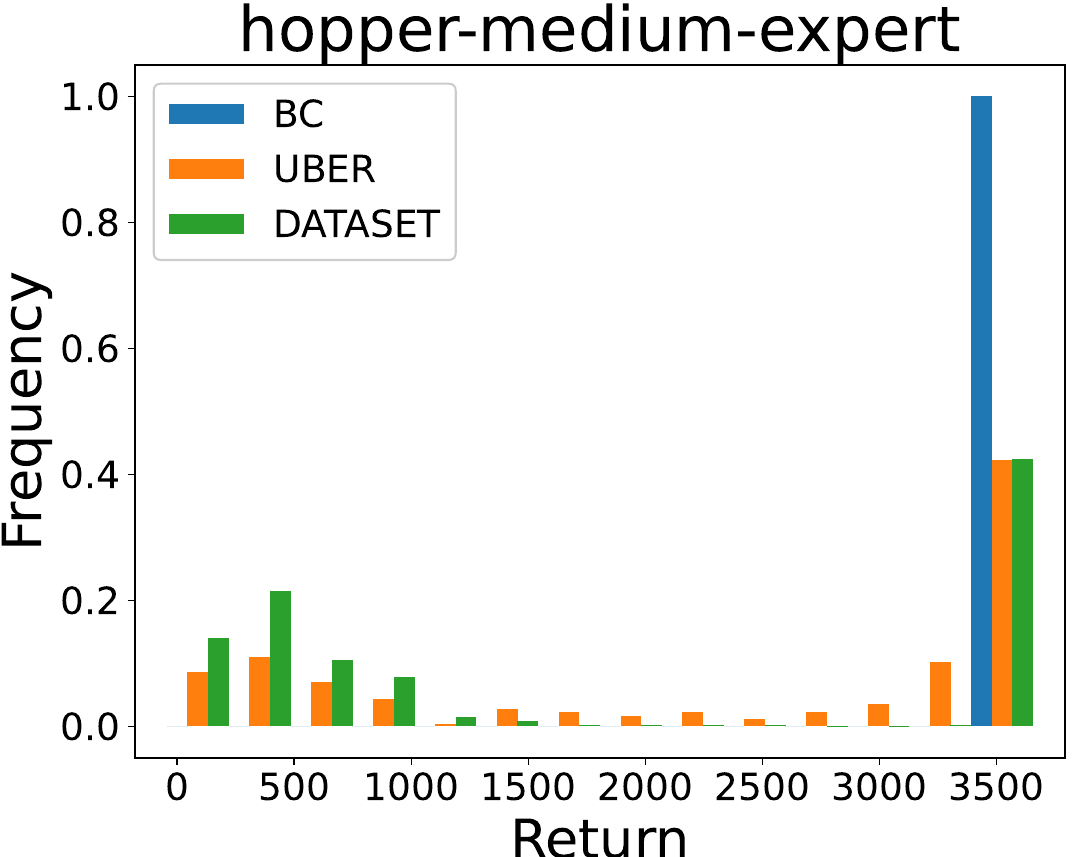}}
    \subfigure{\includegraphics[scale=0.22]{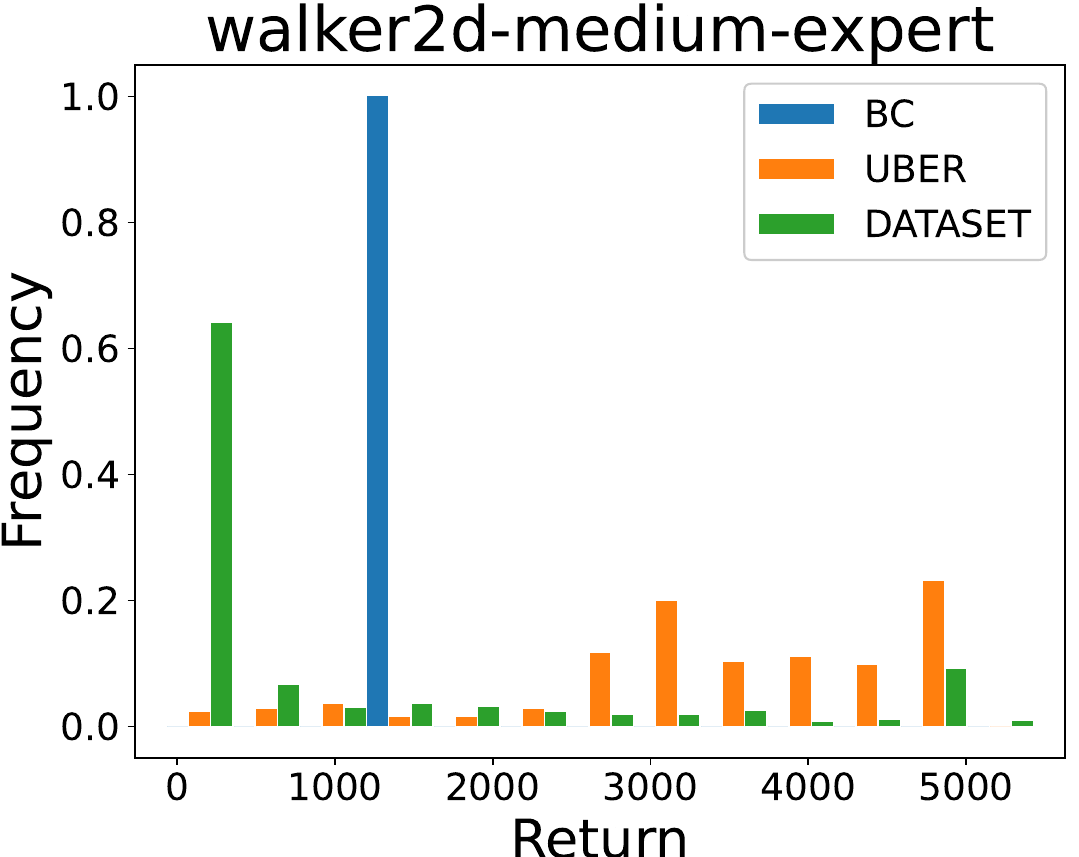}}
    \subfigure{\includegraphics[scale=0.22]{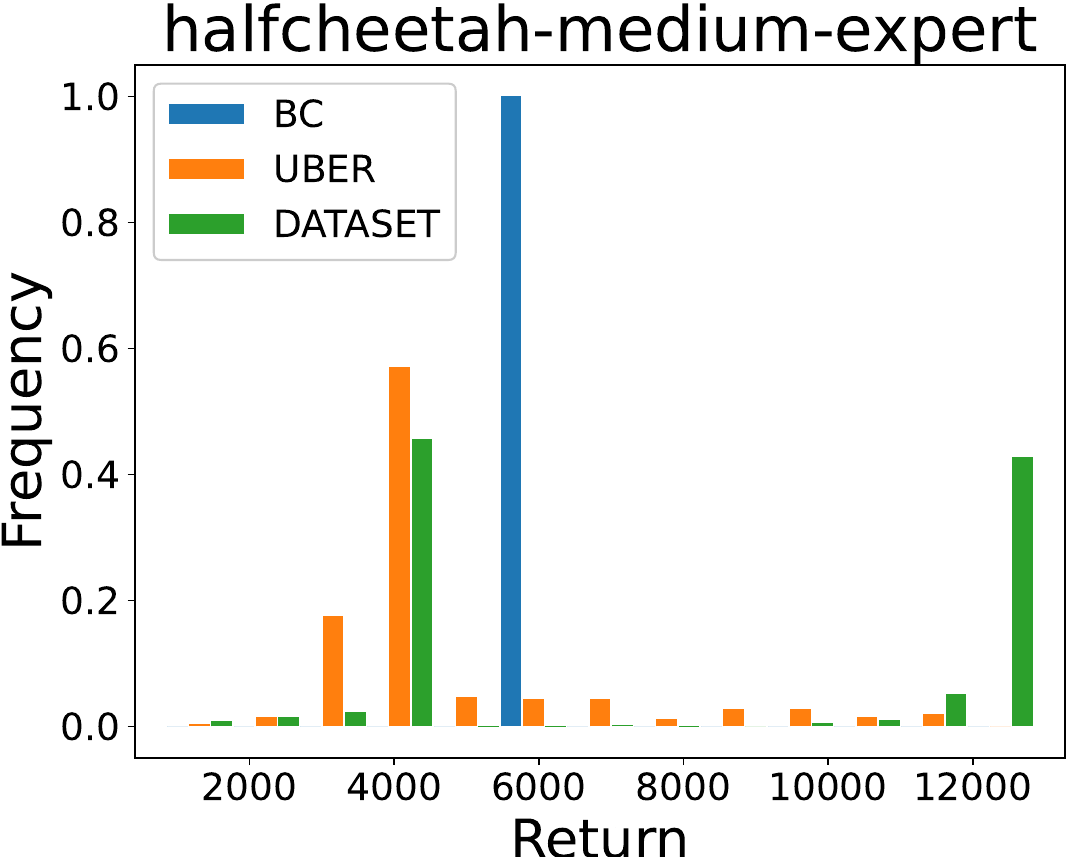}}
    \caption{The return distribution of the dataset (DATASET) and different behaviors generated from random intent priors (UBER) and behavior cloning (BC). UBER can generate a diverse behavior set that span a return distribution wider than the original dataset.
    }
    \label{fig: random intent sta}
\end{figure}

Our evaluation studies UBER as a pre-training mechanism on reward-free data, focusing on the following
questions: (1) Can we learn useful and diverse behaviors with UBER? (2) Can the learned behaviors from one task using UBER be reused for various downstream tasks? (3) How effective is each component in UBER?

To answer the questions above, we conduct experiments on the standard 
D4RL benchmark~\citep{fu2020d4rl} and the multi-task benchmark Meta-World~\citep{yu2020meta}, which encompasses a variety of dataset settings and tasks. 
We adopt the normalized score metric proposed by the D4RL benchmark, and all experiments are averaged over five random seeds.
Please refer to Appendix~\ref{appendix: exp details} for more experimental details.

\subsection{Experiment Setting}
We extract offline behavior policies from multi-domains in D4RL benchmarks, including locomotion and navigation tasks.
Specifically, The locomotion tasks feature online data with varying levels of expertise.
The navigation task requires composing parts of sub-optimal trajectories to form more optimal policies for reaching goals on a MuJoco Ant robot.

\paragraph{Baselines.}
We compare UBER with baselines with various behavior extraction methods. We compare our method with BC-PEX, which uses a behavior-cloning objective and other unsupervised behavior extraction methods~\citep{yang2022flow}, including OPAL~\citep{ajay2020opal} and PARROT~\citep{singh2020parrot}. We also compare our method with a strong offline-to-online method, RLPD~\citep{ball2023efficient} and an unsupervised data sharing method, UDS~\citep{yu2022leverage}. 
Specifically, BC-PEX first extracts behavior from the offline dataset based on behavior cloning and then conducts online policy reuse with PEX~\citep{zhang2023policy} following the schedule of Algorithm~\ref{alg: online rl}.
As for OPAL-PEX and PARROT-PEX, we extract behavior based on the VAE and FLOW models, and then conduct the same online policy reuse pipeline.
We set the reward of the offline dataset to zero to implement UDS-PEX.
As for the pure online baseline, we use RLPD without prior offline data, named RLPD~(no prior) as our baseline.


    

\begin{table*}[t]
    \centering
    \begin{tabular}{c|ccc|c}
        \toprule
          \emph{500k step scores} & RLPD~(w$\backslash$o true reward) & BC-PEX & UBER~(Ours) & RLPD (w true reward) \\
         \midrule
            umaze & 0.0$\pm$0.0 & \textbf{99.8$\pm$0.3} & 97.9$\pm$0.3 & 99.9$\pm$0.1\\
           umaze-diverse & 0.0$\pm$0.0 & \textbf{99.9$\pm$0.1} & 98.8$\pm$0.5 & 99.9$\pm$0.1\\
           medium-play & 0.0$\pm$0.0 & 0.0$\pm$0.0 & \textbf{94.0$\pm$3.1} & 98.7$\pm$0.9\\
           medium-diverse & 0.0$\pm$0.0 & 0.0$\pm$0.0 & \textbf{96.5$\pm$1.5} & 98.5$\pm$1.3\\
           large-play & 0.0$\pm$0.0 & 0.0$\pm$0.0 & \textbf{83.3$\pm$7.7} & 94.8$\pm$1.5\\
           large-diverse &  0.0$\pm$0.0 & 0.0$\pm$0.0 & \textbf{88.7$\pm$1.5} & 93.5$\pm$1.6\\
        \midrule
          \emph{100k step scores} &  RLPD~(w$\backslash$o true reward) & BC-PEX & UBER~(Ours) & RLPD (w true reward) \\
         \midrule
            umaze  & 0.0$\pm$0.0 & \textbf{95.8$\pm$1.4} & 90.7$\pm$1.8 & 82.5$\pm$0.2\\
           umaze-diverse  & 0.0$\pm$0.0 & \textbf{95.5$\pm$3.7} & \textbf{94.5$\pm$1.1} & 83.3$\pm$0.3\\
           medium-play & 0.0$\pm$0.0 & 0.0$\pm$0.0 & \textbf{91.9$\pm$2.9} & 79.5$\pm$0.6\\
           medium-diverse & 0.0$\pm$0.0 & 0.0$\pm$0.0 & \textbf{91.3$\pm$1.4} & 75.3$\pm$1.3\\
           large-play & 0.0$\pm$0.0 & 0.0$\pm$0.0 & \textbf{62.8$\pm$2.6} & 62.1$\pm$3.7\\
           large-diverse & 0.0$\pm$0.0 & 0.0$\pm$0.0 & \textbf{82.7$\pm$2.8} & 61.4$\pm$4.4\\
        \bottomrule
    \end{tabular}
    \caption{Comparison between UBER and baselines in the online phase in the Antmaze domain at 100k and 500k environment steps. Pure online methods without an offline dataset fail completely due to the difficulty of exploration.}
    \label{tab: antmaze result}
\end{table*}

\begin{figure}[t]
    \centering
    \subfigure{\includegraphics[scale=0.2]{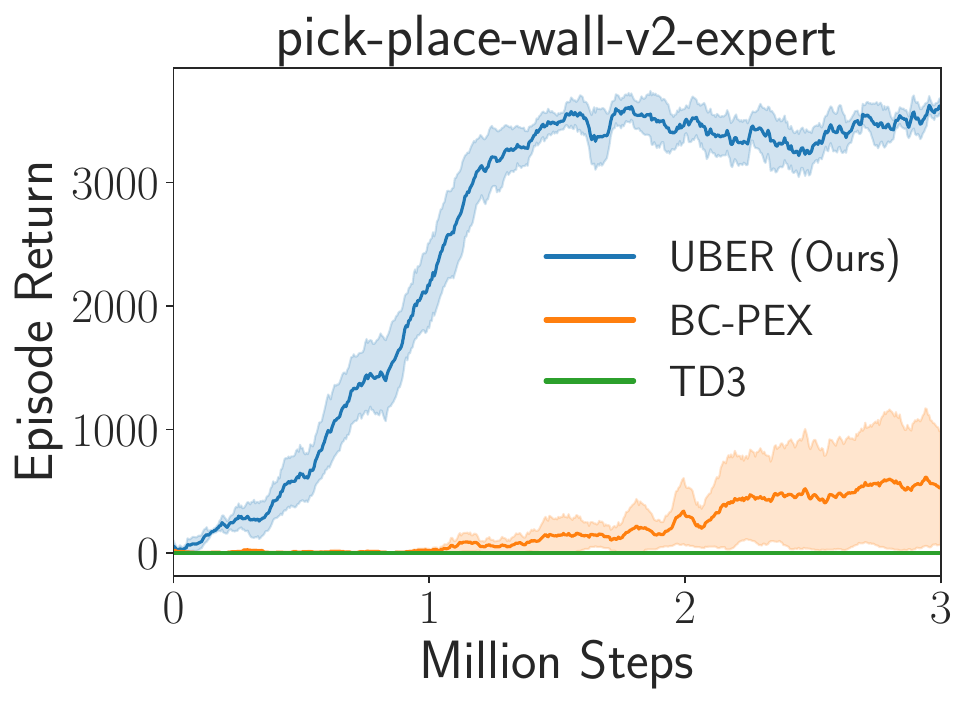}}
    \subfigure{\includegraphics[scale=0.2]{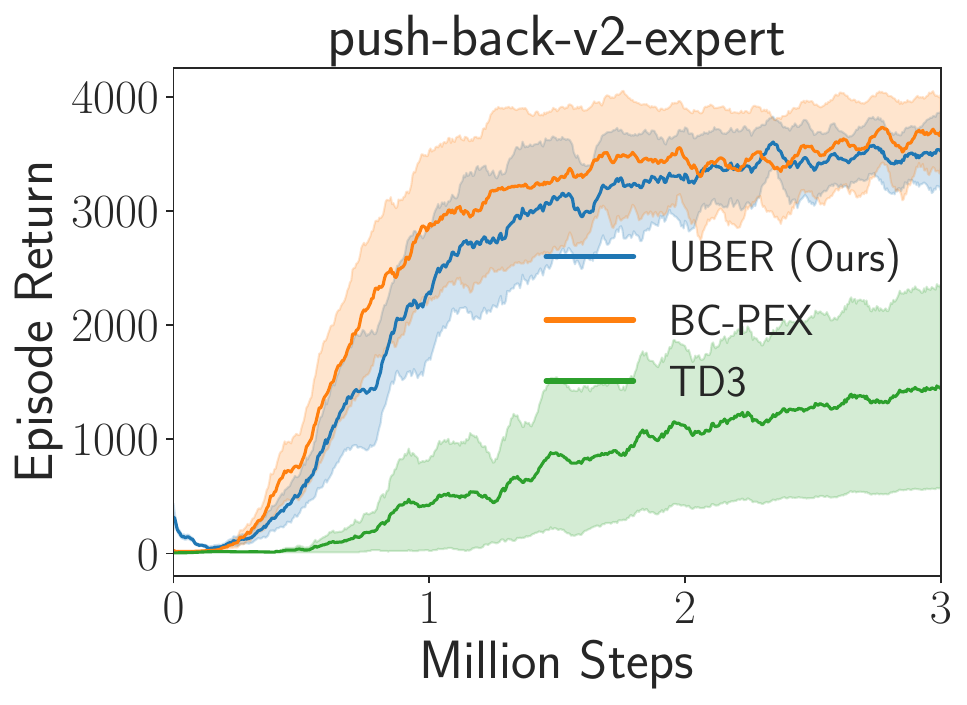}}
    \subfigure{\includegraphics[scale=0.2]{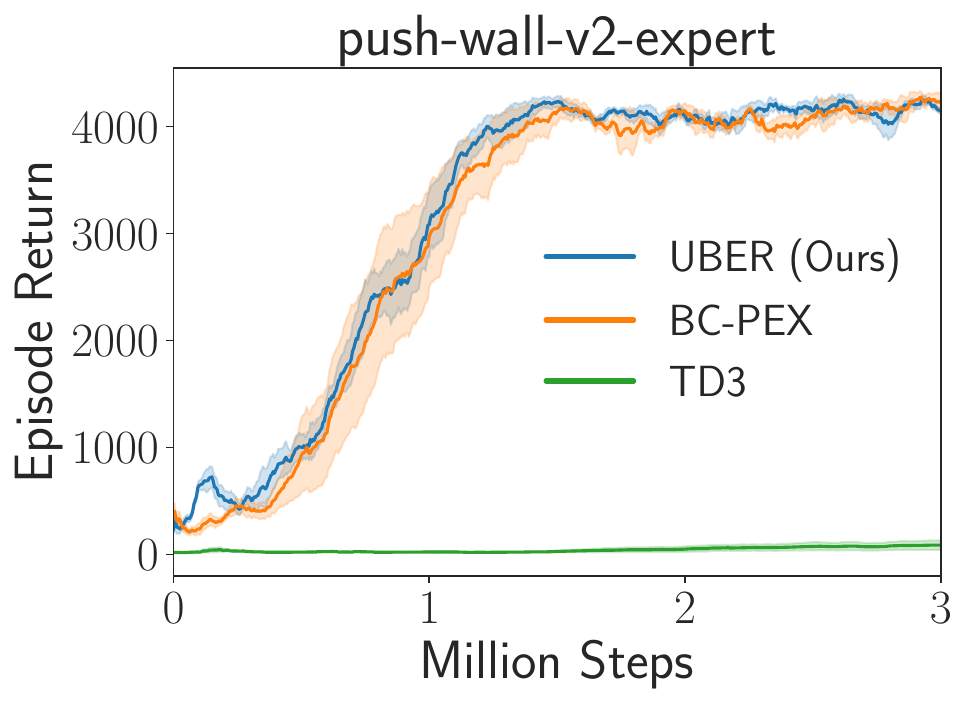}}
    \subfigure{\includegraphics[scale=0.2]{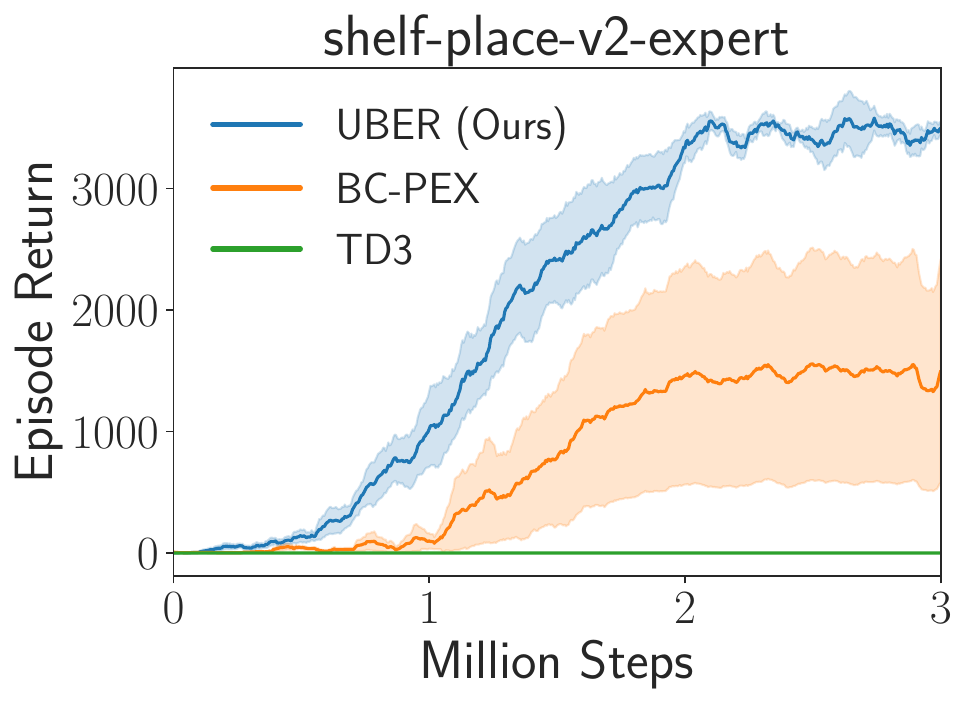}}
    \subfigure{\includegraphics[scale=0.2]{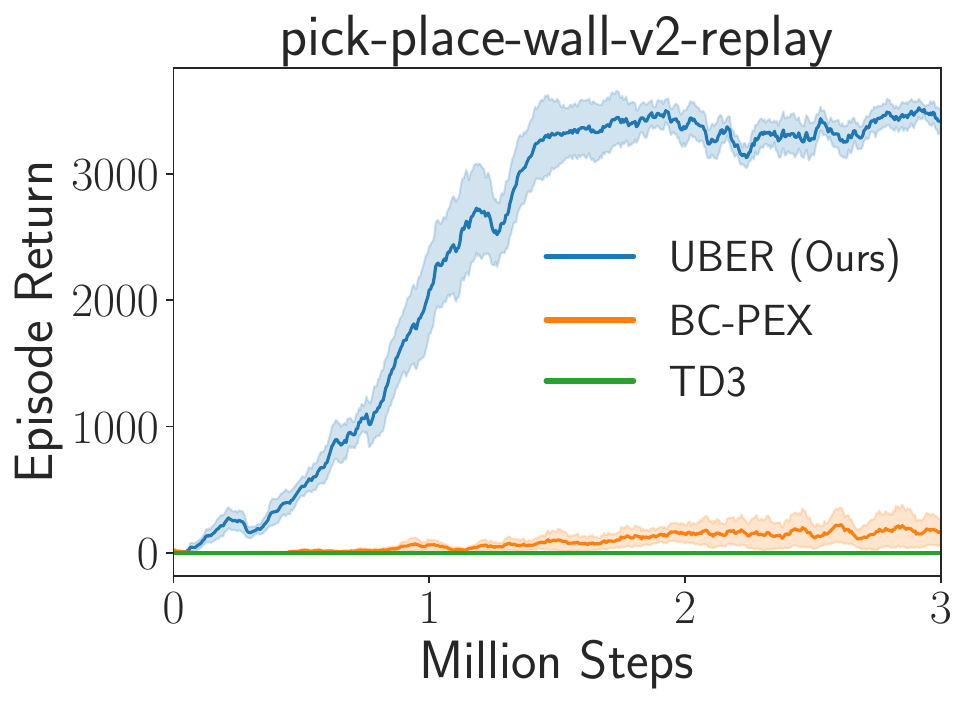}}
    \subfigure{\includegraphics[scale=0.2]{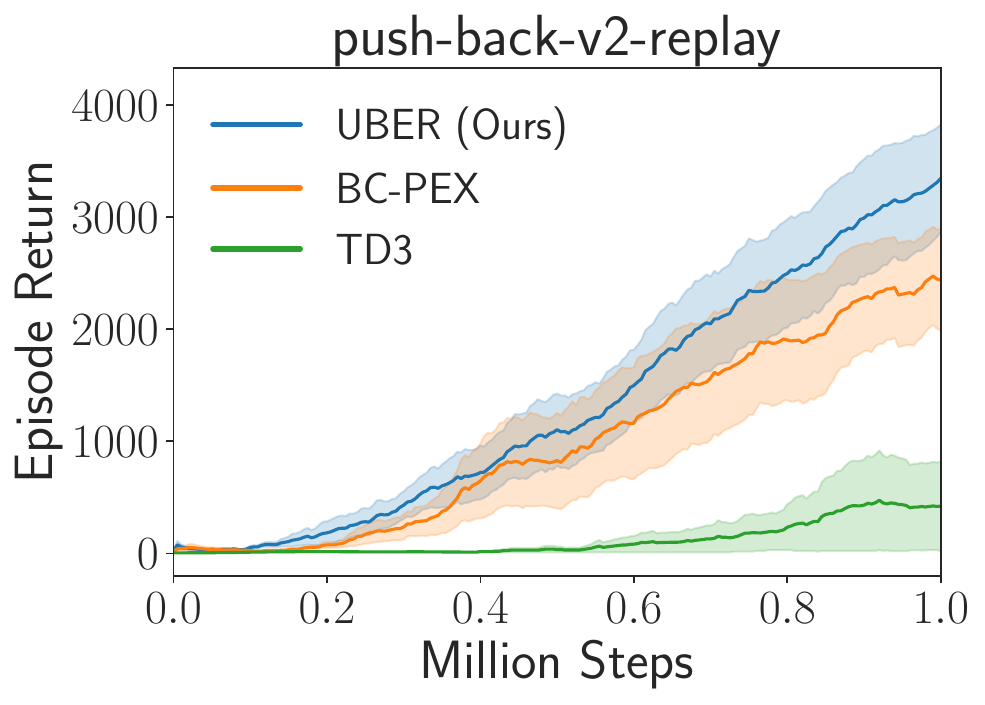}}
    \subfigure{\includegraphics[scale=0.2]{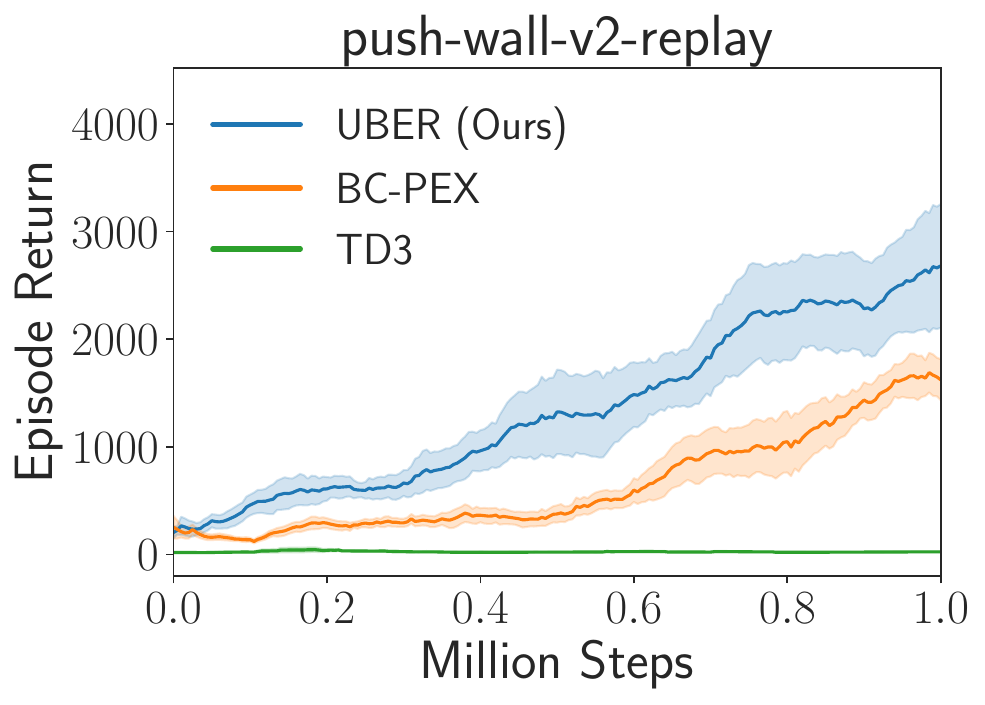}}
    \subfigure{\includegraphics[scale=0.2]{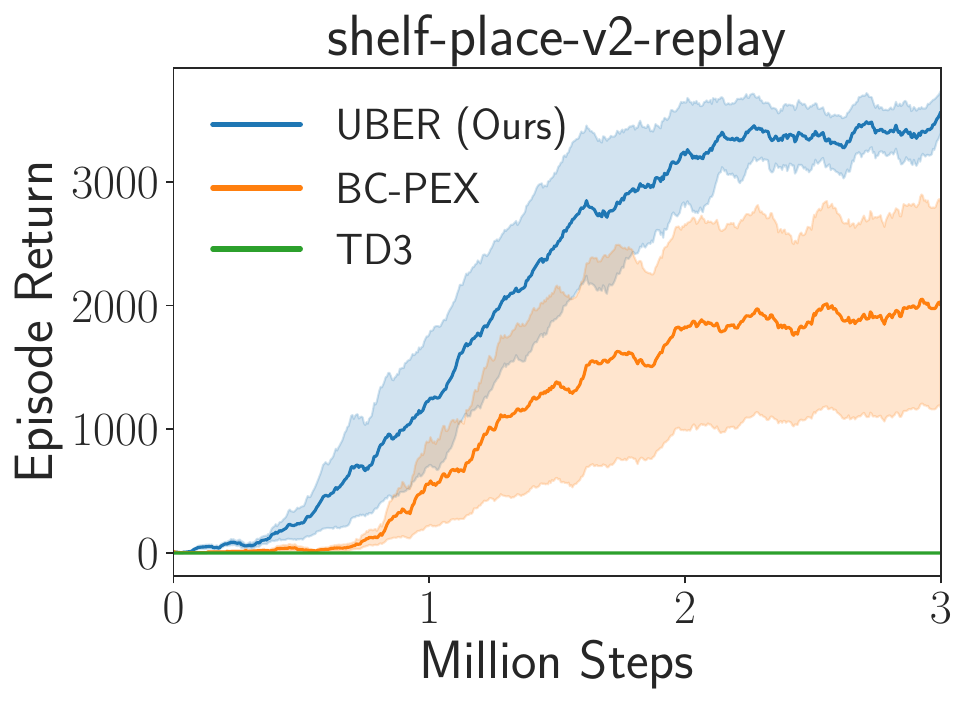}}
    \caption{Experimental results on the meta-world based on two types of offline datasets, expert and replay. 
    All experiment results were averaged over five random seeds. Our method achieves better or comparable results than the baselines consistently.}
    \label{fig: appendix multi-task}
\end{figure}

\begin{figure}[t]
    \centering
    \subfigure{\includegraphics[scale=0.22]{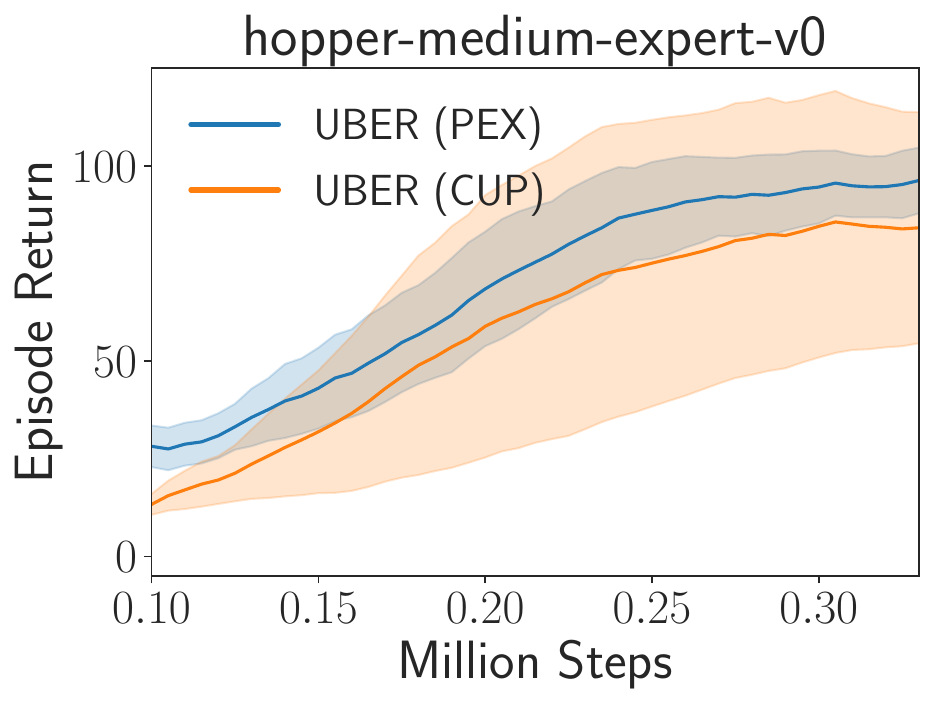}}\subfigure{\includegraphics[scale=0.22]{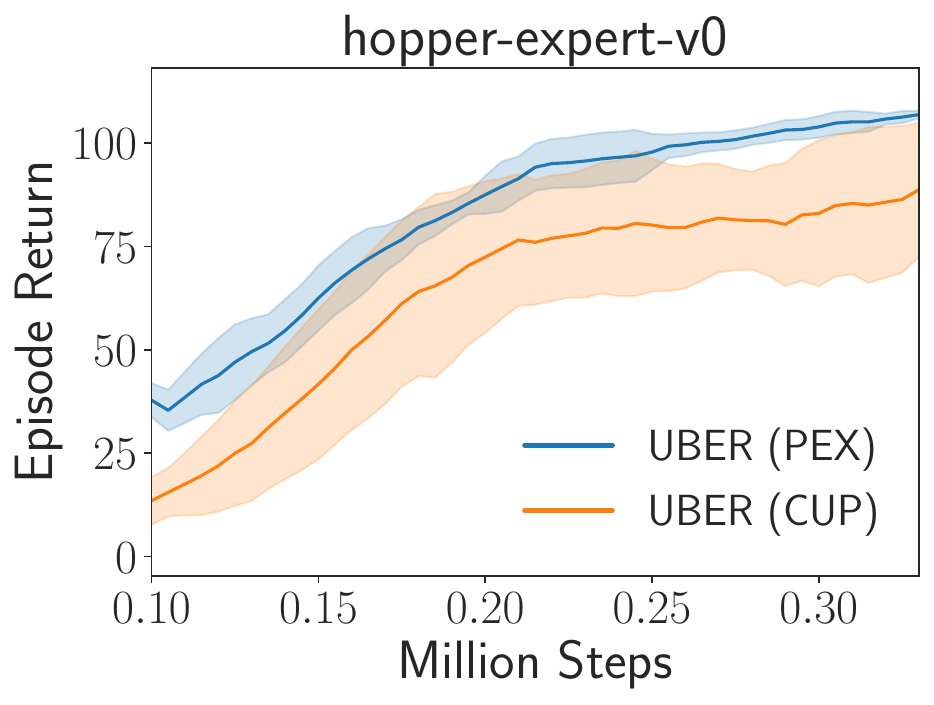}}
    \subfigure{\includegraphics[scale=0.22]{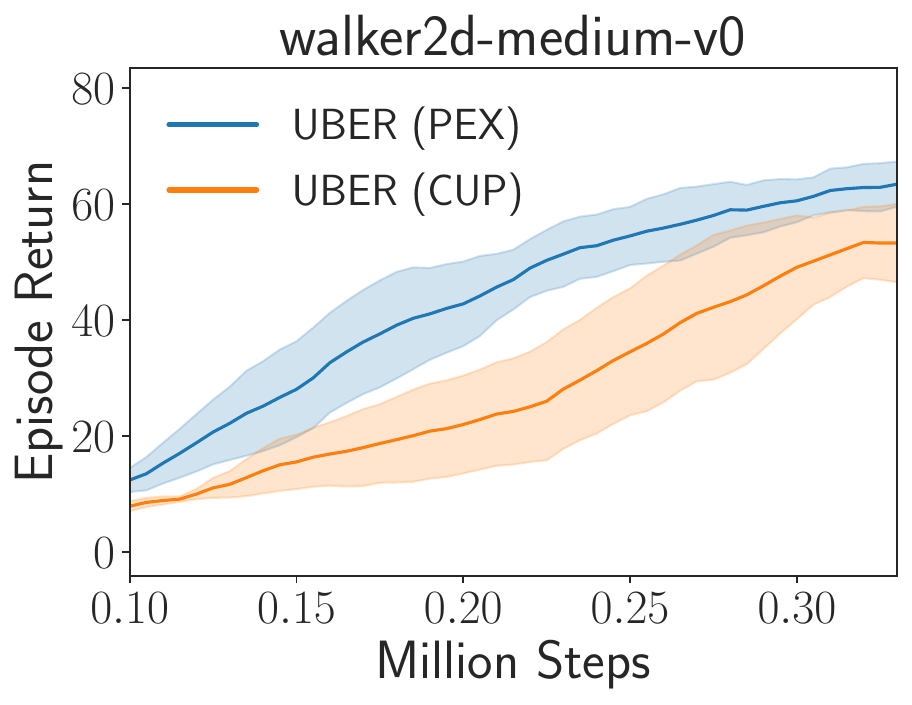}}\subfigure{\includegraphics[scale=0.22]{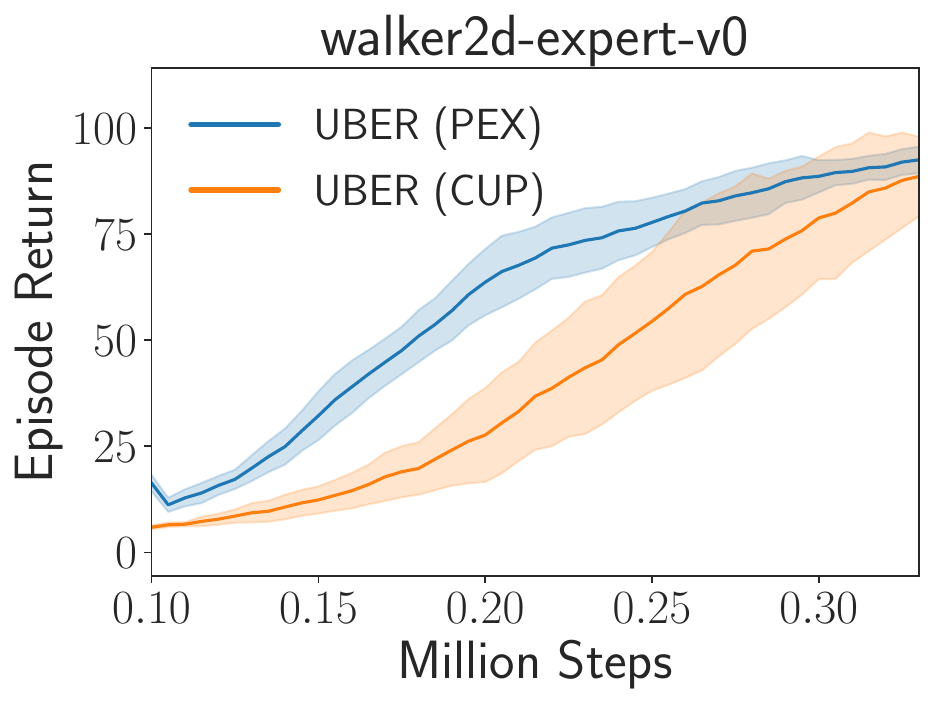}}

    \caption{Ablation study for the online policy reuse module. Using policy expansion (PEX) for downstream tasks is better than critic-guided policy reuse (CUP) in general.}
    \label{fig: policy reuse ablation}
\end{figure}

\begin{figure}[t]
    \centering
    \subfigure{\includegraphics[scale=0.21]{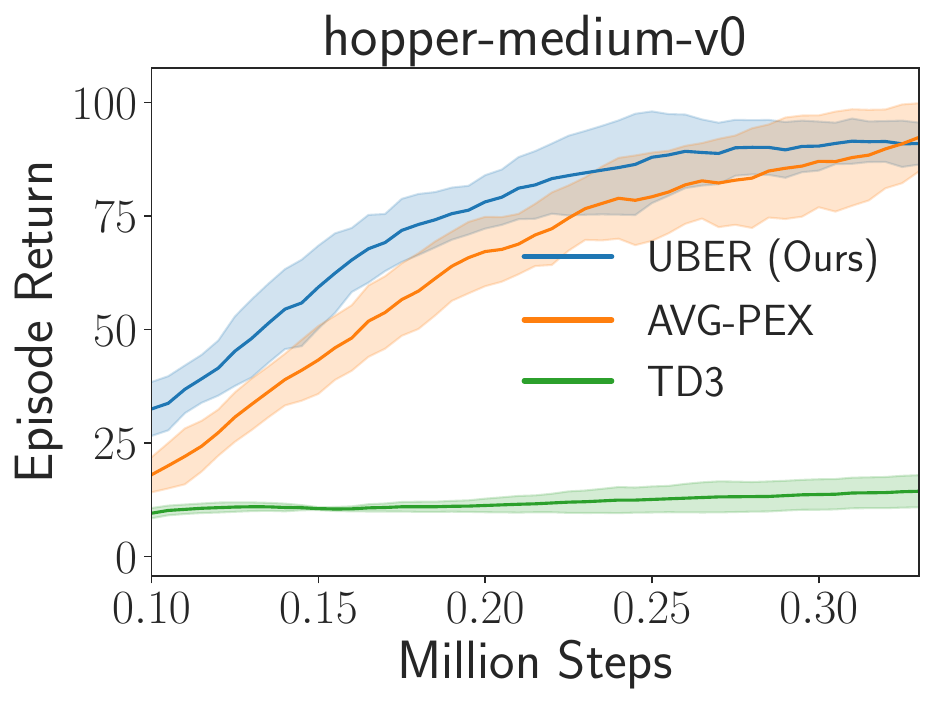}}
    \subfigure{\includegraphics[scale=0.21]{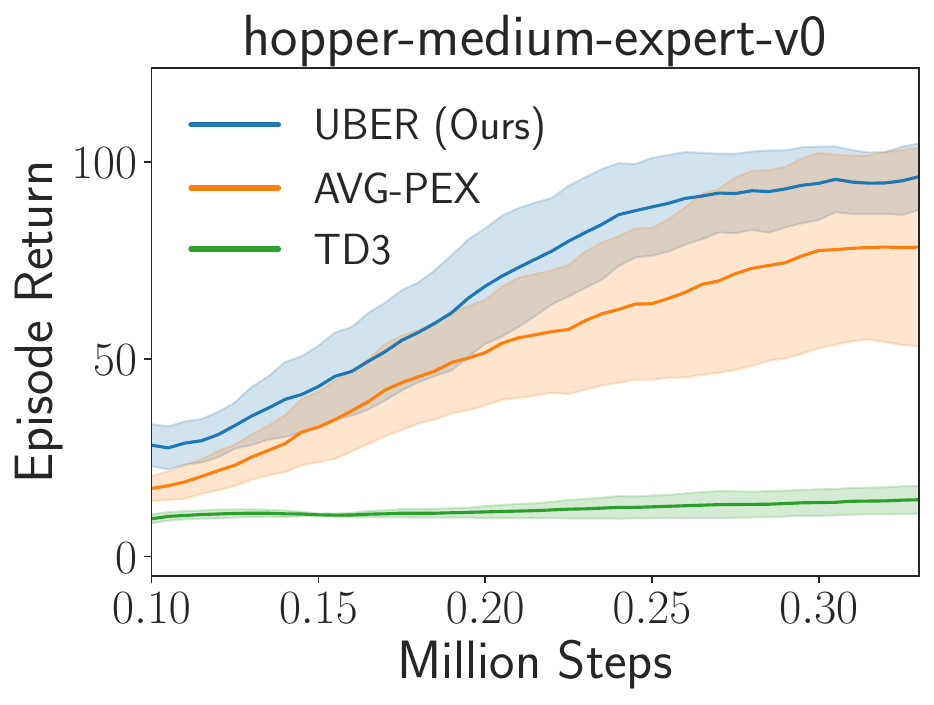}}\subfigure{\includegraphics[scale=0.21]{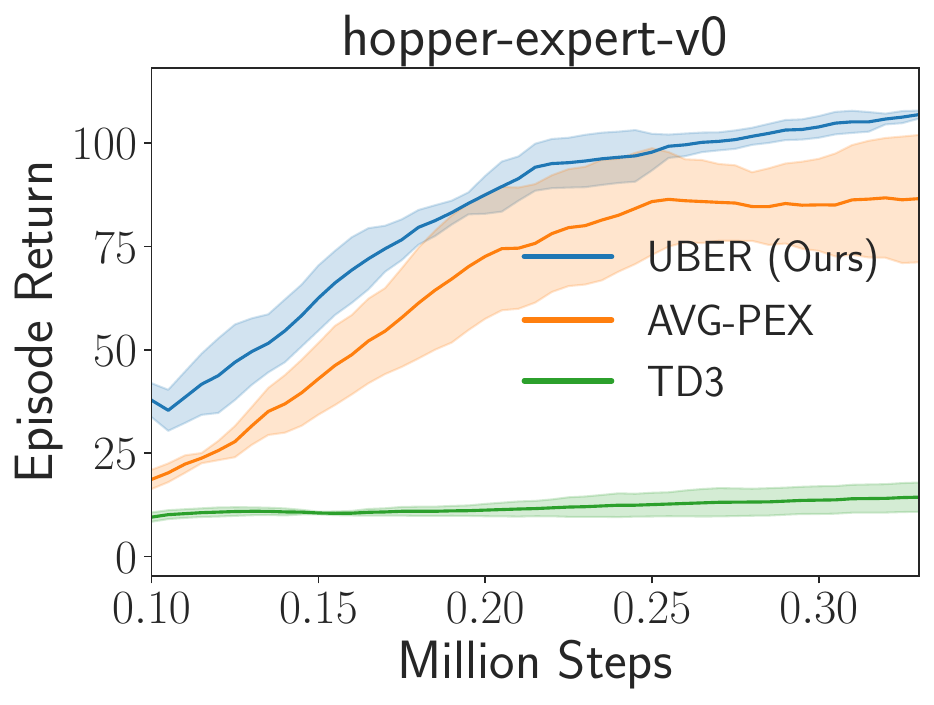}}
    \subfigure{\includegraphics[scale=0.21]{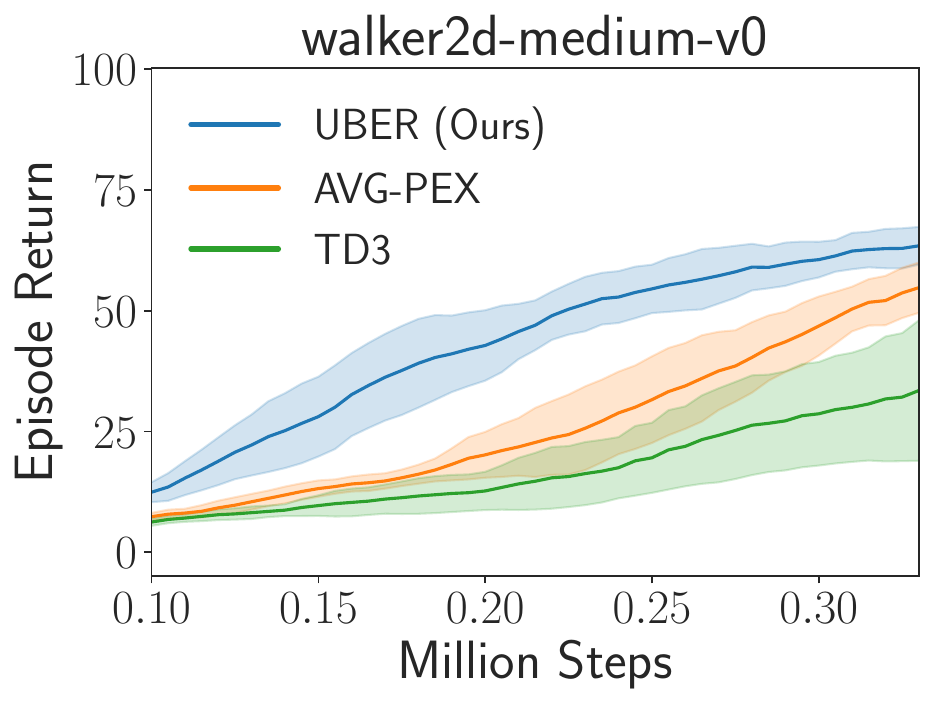}}
    
    
    \caption{Ablation study for the random intent priors module. Using average reward for policies learning can lead to nearly optimal performances on the original task. Nevertheless, our method outperforms such a baseline (AVG-PEX) consistently, showing the importance of behavioral diversity.}
    \label{fig: random intent ablation}
\end{figure}

\subsection{Experimental Results}
\paragraph{Answer of Question 1:}
To show that UBER can generate diverse behaviors, we plot the return distribution of behaviors generated by UBER, as well as the original return distribution from the dataset. The experimental results are summarized in Figure~\ref{fig: random intent sta}. We also provide the entropy of the return distribution in Table~\ref{tab:return_entropy} for a numerical comparison. We can see that (1) UBER can extract policies that perform better than the dataset, especially when the dataset does not have high-quality data; (2) UBER can learn a diverse set of behaviors, spanning a wider distribution than the original dataset. 
We hypothesize that diversity is one of the keys that UBER can outperform behavior cloning-based methods.


Further, we conduct experiments to test if the set of random intents can cover the true intent. We calculate the correlation of $N=256$ random rewards with the true reward and measure the linear projection error. The results in Table~\ref{tab: coe and proj error} indicate that random intents can have a high correlation with true intent, and linear combinations of random rewards can approximate the true reward function quite well. 

To test whether UBER can learn useful behaviors for downstream tasks, we compare UBER with BC-PEX, AVG-PEX, and TD3 in the online phase in the Mujoco domains.
The experimental results in Figure~\ref{fig: mujoco result} show that our method achieves superior performance than baselines. By leveraging the behavior library generated from UBER, the agent can learn much faster than learning from scratch, as well as simple behavior cloning methods.
Further, the results in Figure~\ref{fig: new baseline result} show that UBER also performs better than behavior extraction and data sharing baselines in most tasks. 
This is because prior methods extract behaviors in a behavior-cloning manner, which lacks diversity and leads to degraded performance for downstream tasks.

In addition, to test UBER's generality across various offline and online methods on various domains, we also conduct experiments on Antmaze tasks.
We use IQL~\citep{kostrikov2021offline} and RLPD~\citep{ball2023efficient} as the backbone of offline and online algorithms.
The experimental results in Table~\ref{tab: antmaze result} show that our method achieves stronger performance than baselines.
RLPD relies heavily on offline data and RLPD without true reward has zero return on all tasks.
Differently, UBER extracts a useful behavior policy set for the online phase, achieving similar sample efficiency as the oracle method that has reward label for the offline dataset. We also provide the learning curve for the antmaze environment in Figure~\ref{fig: antmaze result}.

\paragraph{Answer of Question 2:}
To test whether the behavior set learned from UBER can benefit multiple downstream tasks, we conducted the multi-task experiment on Meta-World~\citep{yu2020meta}, which requires learning various robot skills to perform various manipulation tasks.
We first extract behaviors with random intent priors from the selected tasks and then use the learned behaviors for various downstream tasks.
The experimental results in Appendix~\ref{appendix: addtional exp} show that the prior behaviors of UBER can be successfully transferred across multi-tasks than baselines.
Please refer to Appendix~\ref{appendix: addtional exp} for the experimental details and results.

\paragraph{Answer of Question 3:}
To understand what contributes to the performance of UBER, we perform ablation studies for each component of UBER.
We first replace the policy expansion module with critic-guided policy reuse~\citep[CUP;][]{zhang2022cup}, to investigate the effect of various online policy reuse modules. 
The experimental results in Figure~\ref{fig: policy reuse ablation} show that UBER+PEX generally outperforms UBER+CUP.
We hypothesize that CUP includes a hard reuse method, which requires the optimized policy to be close to the reused policies during training, 
while PEX is a soft method that only uses the pre-trained behaviors to collect data. This makes PEX more suitable to our setting since UBER may generate undesired behaviors.

Then, we ablate the random intent priors module.
We use the average reward in the offline datasets to learn offline policies and then follow the same online policy reuse process in Algorithm~\ref{alg: online rl}. Using average reward serves as a strong baseline since it may generate near-optimal performance as observed in~\citet{shin2023benchmarks}.
We name the average reward-based offline behavior extraction method as AVG-PEX.
The experimental results in Figure~\ref{fig: random intent ablation} show that while average reward-based offline optimization can extract some useful behaviors and accelerate the downstream tasks, using random intent priors performs better due to the diversity of the behavior set.

\section{Conclusion}
This paper presents Unsupervised Behavior Extraction via Random Intent Priors (UBER), a novel approach to enhance reinforcement learning using unsupervised data. UBER leverages random intent priors to extract diverse and beneficial behaviors from offline, reward-free datasets. Our theorem justifies the seemingly simple approach, and our experiments validate UBER's effectiveness in generating high-quality behavior libraries, outperforming existing baselines, and improving sample efficiency for deep reinforcement learning algorithms.

 UBER unlocks the usage of abundant reward-free datasets, paving the way for more practical applications of RL. UBER focuses on learning behaviors, which is orthogonal to representation learning. It is an interesting future direction to further boost online sample efficiency by combining both approaches. Consuming large unsupervised datasets is one of the keys to developing generalist and powerful agents. We hope the principles and techniques encapsulated in UBER can inspire further research and development in unsupervised reinforcement learning.

\section{Acknowledegement}
This work is supported in part by Science and Technology Innovation 2030 - ``New Generation Artificial Intelligence'' Major Project (No. 2018AAA0100904) and the National Natural Science Foundation of China (62176135).


\nocite{*}


\bibliographystyle{icml2022}
\bibliography{reference}

\newpage
\appendix
\onecolumn
\section{Missing Proofs}
\label{appenix: proof}
\subsection{Proof of Proposition~\ref{intention_completeness}}
\label{appenix: proof1}
\begin{proposition}[Proposition~\ref{intention_completeness} restated]
    For any behavior $\pi$, there exists an intent $z$ with reward function $r(\cdot,\cdot,z)$ such that $\pi$ is the optimal policy under $r(\cdot,\cdot,z)$. That is, for any $\pi=\{\pi_h\}_{h=1}^H, \pi_h: \cS\rightarrow \Delta(\cA)$, there exists $z\in \cZ$ such that
    \$
    V^{\pi}_h(s,z) = V^*_h(s,z),~ \forall (s,a,h) \in \cS\times\cA\times [H]. 
    \$
    Moreover, if $\pi$ is deterministic, then there exists $z\in\cZ$ such that $\pi$ is the unique optimal policy under $r(\cdot,\cdot,z)$, in the sense that for all optimal policy $\pi^*_{z}, \pi^*_{h,z}(\cdot|s)=\pi_h(\cdot|s), \forall d^{\pi}_h(s)>0$.
\end{proposition}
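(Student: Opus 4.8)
The plan is to exhibit, for each policy $\pi$, an explicit reward function under which $\pi$ is optimal, taking $\cZ$ to be rich enough to index all bounded rewards $r(\cdot,\cdot,z)=\{r_h(\cdot,\cdot,z)\}_{h=1}^H$. The guiding idea is that a policy is optimal exactly when it collects the maximal possible reward at every step, so I reward the agent for staying inside the support of $\pi$.

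For the first (stochastic) claim, I would set
\$
r_h(s,a,z)=\mathbbm{1}\bigl[a\in\supp{\pi_h(\cdot\given s)}\bigr].
\$
Since $\pi_h$ places mass only on actions in $\supp{\pi_h(\cdot\given s)}$, following $\pi$ earns reward $1$ at every step, giving $V^\pi_h(s,z)=\sum_{t=h}^{H}1=H-h+1$ for every $(s,h)$. On the other hand each per-step reward is at most $1$, so any policy $\pi'$ obeys $V^{\pi'}_h(s,z)\le H-h+1$ and hence $V^*_h(s,z)\le H-h+1$. Combining the two bounds forces $V^*_h(s,z)=V^\pi_h(s,z)=H-h+1$, which is precisely the claimed equality for all $(s,h)$.

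For the deterministic refinement the support is the singleton $\{\pi_h(s)\}$, so the reward specializes to $r_h(s,a,z)=\mathbbm{1}[a=\pi_h(s)]$ and the argument above still gives $V^*_h(s,z)=H-h+1$ from every state, whence $V^*_{h+1}(\cdot,z)\equiv H-h$. Plugging this into the Bellman optimality equation,
\$
Q^*_h(s,a,z)=\mathbbm{1}[a=\pi_h(s)]+\EE_{s'\sim\cP_h(\cdot\given s,a)}\bigl[V^*_{h+1}(s',z)\bigr]=\mathbbm{1}[a=\pi_h(s)]+(H-h),
\$
whose right-hand side is uniquely maximized at $a=\pi_h(s)$. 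Thus any optimal policy must place all its mass on $\pi_h(s)$ at every state, and restricting to states with $d^\pi_h(s)>0$ yields the stated uniqueness $\pi^*_{h,z}(\cdot\given s)=\pi_h(\cdot\given s)$; in fact the policy is pinned down on all states, which is stronger than required.

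I expect no serious obstacle: the construction is elementary and the only care needed is tracking the horizon constant $H-h+1$ and keeping the two regimes separate (support indicator for the stochastic existence claim, exact-action indicator for deterministic uniqueness). The single modelling caveat I would flag is that over a continuous action space the indicator is supported on a measure-zero set; there I would replace it by $r_h(s,a,z)=-\|a-\pi_h(s)\|^2$, whose unique per-step maximizer is again $a=\pi_h(s)$, so the identical value-comparison argument goes through with optimal value $0$.
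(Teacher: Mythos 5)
Your construction is essentially the paper's own proof: the paper likewise rewards membership in the support of $\pi$, via the visitation-density indicator $r_h(s,a,z)=\mathbbm{1}(\exists s_h,\, d^\pi_h(s,a,s_h)>0)$, which reduces to your action-support indicator $\mathbbm{1}[a\in\supp{\pi_h(\cdot\given s)}]$, and it concludes optimality by the same saturation-at-maximal-value comparison, then handles the deterministic uniqueness claim by the same one-step-reward-loss argument you make explicit through the Bellman equation. Your write-up is, if anything, slightly more careful — the paper states $V^\pi_h=H-h$ where its own value definition gives $H-h+1$, and it does not mention the continuous-action caveat you flag — but the route is the same.
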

\begin{proof}
For any policy $\pi$ and any reward function $r_h(\cdot,\cdot,z)$, we have 
\begin{align}
V^\pi_{h}(s_h,z) =  \sum_{t=h}^H\int{r_t(s,a,z) d^\pi_t(s,a,s_h) \mathrm{d}s \mathrm{d}a},
\end{align}
where $d^\pi_t(s,a,s_h)$ is the visitation density at step $t$ for $\pi$ starting from $s_h$.

Let $r_h(s,a,z)=\mathbbm{1}(\exists s_h \in \cS, d^\pi_h(s,a,s_h)>0)$, then we have 
\begin{align}
    V^{\pi}_{h}(\cdot,z) = H-h = V_{h,\text{max}} = V^*_{h}(\cdot,z). 
\end{align}

If $\pi$ is deterministic, then $d^\pi_h(s,a)$ is a one-hot distribution at any given $s$. So any $\pi'$ such that there exists $s_h$, $\pi_h(\cdot|s_h)\neq \pi'_h(\cdot|s_h)$ and $d_h^\pi(s_h)>0$, we have 

\$
\int{r_h(s_h,a,z) d^{\pi'}_h(s_h,a) \mathrm{d}a} < 1,
\$ 
Then we have $V^{\pi'}_{h}(s_h,z)< V_{h,\text{max}}$, which means $\pi'$ is not optimal. Therefore, the optimal policy $\pi$ is unique.

\end{proof}

Proposition~\ref{intention_completeness} shows that any deterministic policy can be uniquely determined by some intent $z$ so that it is extractable via standard RL.

\subsection{Proof of Theorem~\ref{theorem:1}}
\label{appenix: proof2}
\begin{theorem}[Theorem~\ref{theorem:1} restated.]
    Consider linear MDP as defined in Definition~\ref{assumption:linear}. With an offline dataset $\cD$ with size $N$, and the PEVI algorithm~\citep{jin2021pessimism}, the suboptimality of learning from an intent $z\in \cZ$ with size $|\cZ|$ satisfies
    \#
        \text{\rm SubOpt}(\widehat{\pi};r_z)  \leq 4c\sqrt{\frac{C^\dagger_z d^2H^3\iota}{N}},
    \#
    with probability $1-\delta$ for sufficiently large $N$, where $\iota=\log{\frac{4dN |\cZ|}{\delta}}$ is a logarithmic factor, $c$ is an absolute constant and 
    \$
    C^{\dagger}_{z} = \max_{h\in[H]} \sup_{\|x\|=1}\frac{x^{\top}\Sigma_{\pi_z,h} x}{x^{\top}\Sigma_{\rho_h} x}, 
    \$
    with 
    \$
    \Sigma_{\pi_z,h} &= \EE_{(s,a)\sim d_{\pi_z,h}(s,a)} [\phi(s,a)\phi(s,a)^{\top}],\quad \Sigma_{\rho_h} = \EE_{\rho_h} [\phi(s,a)\phi(s,a)^{\top}].
    \$
\end{theorem}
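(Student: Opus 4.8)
The plan is to reduce the claim to the pessimistic value iteration (PEVI) guarantee of \citet{jin2021pessimism} instantiated with the fixed reward $r_z$, and then to promote the per-$z$ bound into one that holds simultaneously for every $z\in\cZ$ via a union bound; this union bound is the \emph{only} place the extra $\log|\cZ|$ factor inside $\iota$ enters. The first observation is that, under \Cref{assumption:linear}, the reward $r_h(\cdot,\cdot,z)=\la\phi(\cdot,\cdot),z_h\ra$ is linear in the feature map and the transition is linear as well, so for every $z$ the Bellman backup of any value function bounded by $H$ remains in the induced linear class. Hence PEVI run on $\cD$ reannotated with $r_z$ produces, at each step $h$, an estimate $\widehat{Q}_{h}(\cdot,\cdot,z)=\la\phi(\cdot,\cdot),\widehat{w}_{h,z}\ra-\Gamma_{h}(\cdot,\cdot)$, where $\Gamma_h(s,a)=\beta\,\sqrt{\phi(s,a)^\top\Lambda_h^{-1}\phi(s,a)}$ is the pessimistic penalty, $\Lambda_h=\lambda I+\sum_{\tau}\phi(s_h^\tau,a_h^\tau)\phi(s_h^\tau,a_h^\tau)^\top$ is the empirical Gram matrix built from $\cD$, and $\beta=\widetilde{\cO}(dH\sqrt{\iota})$ is the confidence radius.

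Next I would establish that $\Gamma_h$ is a valid uncertainty quantifier, i.e. that $|(\widehat{\BB}_h\widehat{V}_{h+1})(s,a)-(\BB_h\widehat{V}_{h+1})(s,a)|\le\Gamma_h(s,a)$ for all $(s,a,h)$ on a good event. This is exactly the self-normalized concentration step of \citet{jin2021pessimism}: one controls $\big\|\sum_{\tau}\phi(s_h^\tau,a_h^\tau)\,[\widehat{V}_{h+1}(s_{h+1}^\tau)-(\PP_h\widehat{V}_{h+1})(s_h^\tau,a_h^\tau)]\big\|_{\Lambda_h^{-1}}$ using an $\epsilon$-net over the class of value functions induced by the linear parametrization, whose log-covering number scales as $\widetilde{\cO}(d^2)$. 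Because the estimate and the penalty depend on $z$ only through the reannotated reward, which enters linearly and is observed exactly, the sole $z$-dependent randomness to control is the transition noise through $\widehat{V}_{h+1}$; taking a union bound of the good event over the $|\cZ|$ intents replaces $\log(\cdot/\delta)$ by $\log(|\cZ|/\delta)$, absorbed into $\iota=\log(4dN|\cZ|/\delta)$. This gives the uniform guarantee that, with probability $1-\delta$, $\Gamma_h$ is a valid quantifier for $r_z$ simultaneously for every $z\in\cZ$.

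On the good event I would invoke the standard pessimism decomposition $\mathrm{SubOpt}(\widehat{\pi};r_z)\le 2\sum_{h=1}^H\EE_{\pi_z}[\Gamma_h(s_h,a_h)]$, where $\pi_z$ is the comparator (optimal) policy under $r_z$, and then convert the expected penalty into the coverage coefficient. Jensen's inequality gives $\EE_{\pi_z}[\sqrt{\phi^\top\Lambda_h^{-1}\phi}]\le\sqrt{\mathrm{tr}(\Lambda_h^{-1}\Sigma_{\pi_z,h})}$, and $\Sigma_{\pi_z,h}\preceq C^\dagger_z\,\Sigma_{\rho_h}$ by definition of $C^\dagger_z$, so this is at most $\sqrt{C^\dagger_z\,\mathrm{tr}(\Lambda_h^{-1}\Sigma_{\rho_h})}$. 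For sufficiently large $N$ a matrix concentration bound yields $\Lambda_h\succeq c N\,\Sigma_{\rho_h}$ with high probability (this is exactly where the ``sufficiently large $N$'' hypothesis is used), whence $\mathrm{tr}(\Lambda_h^{-1}\Sigma_{\rho_h})\le d/(cN)$. Summing over the $H$ steps, substituting $\beta=\widetilde{\cO}(dH\sqrt{\iota})$, and carefully tracking the dependence on $d$ and $H$ then yields the claimed rate $4c\sqrt{C^\dagger_z d^2H^3\iota/N}$, matching \eqref{eq:main_result}.

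The main obstacle is the uniform concentration in the second step: the self-normalized / covering argument must hold at once over the data-dependent value-function class (through the $\epsilon$-net) and over all $|\cZ|$ intents, while keeping $\Gamma_h$ independent of $z$ so that a single $\Lambda_h$ serves every reward. Verifying that the reward reannotation contributes no additional estimation error (the reward being linear and known exactly given $z$), and that the covering number and the union bound compose without inflating the exponents of $d$ and $H$, is the delicate part; by contrast, the empirical-to-population covariance conversion in the final step is comparatively routine but is precisely what forces the large-$N$ regime.
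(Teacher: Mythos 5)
Your overall route is the same as the paper's. The paper's proof is essentially two lines: (i) for each fixed intent $z$ the reannotated problem is still a linear MDP (the reward $\la \phi(s,a), z_h\ra$ is linear and known exactly once $z$ is fixed), so the PEVI guarantee of \citet{jin2021pessimism} applies verbatim; (ii) set $\delta' = \delta/|\cZ|$ and union bound over intents, which is the sole origin of the $\log|\cZ|$ inside $\iota$. You do exactly this, and your elaboration of the inner mechanics --- a single $z$-independent Gram matrix $\Lambda_h$ and penalty $\Gamma_h$, the $z$-dependence entering only through $\widehat{V}_{h+1}$ since the reward contributes no estimation error, the pessimism decomposition, Jensen plus $\Sigma_{\pi_z,h} \preceq C^{\dagger}_z \Sigma_{\rho_h}$, and the empirical-to-population covariance conversion that consumes the ``sufficiently large $N$'' hypothesis --- is sound and considerably more explicit than what the paper writes down.

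There is, however, a genuine quantitative gap in your final step. With the confidence radius you yourself adopt, $\beta = \tilde{\cO}(dH\sqrt{\iota})$ (which is what the $\tilde{\cO}(d^2)$ log-covering-number argument of \citet{jin2021pessimism} forces), the chain you describe yields $\mathrm{SubOpt}(\widehat{\pi}; r_z) \le 2\sum_{h=1}^{H} \beta \sqrt{C^{\dagger}_z d/(cN)} = \cO\bigl(\sqrt{C^{\dagger}_z d^3 H^4 \iota / N}\bigr)$, which is worse than the claimed bound by a factor of $\sqrt{dH}$; this $d^{3/2}H^2/\sqrt{N}$ rate is precisely what Jin et al.\ themselves report under coverage assumptions, and no amount of ``carefully tracking the dependence on $d$ and $H$'' within the Hoeffding-type self-normalized analysis will improve it. The paper reaches $\sqrt{d^2H^3\iota/N}$ by invoking the finer analysis of \citet{xiong2022nearly} (Bernstein-type, variance-weighted concentration with a reference-advantage decomposition), and a complete proof of the theorem as stated must either reproduce that refinement or cite it. The good news is that this refinement is orthogonal to your contribution: the union bound over $\cZ$ composes with the sharper per-intent bound exactly as it does with the cruder one, so the rest of your argument survives unchanged once the correct per-intent rate is plugged in.
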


\begin{proof}
Using the fact that any linear reward function is still a linear MDP, we can reuse the proof for standard offline RL.

Let $\delta' = \frac{\delta}{|\cZ|}$ and following ~\citet{jin2021pessimism}, we have the result immediately with a union bound. Note that the original bound in ~\citet{jin2021pessimism} scales as $\tilde{\cO}(d^3)$, but it can be improved to $\tilde{\cO}(d^2)$ by finer analysis without changing the algorithm~\citep{xiong2022nearly}.
\end{proof}

\subsection{Proof of Theorem~\ref{theorem:2}}
\label{appenix: proof3}
\begin{proof}
Each random reward function can be seen as one dimension of random feature for linear regression. Following Theorem 1 in ~\citet{rudi2017generalization}, and note that the true reward function has zero error, we have that the linear estimator $\widehat{r}_{\rm {ridge}}$ generated by ridge regression satisfy
\$
\E_{(s,a)\sim \rho}(\widehat{r}_{\rm {ridge}}(s,a) - r(s,a))^2 = \E_{(s,a)\sim \rho}\left(\sum_i \widehat{w}^{\rm {ridge}}_i r_i(s,a) - r(s,a)\right)^2\leq c_1 \log^2(18/\delta)/\sqrt{M}.
\$
Noting that 
\$\min_w \E_{(s,a)\sim \rho}\left(\sum_i w_i r_i(s,a) - r(s,a)\right)^2 \leq \E_{(s,a)\sim \rho}\left(\sum_i \widehat{w}^{\rm {ridge}}_i r_i(s,a)- r(s,a)\right)^2,\$
we have result immediately.




\end{proof}


\section{Experiments on Multi-task Transfer}
\label{appendix: addtional exp}
\paragraph{Answer of Question 2:}
We evaluate our method on Meta-World~\citep{yu2020meta}, a popular reinforcement learning benchmark composed of multiple robot manipulation tasks.
These tasks are correlated~(performed by the same Sawyer robot arm) while being distinct~(interacting with different objectives and having different reward functions).
Following~\cite{zhang2022cup}, we use datasets from three representative tasks: \texttt{Reach}, \texttt{Push}, and \texttt{Pick-Place}, as shown in Figure~\ref{fig: meta world} and we choose \texttt{Push-Wall}, \texttt{Pick-Place-Wall}, \texttt{Push-Back}, and \texttt{Shelf-Place} as the target tasks.

\begin{figure}[h]
    \centering\subfigure{\includegraphics[scale=0.5]{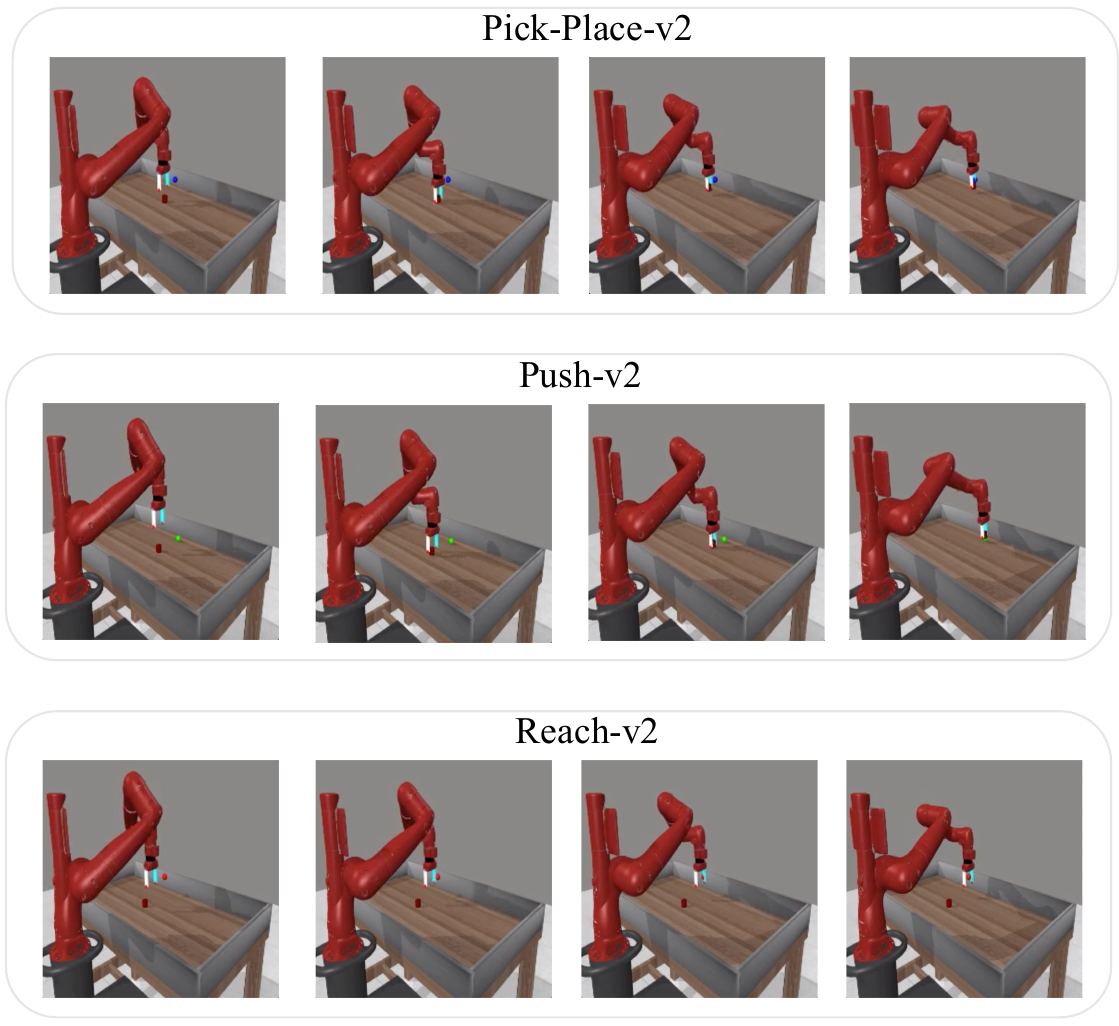}}
    \caption{Visualization of three representative tasks in the metaworld.}
    \label{fig: meta world}
\end{figure}

Two types of offline datasets were considered: an expert policy dataset and a replay dataset generated by TD3. The expert dataset comprised 0.1M data points obtained from expert demonstrations, while the replay dataset consisted of 0.3M data points sampled from 3M TD3 experiences.

Our experimental results, depicted in Figure~\ref{fig: appendix multi-task2}, reveal that UBER outperforms the baselines in most tasks. This underscores UBER's ability to effectively repurpose learned behaviors from various tasks for different downstream applications.


\begin{figure}[ht]
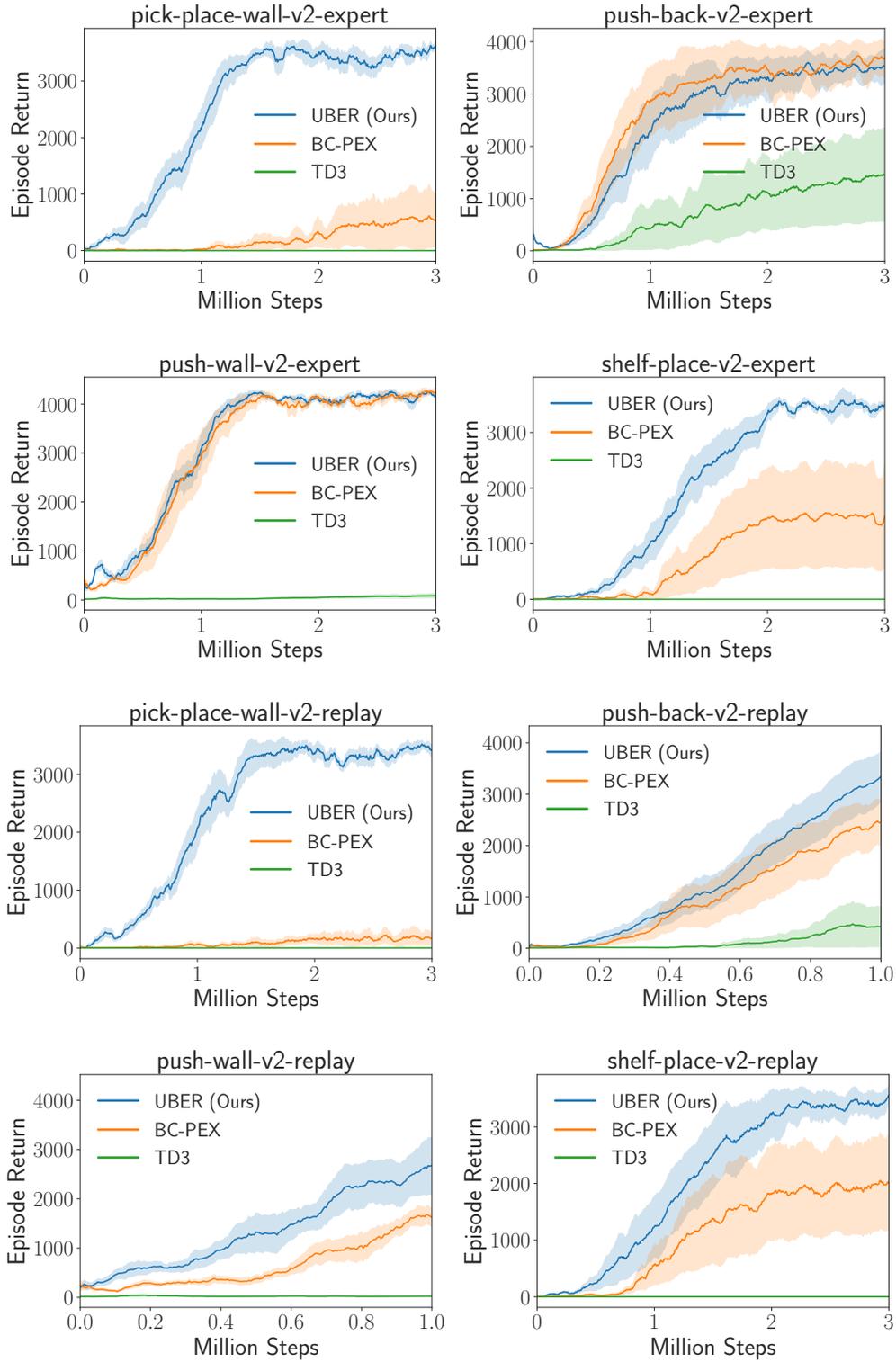

    \centering\subfigure{\includegraphics[scale=0.4]{meta_result/pick-place-wall-v2-expert.pdf}}
    \subfigure{\includegraphics[scale=0.4]{meta_result/push-back-v2-expert.pdf}}
    \subfigure{\includegraphics[scale=0.4]{meta_result/push-wall-v2-expert.pdf}}
    \subfigure{\includegraphics[scale=0.4]{meta_result/shelf-place-v2-expert.pdf}}
    \subfigure{\includegraphics[scale=0.4]{meta_result/pick-place-wall-v2-replay.pdf}}
    \subfigure{\includegraphics[scale=0.4]{meta_result/push-back-v2-replay.pdf}}
    \subfigure{\includegraphics[scale=0.4]{meta_result/push-wall-v2-replay.pdf}}
    \subfigure{\includegraphics[scale=0.4]{meta_result/shelf-place-v2-replay.pdf}}
    \caption{Complete experimental results on the meta-world based on two types of offline datasets, expert and replay. 
    All experiment results were conducted over five random seeds.}
    \label{fig: appendix multi-task2}
\end{figure}

\section{Additional Experiments}
\label{appendix: addi exp}
\paragraph{Correlation and projection error of random rewards.}
To investigate whether random rewards has a good coverage and a high correlation with the true reward, we calculate the maximum correlation between random rewards and the true reward, as well as the projection error of the true reward on the linear combination of random rewards on Mujoco tasks, as shown in Table~\ref{tab: coe and proj error}.

\begin{table*}[ht]
    \centering
    \begin{tabular}{c|ccc}
        \toprule
          Tasks & Max Correlation & Min Correlation & Projection Error \\
         \midrule
            hopper-medium-v0 & 0.569 & -0.568 & 0.016 \\
           hopper-medium-expert-v0 & 0.498 & -0.540 & 0.015\\
           hopper-expert-v0 & 0.423 & -0.415 & 0.011\\
           halfcheetah-medium-v0 & 0.569 & -0.568 & 0.016 \\
            halfcheetah-medium-expert-v0 & 0.605 & -0.589 & 0.047 \\
            halfcheetah-expert-v0 & 0.370 & -0.461 & 0.021 \\
            walker2d-medium-v0 & 0.475 & -0.582 & 0.046 \\
            walker2d-medium-expert-v0 & 0.495 & -0.472& 0.042 \\
            walker2d-expert-v0 & 0.358 & -0.503 & 0.019 \\
        \bottomrule
    \end{tabular}
    \caption{Minimum and maximum correlation and linear projection error for the true reward function using random reward functions on various tasks. The projection error $\epsilon$ is defined as $\epsilon=\|r-\hat{r}\|/\|r\|$, where $\hat{r}$ is the best approximation using linear combination of random rewards. Note that we use random reward functions (represented by neural networks) based on $(s,a)$ inputs rather than entirely random ones, which have a near-zero correlation with the true reward and $40\%$ projection error.}
    \label{tab: coe and proj error}
\end{table*}
\paragraph{Entropy of return distribution.}
To numerically show that UBER can generate a diverse behavior set beyond the original dataset, we calculate the entropy of the return distribution for each task, as shown in Table~\ref{tab:return_entropy}.

\begin{table}[ht]
    \centering
    \begin{tabular}{c|c@{\hspace{0.2em}}c@{\hspace{0.2em}}c@{\hspace{0.2em}}c@{\hspace{0.2em}}c@{\hspace{0.2em}}c}
    \toprule
         Methods & halfcheetah-m & halfcheetah-me & halfcheetah-e & hopper-m & hopper-me & hopper-e \\ \midrule
        BC & 0 & 0 & 0 & 0 & 0 & 0 \\ 
        DATASET & 1.21 & 1.47 & 0.86 & 0.71 & 1.97 & 0.82 \\ 
        UBER & 2.01 & 1.92 & 2.69 & 2.45 & 2.32 & 1.11 \\ \midrule
         Methods & walker2d-m & walker2d-me & walker2d-e & antmaze-md & antmaze-mr & ~ \\ \midrule
        BC & 0 & 0 & 0 & 0 & 0 & ~ \\
        DATASET & 1.44 & 1.88 & 2.43 & 0.63	& 0.29 & ~ \\
        UBER & 2.67 & 2.61 & 0.41 &	1.77 &	1.69 & ~ \\ \bottomrule
    \end{tabular}
   
    \caption{Entropy of the return distribution for different methods. Our method generate a behavior set that has a higher entropy in the return distribution than the original dataset consistently across various tasks.}
     \label{tab:return_entropy}
\end{table}
\paragraph{Distribution of behaviors.}
To show that UBER can generate diverse behaviors, we plot the distribution of behaviors generated by UBER, as well as the original distribution from the dataset. The experimental results are summarized in Figure~\ref{fig: appendix random intent sta}. We can see that when there are expert behaviors, most of the behaviors generated by UBER successfully lock to the optimal policy. When there are multiple modes of behavior like \texttt{medium-expert}, UBER can learn both behaviors, matching the distribution of the dataset. When there are no expert behaviors, UBER can learn diverse behaviors, some of which can achieve near-optimal performance. We hypothesize that diversity is one of the keys that UBER can outperform behavior cloning-based methods.

\begin{figure}[ht]
    \centering
    \subfigure{\includegraphics[scale=0.22]{distribution/hopper-medium.pdf}}
    \vspace{1mm}    \subfigure{\includegraphics[scale=0.22]{distribution/hopper-medium-expert.pdf}}
    \subfigure{\includegraphics[scale=0.22]{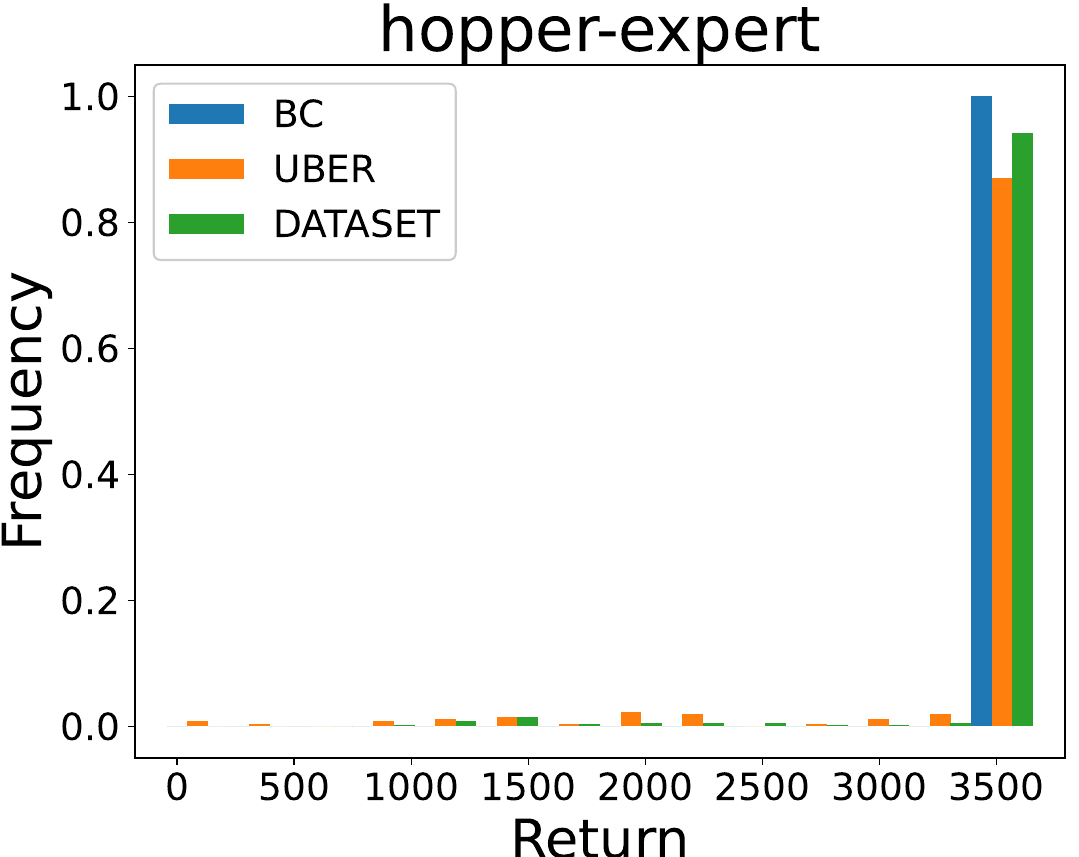}}
    \vspace{1mm}\subfigure{\includegraphics[scale=0.22]{distribution/walker2d-medium.pdf}}
    \subfigure{\includegraphics[scale=0.22]{distribution/walker2d-medium-expert.pdf}}
    \vspace{1mm}\subfigure{\includegraphics[scale=0.22]{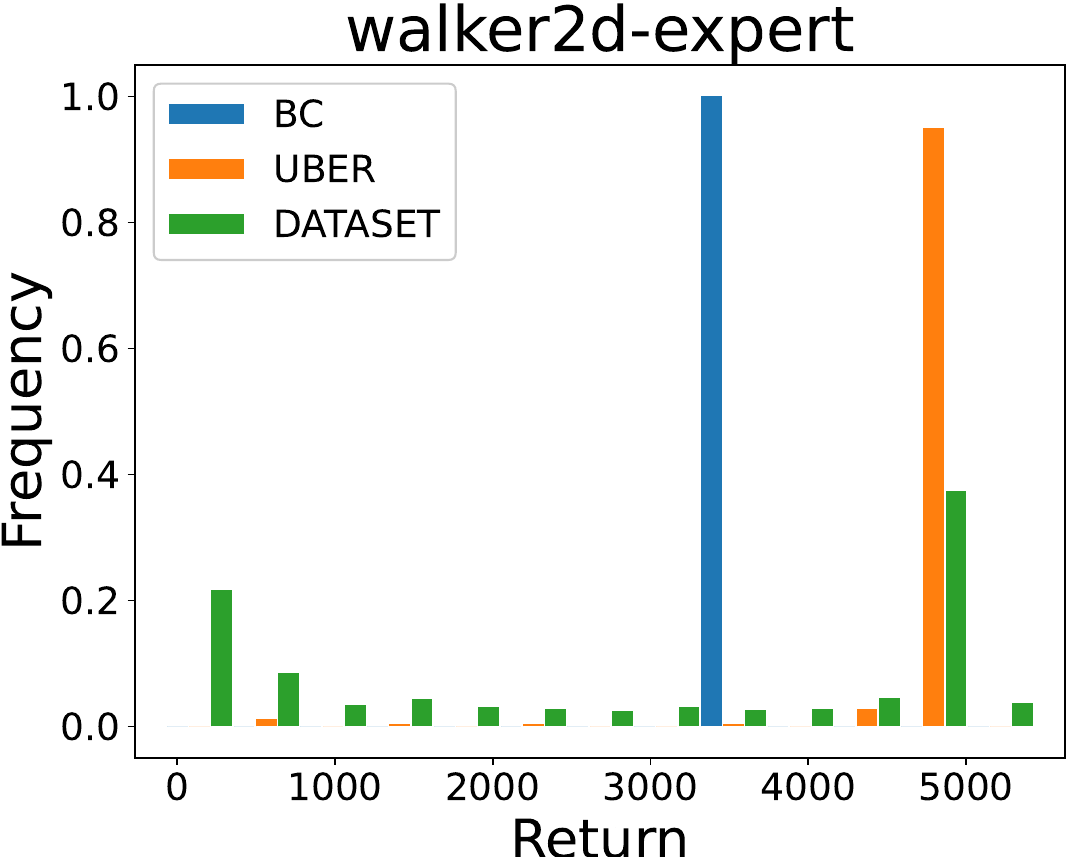}}
    \vspace{1mm}\subfigure{\includegraphics[scale=0.22]{distribution/halfcheetah-medium.pdf}}
    \subfigure{\includegraphics[scale=0.22]{distribution/halfcheetah-medium-expert.pdf}}
    \vspace{1mm}\subfigure{\includegraphics[scale=0.22]{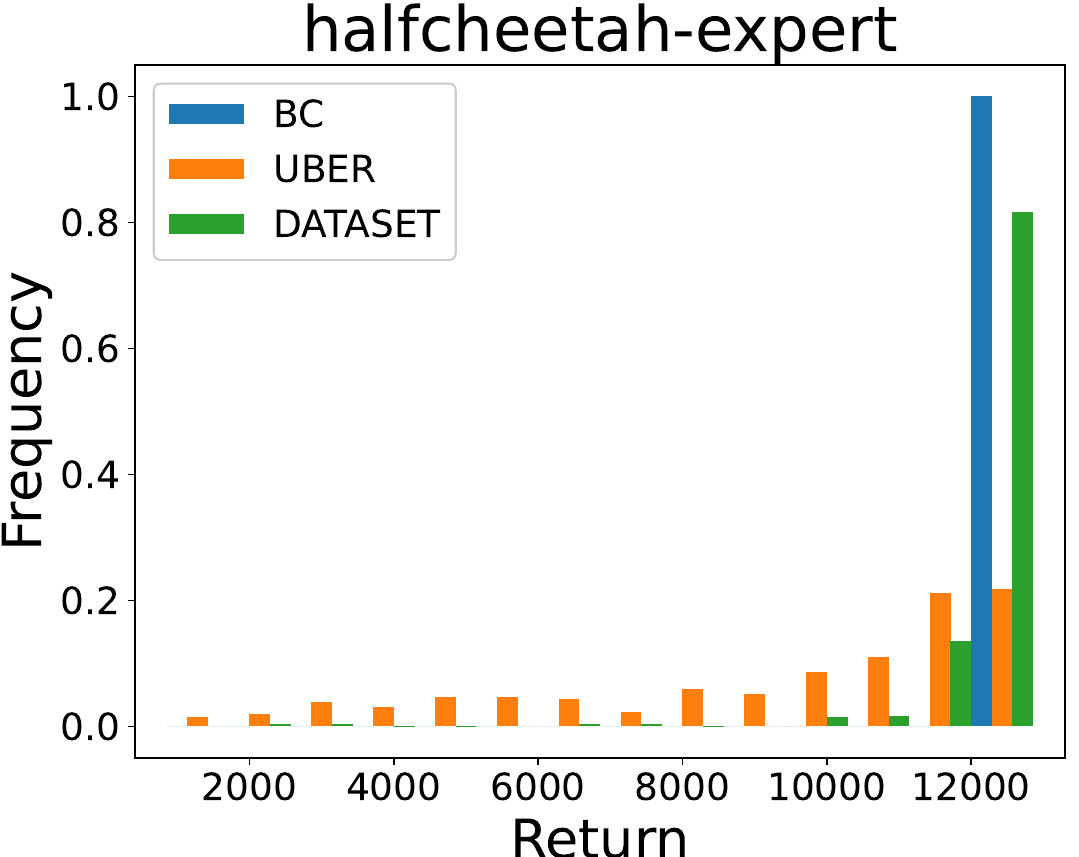}}
    \vspace{1mm}\subfigure{\includegraphics[scale=0.22]{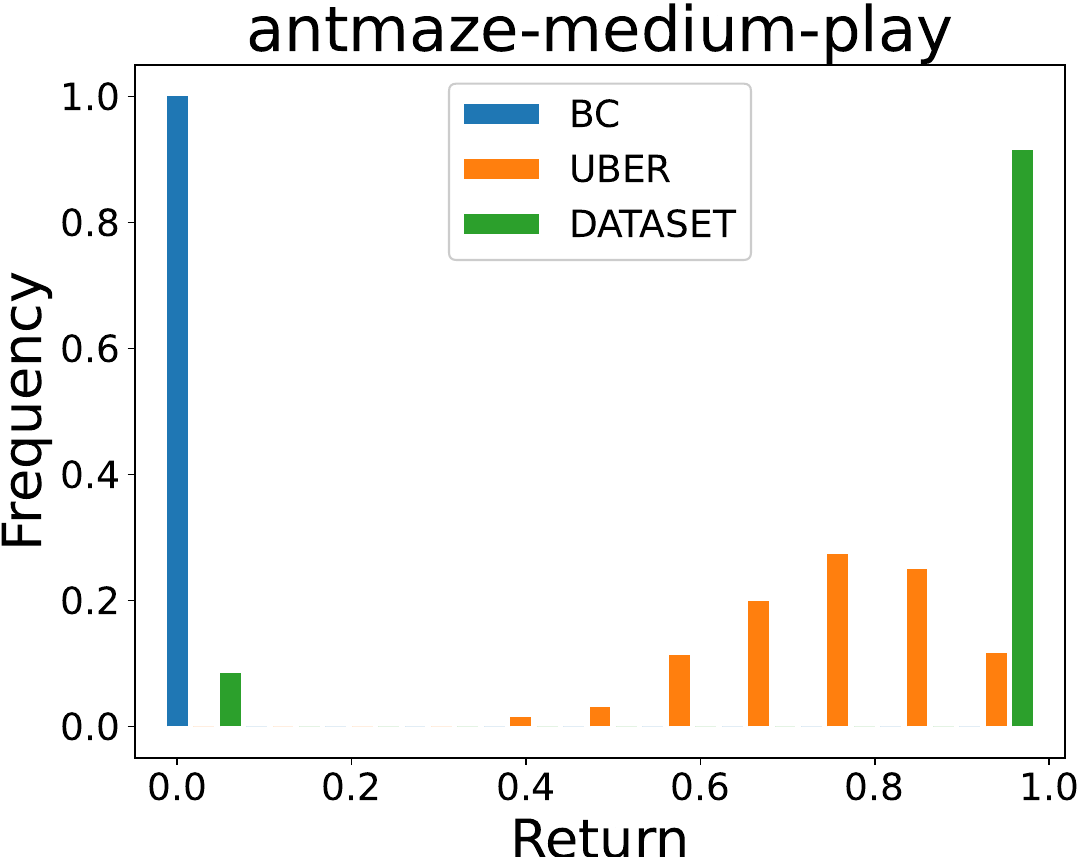}}
    \vspace{1mm}\subfigure{\includegraphics[scale=0.22]{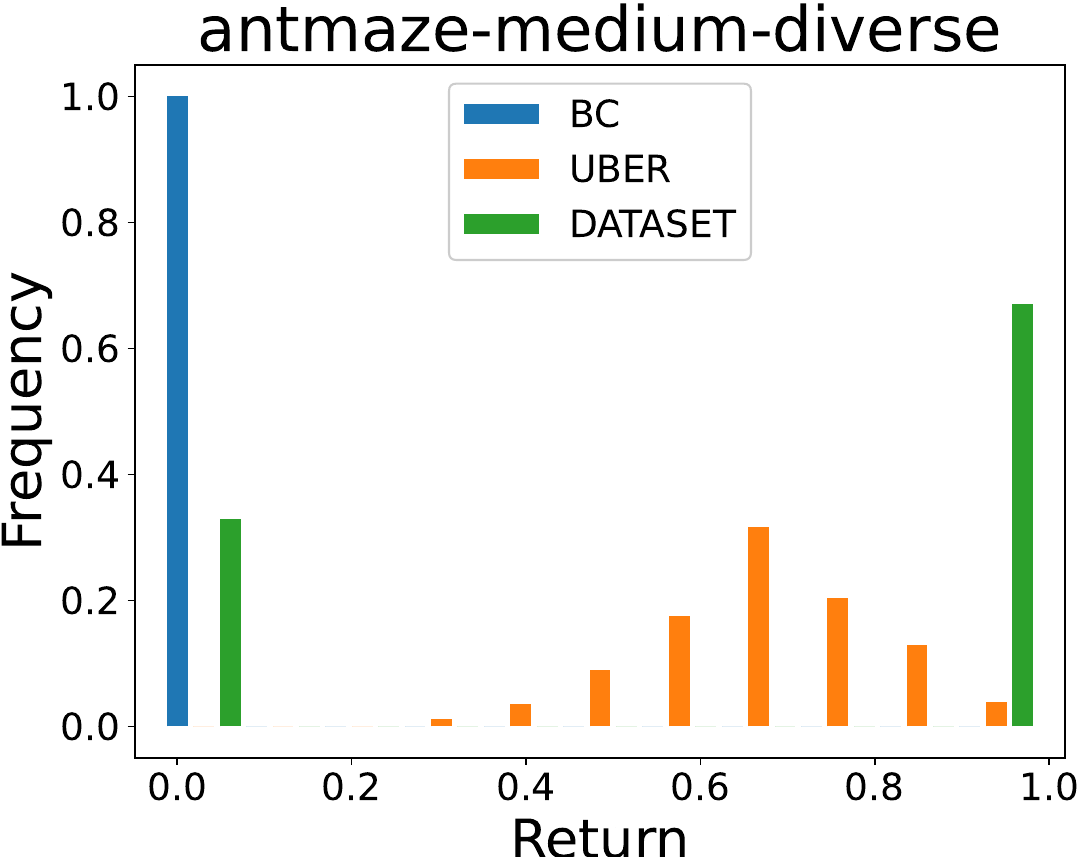}}
    \caption{Distribution of random intent priors, datasets, and behavior cloning policies.
    We restrict the y-axis to a small range to make the distribution clearer.
    }
    \label{fig: appendix random intent sta}
\end{figure}
\paragraph{Learning curve for the Antmaze environment.} Here we provide the detailed learning curve for the Antmaze experiment to demonstrate the efficiency of UBER. We can see from Figure~\ref{fig: antmaze result} that UBER learns a good strategy in hard exploration tasks like \texttt{medium} and \texttt{large}, indicating UBER's strong ability to learn useful behaviors.

\begin{figure}[ht]
    \centering\subfigure{\includegraphics[scale=0.25]{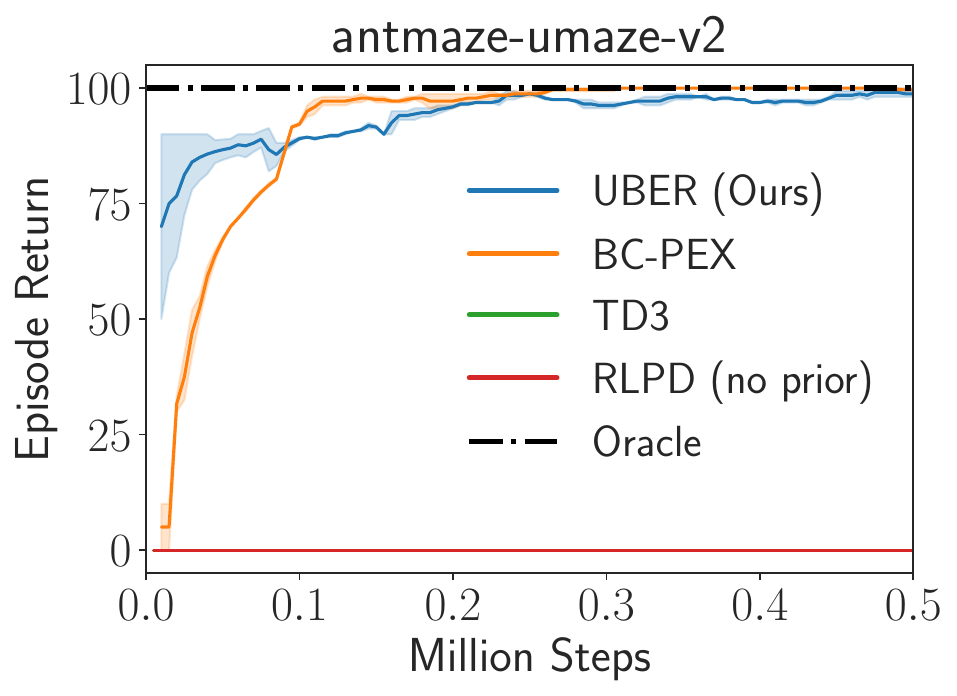}}\subfigure{\includegraphics[scale=0.25]{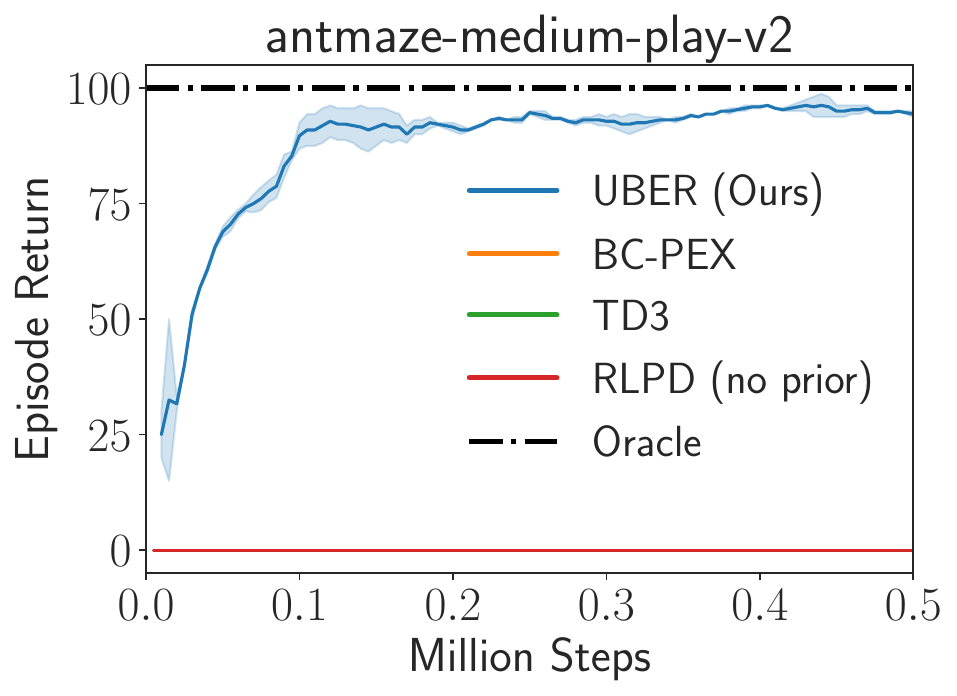}}\subfigure{\includegraphics[scale=0.25]{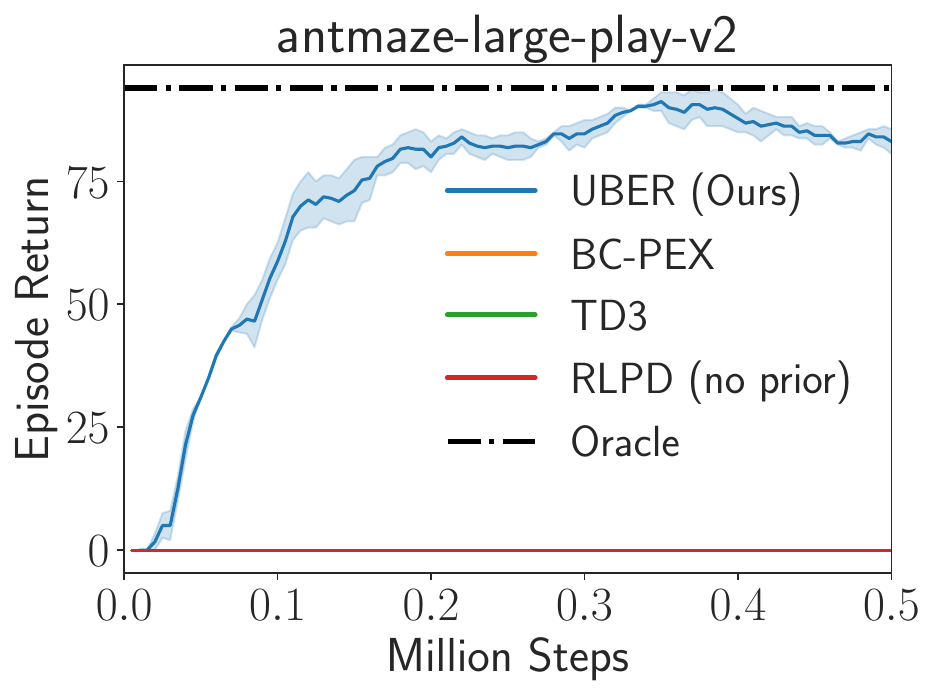}}
    
    \subfigure{\includegraphics[scale=0.25]{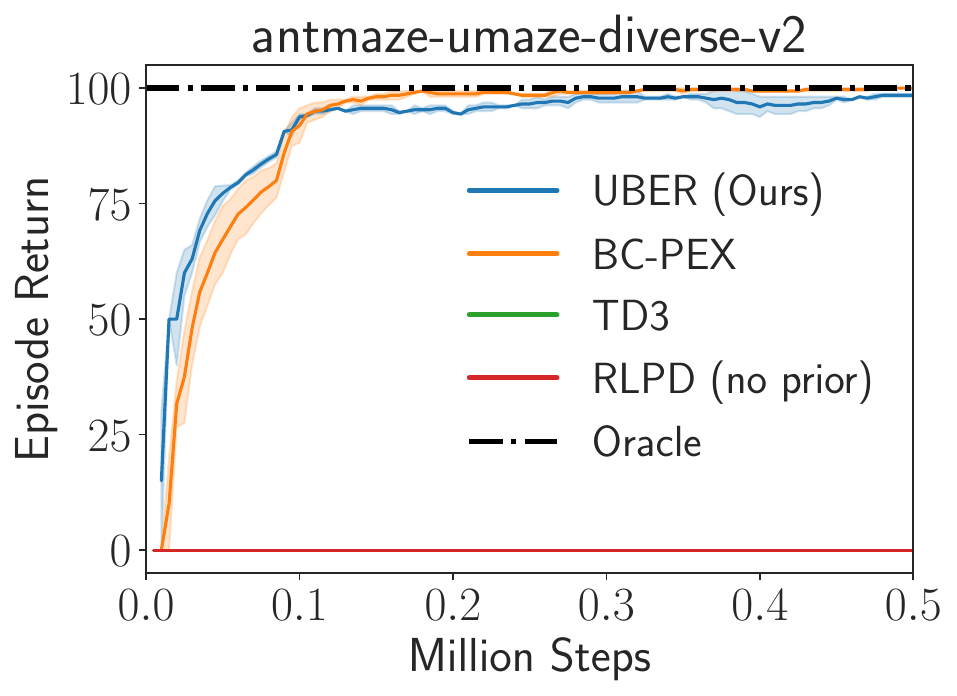}}\subfigure{\includegraphics[scale=0.25]{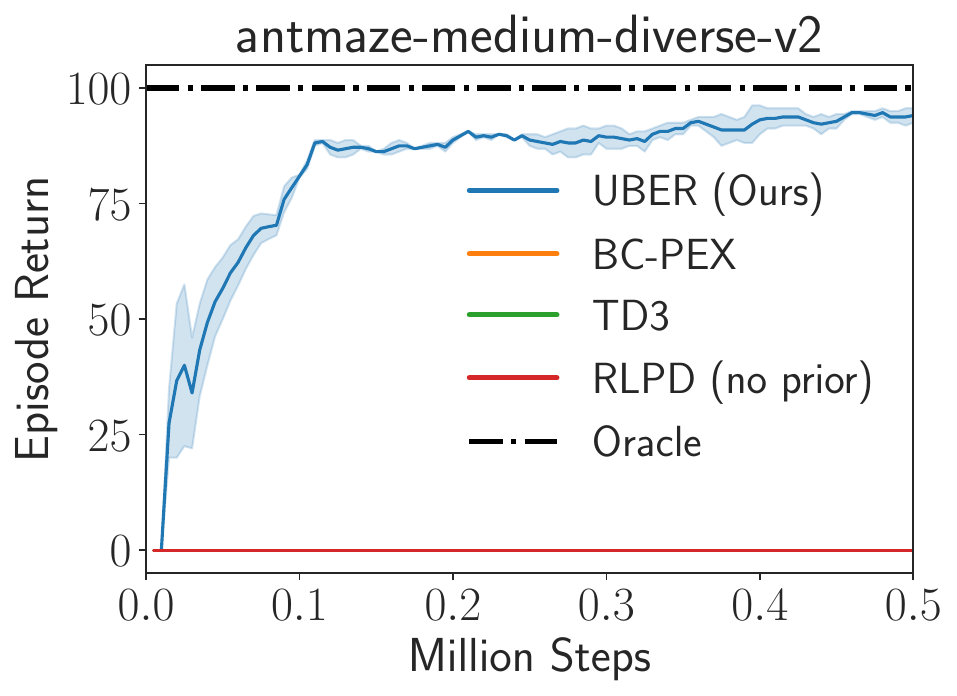}}
    \subfigure{\includegraphics[scale=0.25]{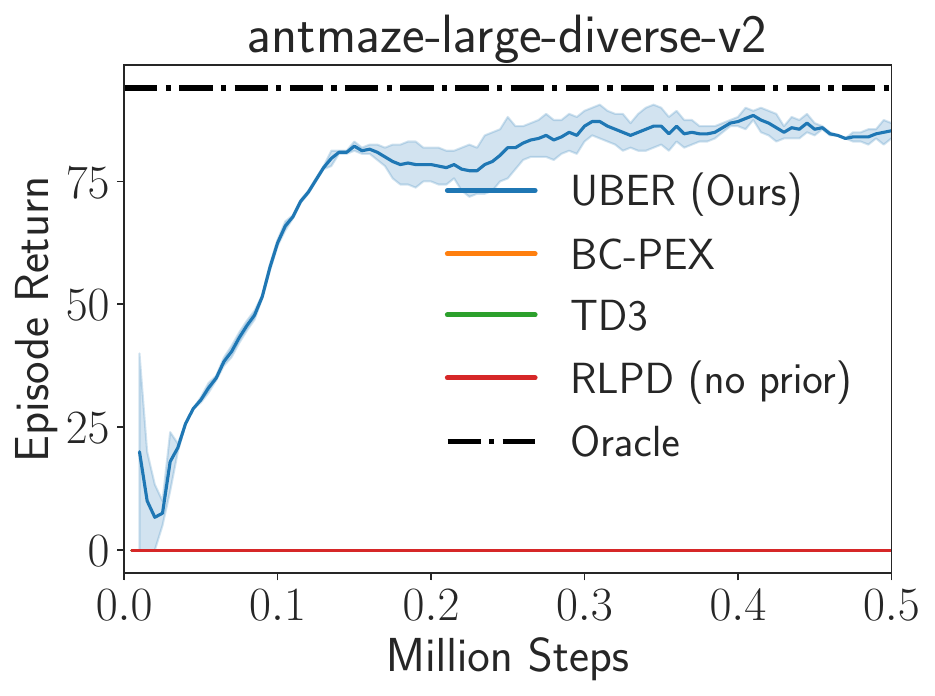}}
    \caption{Comparison between UBER and baselines in the online phase in the Antmaze domain.
    Oracle is the performance of RLPD at 500k step.
    We adopt a normalized score metric averaged with three random seeds.
    }
    \label{fig: antmaze result}
\end{figure}
\paragraph{Ablation study for the online policy reuse module.}
We first replace the policy expansion module with CUP~\citep{zhang2022cup}, UBER~(CUP) to investigate the effect of various online policy reuse modules. 
The experimental results in Figure~\ref{fig: appendix policy reuse ablation} show that UBER+PEX generally outperforms UBER+CUP.
We hypothesize that CUP includes a hard reuse method, which requires the optimized policy to be close to the reused policies during training, 
while PEX is a soft method that only uses the pre-trained behaviors to collect data. This makes PEX more suitable to our setting since UBER may generate undesired  behaviors.

\begin{figure}[ht]
    \centering
    \subfigure{\includegraphics[scale=0.28]{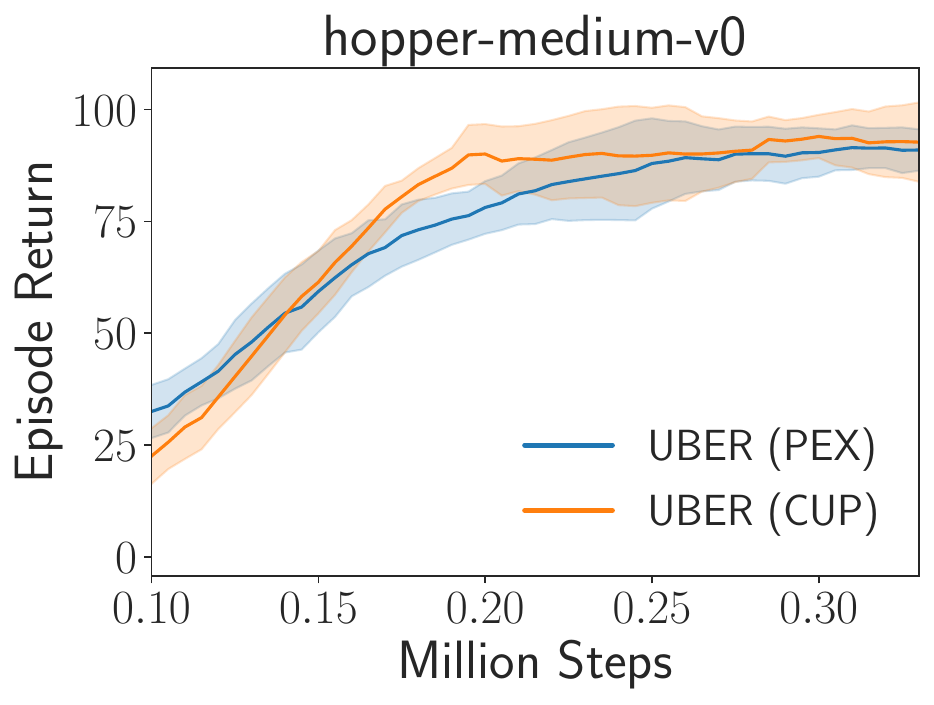}}\subfigure{\includegraphics[scale=0.28]{ablation/ablation_hopper-medium-expert-v0.pdf}}
    \subfigure{\includegraphics[scale=0.28]{ablation/ablation_hopper-expert-v0.pdf}}
    
    \subfigure{\includegraphics[scale=0.28]{ablation/ablation_walker2d-medium-v0.pdf}}
    \subfigure{\includegraphics[scale=0.28]{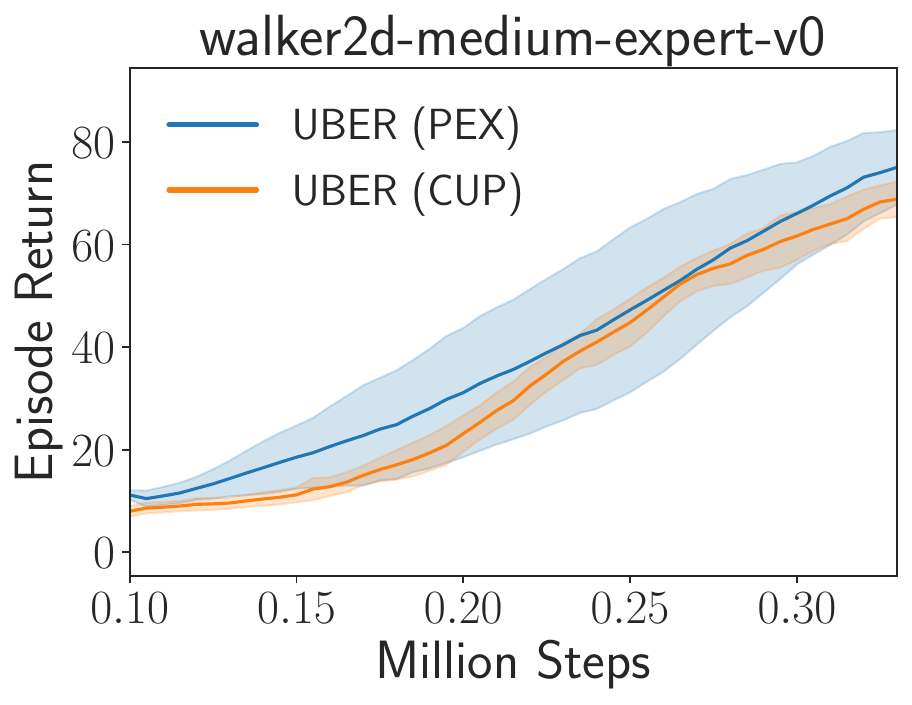}}
    \subfigure{\includegraphics[scale=0.28]{ablation/ablation_walker2d-expert-v0.pdf}}
    
    \subfigure{\includegraphics[scale=0.28]{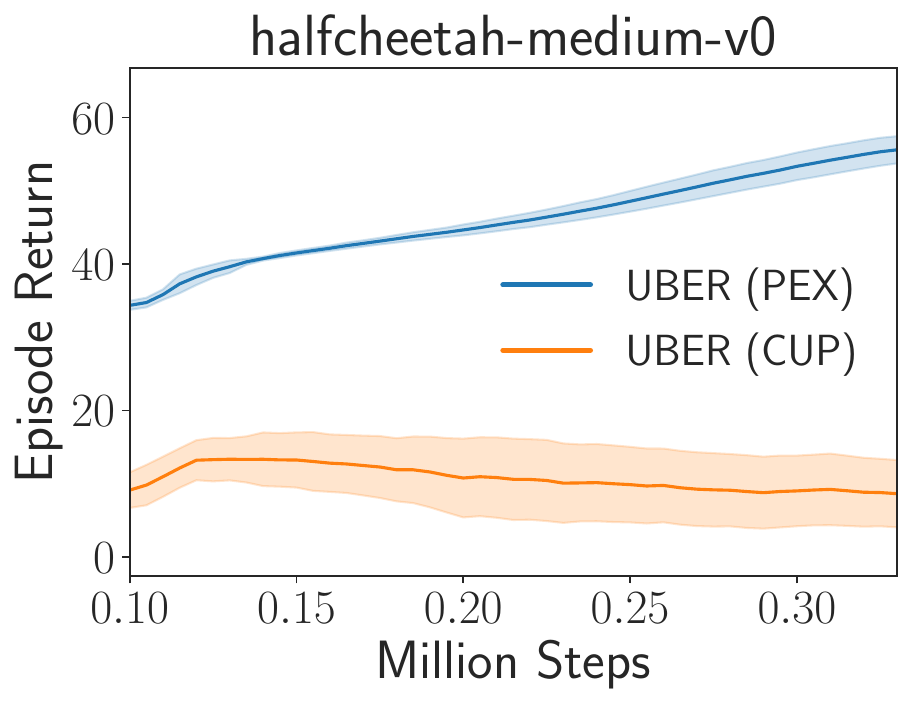}}
    \subfigure{\includegraphics[scale=0.28]{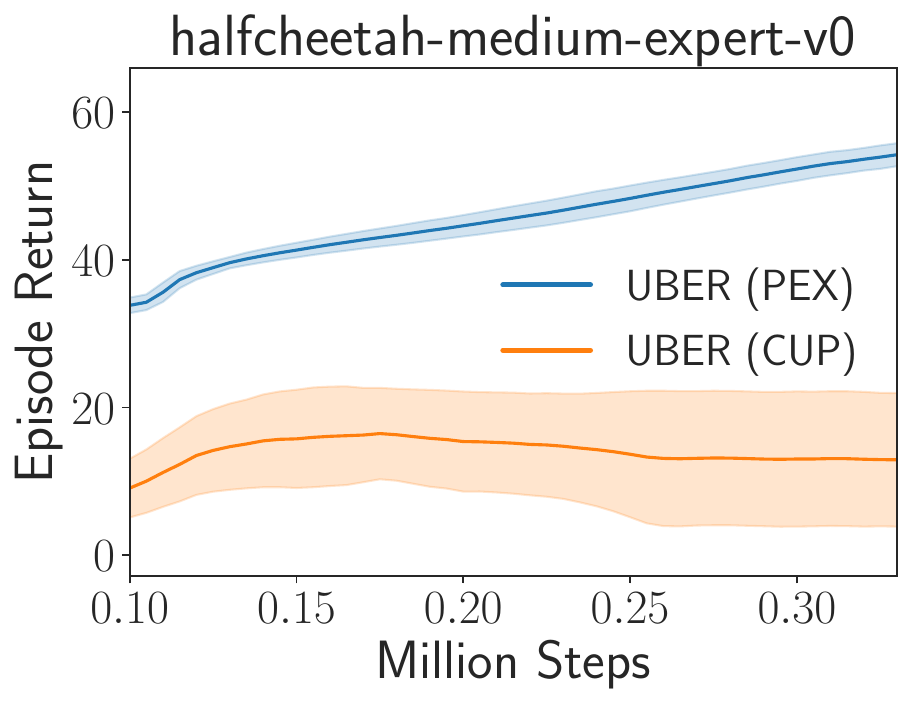}}
    \subfigure{\includegraphics[scale=0.28]{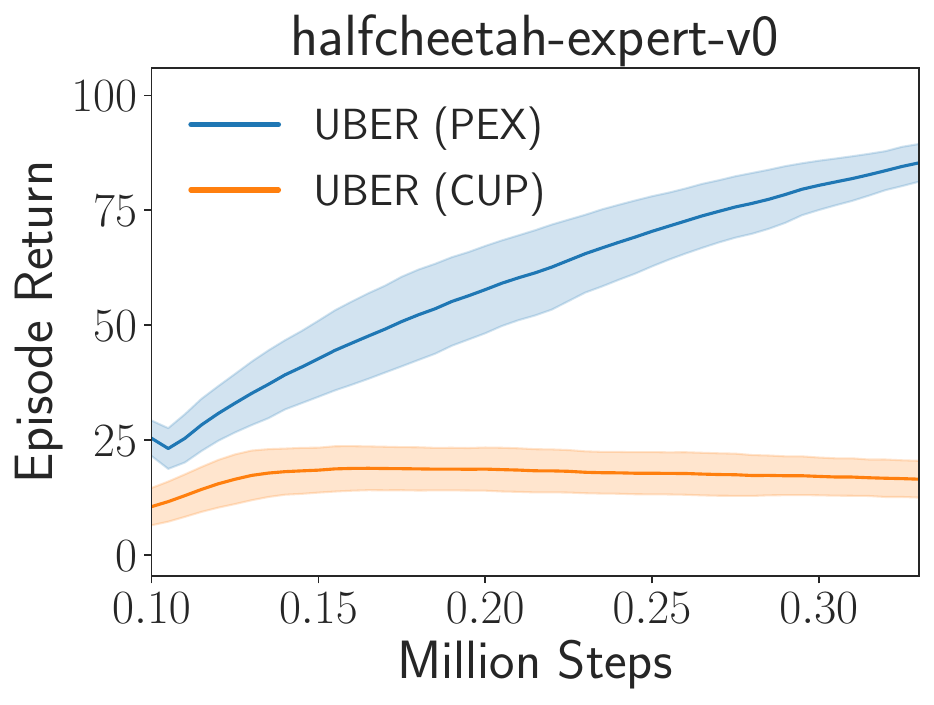}}

    \caption{Complete result for the ablation study for the online policy reuse module. Using policy expansion (PEX) for downstream tasks is better than critic-guided policy reuse (CUP) in general}
    \label{fig: appendix policy reuse ablation}
\end{figure}

\paragraph{Complete comparison with baselines.} Here we provide a complete comparison with several baselines, including OPAL, PARROT and UDS, as shown in Figure~\ref{fig: new baseline result 2}. Our method outperform these baselines due to the ability to extract diverse behaviors.

\begin{figure}[ht]
    \centering
    \subfigure{\includegraphics[scale=0.21]{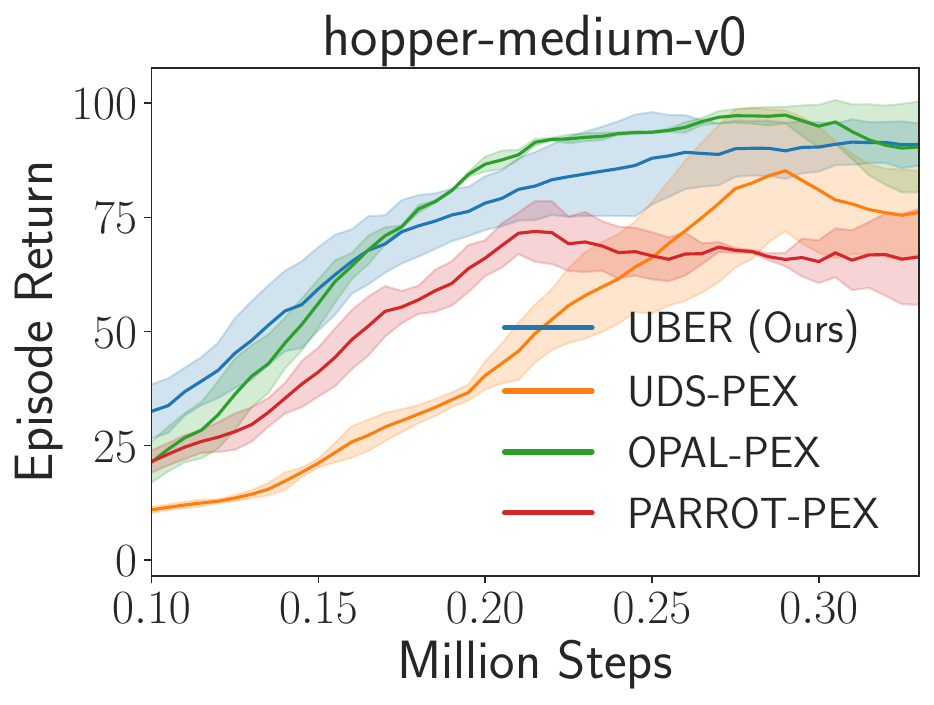}}
    \subfigure{\includegraphics[scale=0.21]{new_baseline/walker2d-medium-v0_newbaseline.pdf}}
    \subfigure{\includegraphics[scale=0.21]{new_baseline/walker2d-medium-expert-v0_newbaseline.pdf}}
    \subfigure{\includegraphics[scale=0.21]{new_baseline/walker2d-expert-v0_newbaseline.pdf}}
    \subfigure{\includegraphics[scale=0.21]{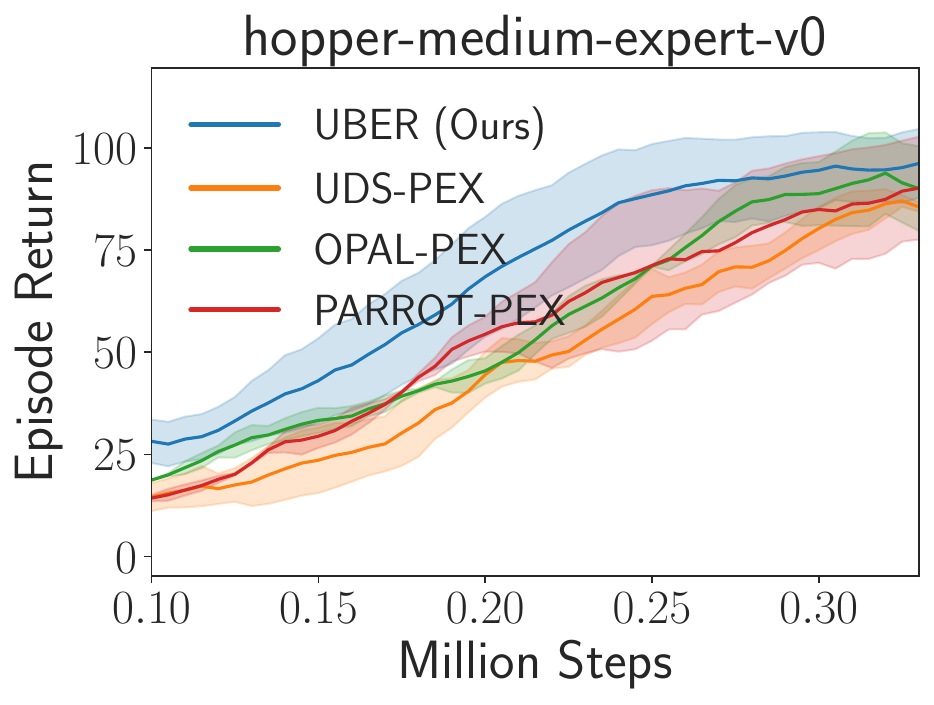}}
    \subfigure{\includegraphics[scale=0.21]{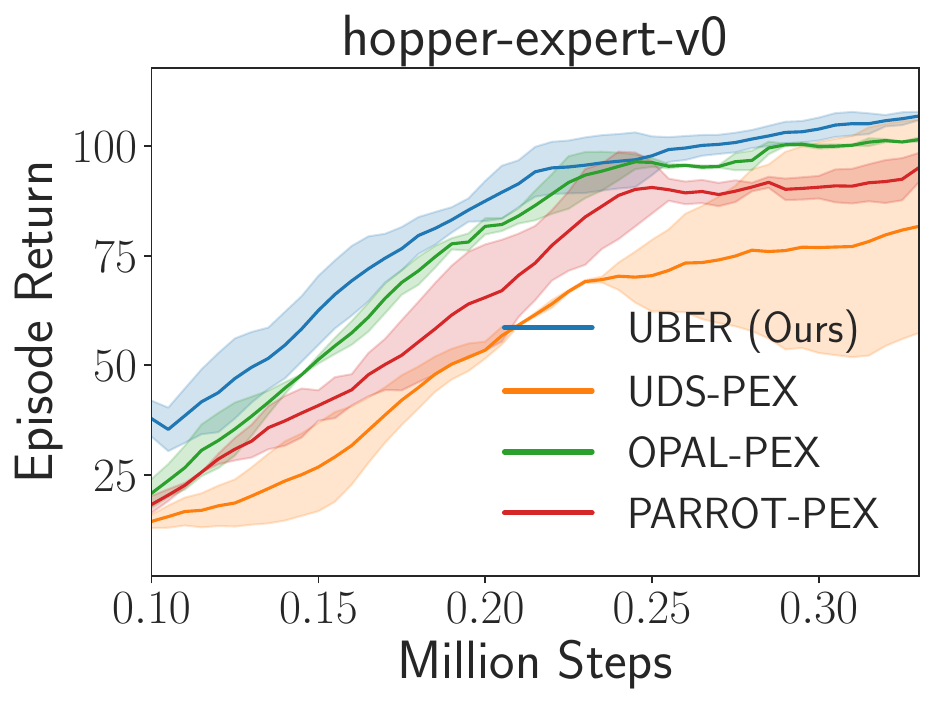}}    
    \subfigure{\includegraphics[scale=0.21]{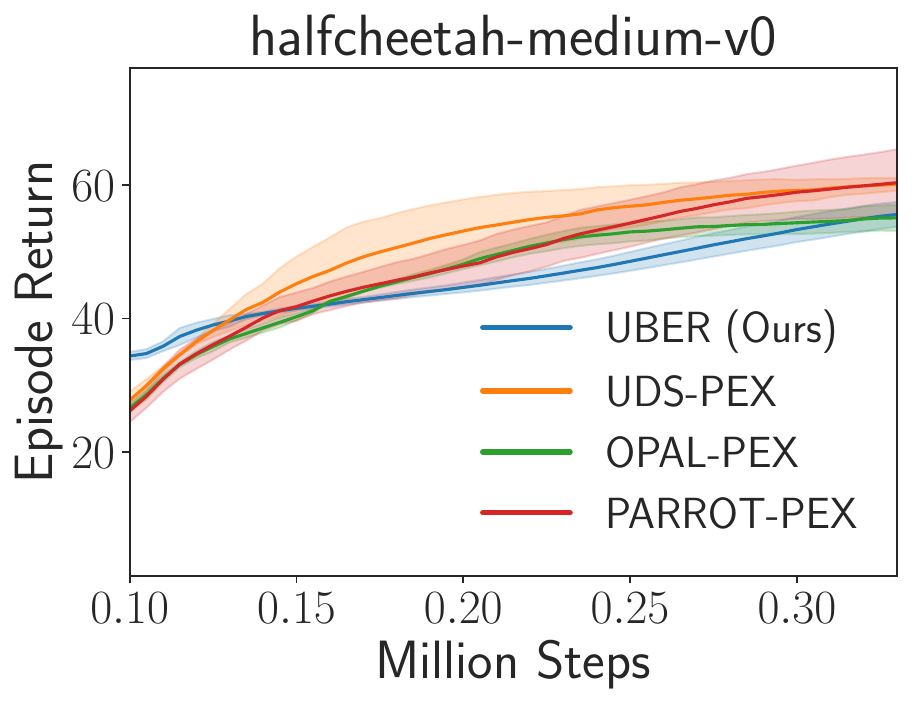}}
    \subfigure{\includegraphics[scale=0.21]{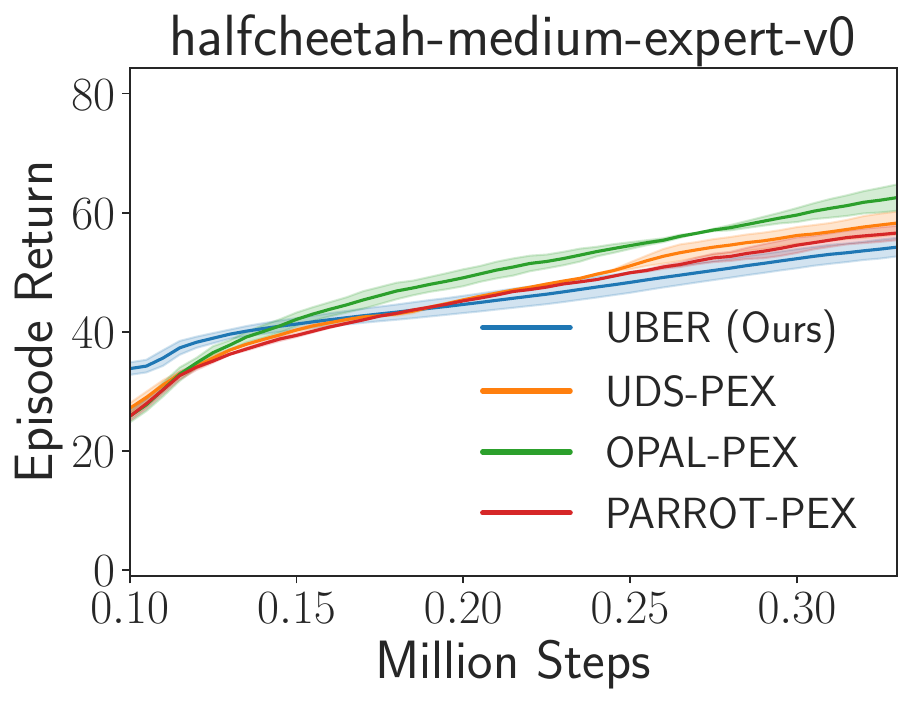}}
    \caption{Complete result for the comparison between UBER and baselines including unsupervised behavior extraction and data-sharing method. We adopt a normalized score metric averaged with five random seeds.}
    \label{fig: new baseline result 2}
\end{figure}


\paragraph{Ablation study for the random intent priors module.}
we conduct an ablation study for the random intent priors module.
We use the average reward in the offline datasets to learn offline policies and then follow the same online policy reuse process in Algorithm~\ref{alg: online rl}. Using average reward serves as a strong baseline since it may generate near-optimal performance as observed in~\citet{shin2023benchmarks}.
We name the average reward-based offline behavior extraction as AVG-PEX.
The experimental results in Figure~\ref{fig: appendix random intent ablation2} show that while average reward-based offline optimization  can extract some useful behaviors and accelerate the downstream tasks, using random intent priors performs better due to the diversity of the behavior set.

\begin{figure}[ht]
    \centering
    \subfigure{\includegraphics[scale=0.28]{avg/avg_hopper-medium-v0.pdf}}
    \subfigure{\includegraphics[scale=0.28]{avg/avg_hopper-medium-expert-v0.pdf}}\subfigure{\includegraphics[scale=0.28]{avg/avg_hopper-expert-v0.pdf}}
    
    \subfigure{\includegraphics[scale=0.28]{avg/avg_walker2d-medium-v0.pdf}}
    \subfigure{\includegraphics[scale=0.28]{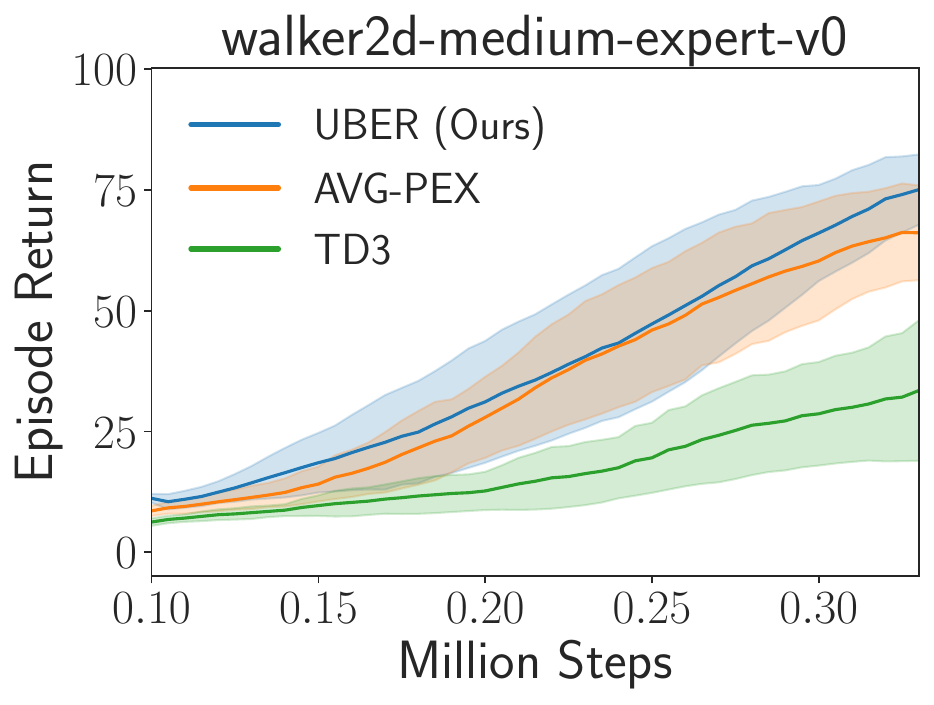}}
    \subfigure{\includegraphics[scale=0.28]{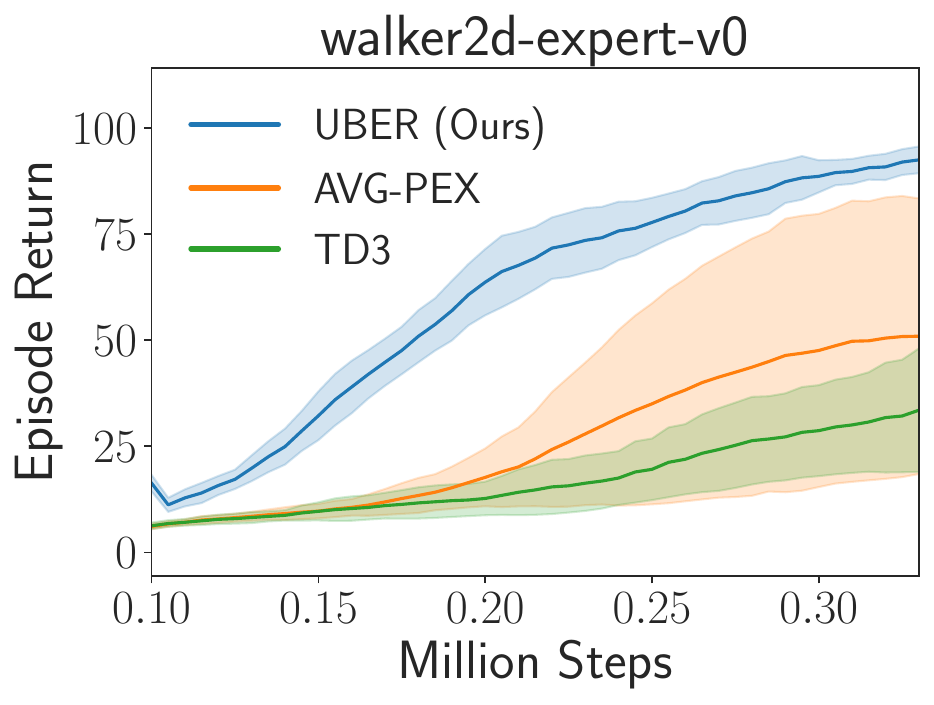}}
    
    \subfigure{\includegraphics[scale=0.28]{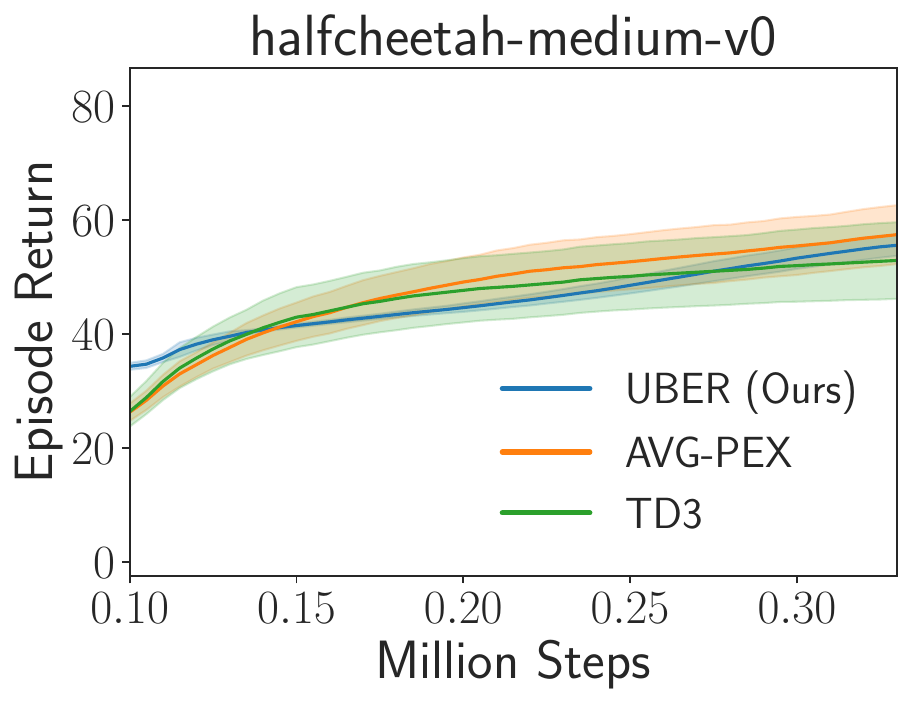}}
    \subfigure{\includegraphics[scale=0.28]{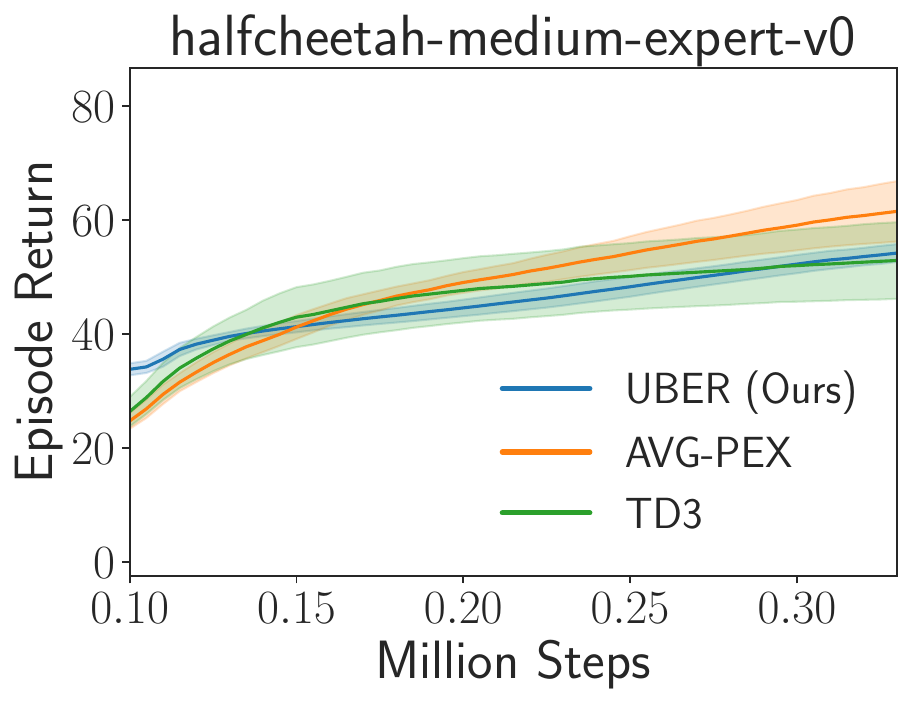}}
    \subfigure{\includegraphics[scale=0.28]{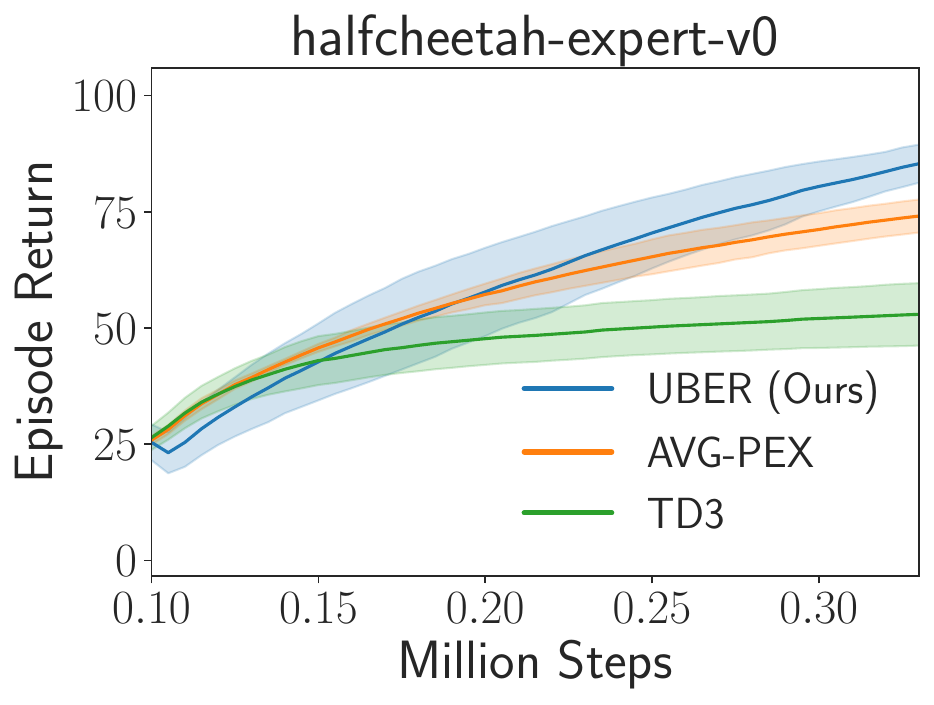}}
    
    \caption{Complete result for the ablation study for the random intent priors module. Using average reward for policies learning can lead to nearly optimal performances on the original task. Nevertheless, our method outperforms such a baseline (AVG-PEX) consistently, showing the importance of behavioral diversity.}
    \label{fig: appendix random intent ablation2}
\end{figure}

\clearpage
\section{Experimental Details}
\label{appendix: exp details}

\paragraph{Experimental Setting.}
For BC-PEX, we first extract behavior from the offline dataset based on the Behavior Cloning, then conduct online policy expansion based on Algorithm~\ref{alg: online rl}.
For Avg-PEX, we first use the average reward for the offline optimization, then conduct online policy expansion based on Algorithm~\ref{alg: online rl}.
For RLPD, we load the offline dataset with the true reward value for the prior data.
For RLPD~(no prior), we do not load offline-dataset into the buffer.

\paragraph{Hyper-parameters.}
For the Mujoco and metaworld tasks, we adopt the TD3+BC and TD3 as the backbone of offline and online algorithms.
For the Antmaze tasks, we adopt the IQL and RLPD as the backbone of offline and online algorithms.
We outline the hyper-parameters used by UBER in Table~\ref{tab: parameters mujoco}, Table~\ref{tab: parameters metaworld} and Table~\ref{tab: parameters antmaze}.

\begin{table}[ht]
    \centering
    \begin{tabular}{ll}
    \toprule
      Hyperparameter   & Value \\
      \midrule
      \hspace{0.3cm} Optimizer & Adam \\
      \hspace{0.3cm} Critic learning rate & 3e-4 \\
      \hspace{0.3cm} Actor learning rate & 3e-4 \\
      \hspace{0.3cm} Mini-batch size & 256 \\
      \hspace{0.3cm} Discount factor & 0.99 \\
      \hspace{0.3cm} Target update rate & 5e-3 \\
      \hspace{0.3cm} Policy noise & 0.2 \\
      \hspace{0.3cm} Policy noise clipping & (-0.5, 0.5) \\
      \hspace{0.3cm} TD3+BC parameter $\alpha$ & 2.5\\
      \midrule
      Architecture & Value \\
      \midrule
      \hspace{0.3cm} Critic hidden dim & 256 \\
      \hspace{0.3cm} Critic hidden layers & 2 \\
      \hspace{0.3cm} Critic activation function & ReLU \\
      \hspace{0.3cm} Actor hidden dim & 256 \\
      \hspace{0.3cm} Actor hidden layers & 2 \\
      \hspace{0.3cm} Actor activation function & ReLU \\
      \hspace{0.3cm} Random Net hidden dim & 256 \\
      \hspace{0.3cm} Random Net hidden layers & 2 \\
      \hspace{0.3cm} Random Net activation function & ReLU \\
      \midrule
      UBER Parameters & Value \\
      \midrule
      \hspace{0.3cm} Random reward dim $n$ & 256 \\
      \hspace{0.3cm} PEX temperature~$\alpha$ & 10 \\
      \bottomrule
    \end{tabular}
    \caption{Hyper-parameters sheet of UBER in Mujoco tasks.}
    \label{tab: parameters mujoco}
\end{table}

\begin{table}[ht]
    \centering
    \begin{tabular}{ll}
    \toprule
      Hyperparameter   & Value \\
      \midrule
      \hspace{0.3cm} Optimizer & Adam \\
      \hspace{0.3cm} Critic learning rate & 3e-4 \\
      \hspace{0.3cm} Actor learning rate & 3e-4 \\
      \hspace{0.3cm} Mini-batch size & 256 \\
      \hspace{0.3cm} Discount factor & 0.99 \\
      \hspace{0.3cm} Target update rate & 5e-3 \\
      \hspace{0.3cm} Policy noise & 0.2 \\
      \hspace{0.3cm} Policy noise clipping & (-0.5, 0.5) \\
      \hspace{0.3cm} TD3+BC parameter $\alpha$ & 2.5\\
      \midrule
      Architecture & Value \\
      \midrule
      \hspace{0.3cm} Critic hidden dim & 256 \\
      \hspace{0.3cm} Critic hidden layers & 2 \\
      \hspace{0.3cm} Critic activation function & ReLU \\
      \hspace{0.3cm} Actor hidden dim & 256 \\
      \hspace{0.3cm} Actor hidden layers & 2 \\
      \hspace{0.3cm} Actor activation function & ReLU \\
      \hspace{0.3cm} Random Net hidden dim & 256 \\
      \hspace{0.3cm} Random Net hidden layers & 2 \\
      \hspace{0.3cm} Random Net activation function & ReLU \\
      \midrule
      UBER Parameters & Value \\
      \midrule
      \hspace{0.3cm} Random reward dim $n$ & 100 \\
      \hspace{0.3cm} PEX temperature~$\alpha$ & 10 \\
      \bottomrule
    \end{tabular}
    \caption{Hyper-parameters sheet of UBER in metaworld tasks.}
    \label{tab: parameters metaworld}
\end{table}

\begin{table}[ht]
    \centering
    \begin{tabular}{ll}
    \toprule
      Hyperparameter   & Value \\
      \midrule
      \hspace{0.3cm} Optimizer & Adam \\
      \hspace{0.3cm} Critic learning rate & 3e-4 \\
      \hspace{0.3cm} Actor learning rate & 3e-4 \\
      \hspace{0.3cm} Mini-batch size & 256 \\
      \hspace{0.3cm} Discount factor & 0.99 \\
      \hspace{0.3cm} Target update rate & 5e-3 \\
      \hspace{0.3cm} IQL parameter~$\tau$ & 0.9 \\
      \hspace{0.3cm} RLPD parameter~$G$ & 20 \\
      \hspace{0.3cm} Ensemble Size & 10 \\
      \midrule
      Architecture & Value \\
      \midrule
      \hspace{0.3cm} Critic hidden dim & 256 \\
      \hspace{0.3cm} Critic hidden layers & 2 \\
      \hspace{0.3cm} Critic activation function & ReLU \\
      \hspace{0.3cm} Actor hidden dim & 256 \\
      \hspace{0.3cm} Actor hidden layers & 2 \\
      \hspace{0.3cm} Actor activation function & ReLU \\
      \hspace{0.3cm} Random Net hidden dim & 256 \\
      \hspace{0.3cm} Random Net hidden layers & 2 \\
      \hspace{0.3cm} Random Net activation function & ReLU \\
      \midrule
      UBER Parameters & Value \\
      \midrule
      \hspace{0.3cm} Random reward dim $n$ & 256 \\
      \hspace{0.3cm} PEX temperature~$\alpha$ & 10 \\
      \bottomrule
    \end{tabular}
    \caption{Hyper-parameters sheet of UBER in Antmaze tasks.}
    \label{tab: parameters antmaze}
\end{table}

\paragraph{Baselines Implementation.}
We adopt the author-provided implementations from GitHub for TD3~\footnote{\url{https://github.com/sfujim/TD3}}, TD3+BC~\footnote{\url{ https://github.com/sfujim/TD3_BC}}, IQL~\footnote{\url{https://github.com/ikostrikov/implicit_q_learning}}, RLPD~\footnote{\url{https://github.com/ikostrikov/rlpd}} and DrQ-v2~\footnote{\url{https://github.com/facebookresearch/drqv2}}.
All experiments are conducted on the same experimental setup, a single GeForce RTX 3090 GPU and an Intel Core i7-6700k CPU at 4.00GHz.


\end{document}